\newtheorem{lemma}{Lemma}
\newtheorem{coro}{Corollary}
\newtheorem{assum}{Assumption}
\newtheorem{theorem}{Theorem}
\theoremstyle{remark}
\newcommand{\R}{\mathbb{R}}
\newcommand{\N}{\mathbb{N}}
\newcommand{\e}{\begin{equation}}
\newcommand{\ee}{\end{equation}}
\newcommand{\en}{\begin{equation*}}
\newcommand{\een}{\end{equation*}}
\newcommand{\eqn}{\begin{eqnarray}}
\newcommand{\eeqn}{\end{eqnarray}}
\newcommand{\bmat}{\begin{bmatrix}}
\newcommand{\emat}{\end{bmatrix}}
\DeclareMathAlphabet\mathbfcal{OMS}{cmsy}{b}{n}
\newcommand{\E}{\operatorname{\mathbb{E}}}
\newcommand{\mb}{\bm}
\newcommand{\mc}{\mathcal}
\newcommand{\bb}{\mathbb}
\newcommand{\vct}[1]{\boldsymbol{#1}}
\newcommand{\mtx}[1]{\boldsymbol{#1}}
\newcommand{\diag}{\operatorname{diag}}
\newcommand{\wh}{\widehat}
\newcommand{\wt}{\widetilde}
\newcommand{\calA}{\mathcal{A}}
\newcommand{\calB}{\mathcal{B}}
\newcommand{\calC}{\mathcal{C}}
\newcommand{\calD}{\mathcal{D}}
\newcommand{\calE}{\mathcal{E}}
\newcommand{\calN}{\mathcal{N}}
\newcommand{\calO}{\mathcal{O}}
\newcommand{\calS}{\mathcal{S}}
 \newcommand{ \Brac }[1]{\left\lbrace #1 \right\rbrace}
\newcommand{\vu}{\vct{u}}
\newcommand{\vv}{\vct{v}}
\newcommand{\vx}{\vct{x}}
\newcommand{\mO}{\mtx{O}}
\newcommand{\mU}{\mtx{U}}
\newcommand{\mV}{\mtx{V}}
\newcommand{\mW}{\mtx{W}}
\newcommand{\mX}{\mtx{X}}
\newcommand{\mOmega}{\mtx{\Omega}}
\newcommand{\mPhi}{\mtx{\Phi}}
\newcommand{\mTheta}{\mtx{\Theta}}
\newlength{\imgwidth}
\newcommand{\twoCol}[2]{\ifthenelse{\boolean{twoColVersion}} {#1} {#2} }
\long\def\comment#1{}
\def\E{\mathop{\rm E\,\!}\nolimits}
\newcommand{\bu}{\boldsymbol{u}}
\newcommand{\bv}{\boldsymbol{v}}
\newcommand{\bx}{\boldsymbol{x}}
\newcommand{\bz}{\boldsymbol{z}}
\newcommand{\bA}{\boldsymbol{A}}
\newcommand{\bI}{\boldsymbol{I}}
\newcommand{\bU}{\boldsymbol{U}}
\newcommand{\bV}{\boldsymbol{V}}
\newcommand{\bW}{\boldsymbol{W}}
\newcommand{\bY}{\boldsymbol{Y}}
\newcommand{\paren}[1]{\left( #1 \right)}
\long\def\red#1{\bgroup\color{red}#1\egroup}
\definecolor{mich-blue}{HTML}{0027CC}
\definecolor{mich-blue-high}{HTML}{0027CC}
\definecolor{red-high}{HTML}{CA2020}
\definecolor{green-high}{HTML}{20A520}
\definecolor{mich-maize}{HTML}{FFCB05}
\definecolor{law-stone}{HTML}{655A52}
\definecolor{burton-beige}{HTML}{9B9A9D}
\definecolor{arch-ivy}{HTML}{7E732F}
 \colorlet{color1}{gray!15}
\title{The Law of Parsimony in Gradient Descent for Learning Deep Linear Networks}
\author[1]{Can Yaras\footnote{The first two authors contributed to this work equally.}}
\author[1]{Peng Wang\protect\CoAuthorMark}
\author[1]{Wei Hu}
\author[2]{Zhihui Zhu}
\author[1]{Laura Balzano}
\author[1]{Qing Qu}
\newcommand\CoAuthorMark{\footnotemark[\arabic{footnote}]}
\affil[1]{Department of Electrical Engineering \& Computer Science, University of Michigan}
\affil[2]{Department of Computer Science \& Engineering, Ohio State University}
\begin{document}

\maketitle

\begin{abstract}
Over the past few years, an extensively studied phenomenon in training deep networks is the implicit bias of gradient descent towards parsimonious solutions. In this work, we investigate this phenomenon by narrowing our focus to deep linear networks. Through our analysis, we reveal a surprising ``law of parsimony'' in the learning dynamics when the data possesses low-dimensional structures. Specifically, we show that the evolution of gradient descent starting from orthogonal initialization only affects a minimal portion of singular vector spaces across all weight matrices. In other words, the learning process happens only within a small invariant subspace of each weight matrix, despite the fact that all weight parameters are updated throughout training. This simplicity in learning dynamics could have significant implications for both efficient training and a better understanding of deep networks. First, the analysis enables us to considerably improve training efficiency by taking advantage of the low-dimensional structure in learning dynamics. We can construct smaller, equivalent deep linear networks without sacrificing the benefits associated with the wider counterparts. Second, it allows us to better understand deep representation learning by elucidating the linear progressive separation and concentration of representations from shallow to deep layers. We also conduct numerical experiments to support our theoretical results. The code for our experiments can be found at \url{https://github.com/cjyaras/lawofparsimony}.
\end{abstract}

\section{Introduction}\label{sec:intro}
In recent years, deep learning has demonstrated remarkable success across a wide range of applications in engineering and science \cite{lecun2015deep}. 
Numerous studies have shown that the effectiveness of deep learning is partially due to the implicit bias of its learning dynamics, which favors some particular solutions that generalize exceptionally well without overfitting in the over-parameterized setting \cite{neyshabur2017implicit,nakkiran2021deep,belkin2019reconciling,huh2023simplicitybias}. To gain insight into the implicit bias of deep networks, a line of recent work has shown that gradient descent (GD) tends to learn simple functions \cite{shah2020pitfalls,valle-perez2018deep,gunasekar2018implicit,ji2018gradient,kunin2022asymmetric}. For instance, some studies have shown that gradient descent is biased towards max-margin solutions in linear networks trained for binary classification via separable data \cite{soudry2018implicit,kunin2022asymmetric}.  
In addition to the simplicity bias, another line of work showed that deep networks trained by GD exhibit a bias towards low-rank solutions \cite{huh2023simplicitybias,gunasekar2017implicit}. The works \cite{gidel2019implicit,arora2019implicit} demonstrated that adding depth to a matrix factorization enhances an implicit tendency towards low-rank solutions, leading to more accurate recovery. 


Despite the abundant empirical evidence in practical nonlinear networks, most theoretical results are developed based on over-parameterized linear models \cite{gunasekar2018implicit,soudry2018implicit,huh2023simplicitybias,arora2019implicit}. 
Notably, deep linear networks (DLNs), defined by multiple hidden layers with identity activations between layers, have been widely used as prototypes of practical deep networks for studying their nonlinear learning dynamics \cite{saxe2013exact,saxe2019mathematical,arora2018optimization,gidel2019implicit}.  Moreover, despite its simplicity, some properties of DLNs resemble those of their nonlinear counterparts. For example, for both linear and nonlinear networks, the work \cite{huh2023simplicitybias} empirically showed that the low-rank bias exists at both initialization and after training and is resilient to the choice of hyper-parameters and learning methods. The work \cite{saxe2019mathematical} showed that a DLN exhibits a striking, hierarchical progressive differentiation of structures in its internal hidden representations, which is similar to its nonlinear counterparts. Additionally, \cite{bell2019blind,de2020deep} demonstrated the practical applications of DLNs.
\begin{figure}
\begin{minipage}{.49\textwidth}
\begin{center}
\vspace{-1.5em}
\includegraphics[width=1\linewidth]{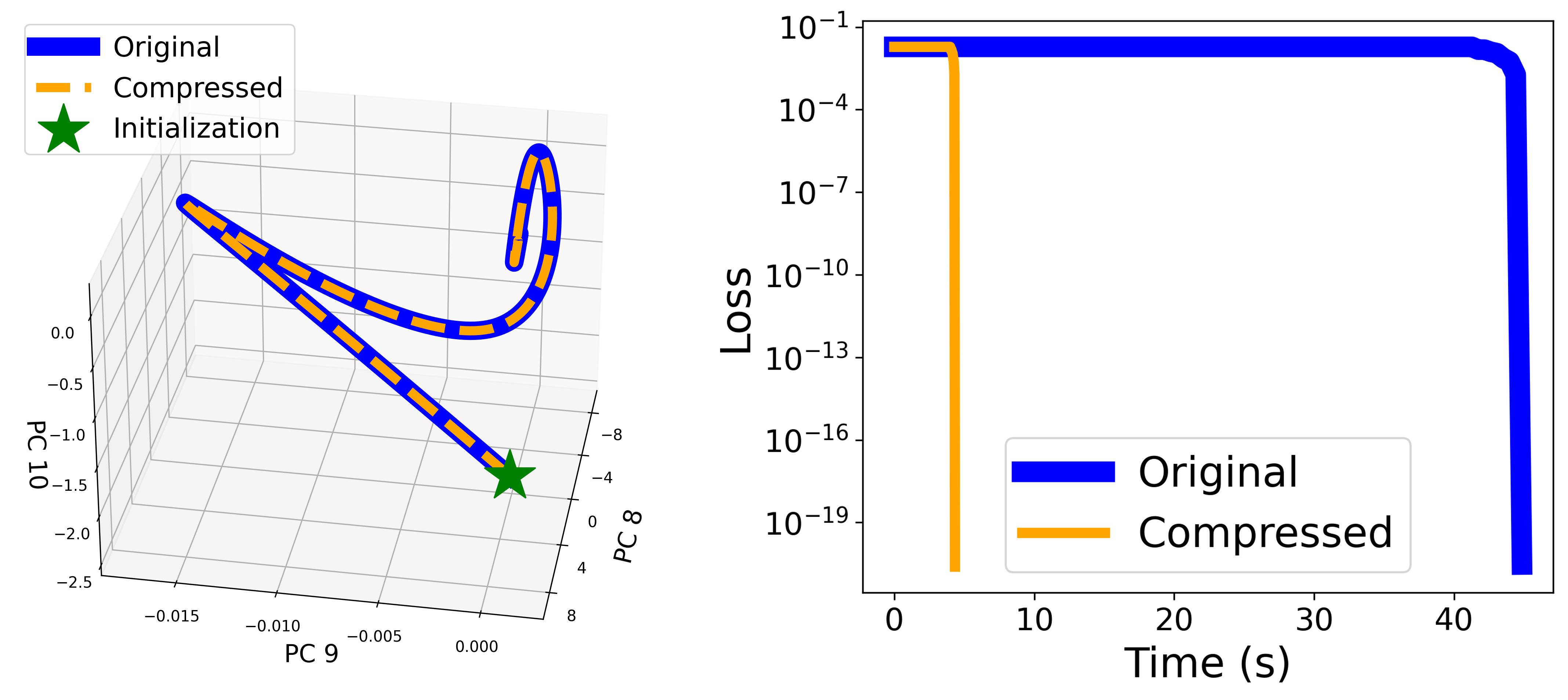}
\end{center}
\vspace{-1.5em}
\caption{\textbf{Efficient training of deep linear networks.} \textit{Left}: Principal components of end-to-end GD trajectories for overparameterized DLN and its equivalent compressed network. \textit{Right}: Training loss vs. wall-time comparison.}
\label{fig:traj_loss}
\end{minipage}
\hfill 
\begin{minipage}{.49\textwidth}
\vspace{-2.7em}
\begin{center}
    \includegraphics[width=1\linewidth]{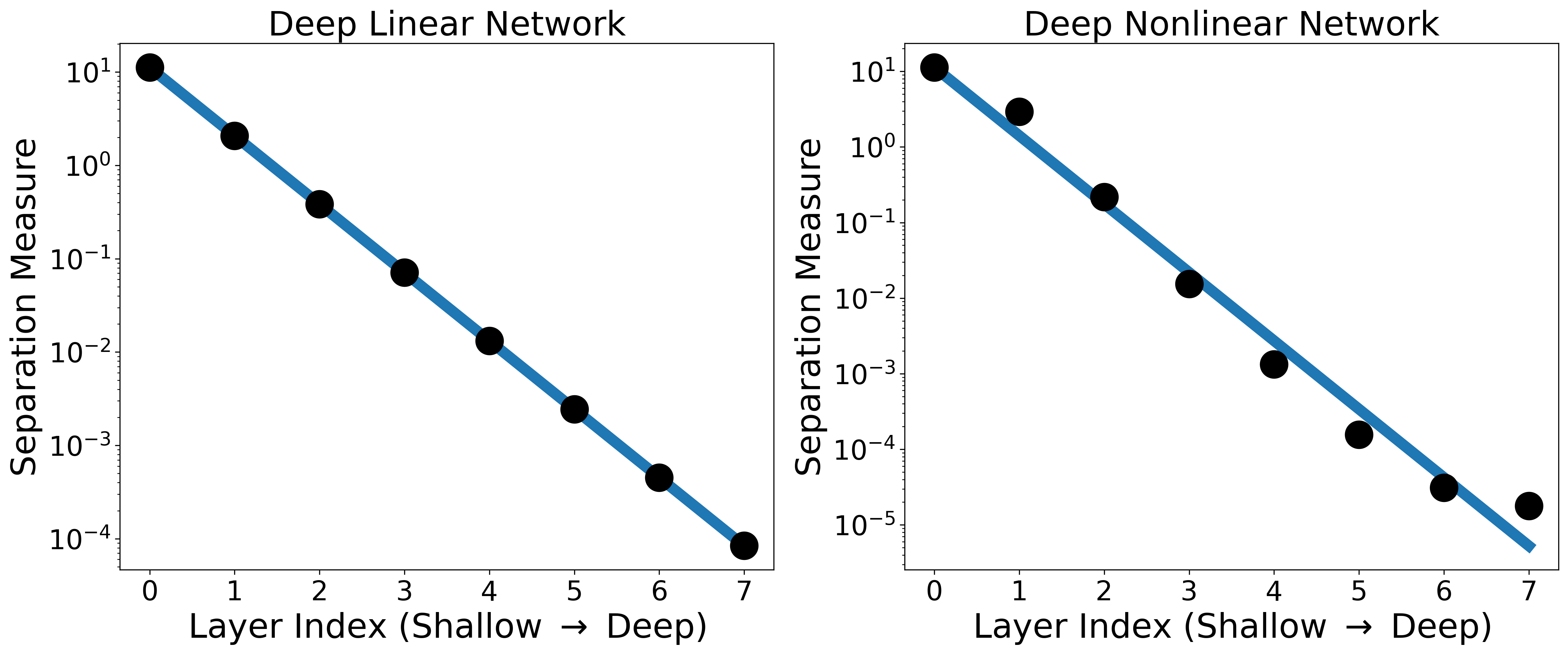}
\end{center}
\vspace{-0.9em}
\caption{\textbf{Progressive collapse with linear decay on deep \emph{linear} and \emph{nonlinear} networks.} The $x$-axis denotes the layer index and the $y$-axis denotes the separation measure in \eqref{eq:D measure}.}
\label{fig:progressive NC}
\end{minipage}
\vspace{-0.15in}
\end{figure}


\vspace{-0.1in}
\paragraph{Contributions.} In this work, we investigate a particular implicit bias of GD during the training of DLNs. When the cross-correlation matrix of the training data exhibits a low-dimensional structure, which manifests as either a low-rank or wide matrix in our context,
we show that the dynamics of GD tend towards parsimonious solutions. 
By examining the evolution of GD started from orthogonal initialization, we unveil a surprising, yet common phenomenon:
\begin{center}
    \emph{The learning process takes place only within a \textbf{minimal invariant subspace} of each weight matrix, while the remaining singular subspaces stay \textbf{unaffected} throughout training.}
\end{center}
Here, ``unaffected'' means that the remaining singular subspaces are not influenced or altered by the GD updates throughout the entire training process.
Notably, this phenomenon, which we term the ``law of parsimony'', persists despite the fact that GD updates all parameters of the weight matrices. When we use initialization of small scale, our work sheds new light on the implicit low-rank bias of the GD trajectory, explicitly explaining why deeper networks favor low-rank solutions throughout the entire training process \cite{huh2023simplicitybias}. Moreover, we demonstrate that such a phenomenon could have far-reaching implications for both understanding and improving the training efficiency of DLNs, which we highlight below.  
\begin{itemize}[leftmargin=*]
    \item \textbf{Dramatically more efficient training by constructing ``equivalent'' smaller networks.} As illustrated in \Cref{fig:traj_loss}, since learning only happens within a small \emph{invariant} subspace of the weights, we can construct and train significantly smaller DLNs that share the same learning dynamics as their wider counterparts. As such, we can significantly reduce the computational complexity of training deep networks without sacrificing the benefits of the associated wider networks, achieving the best of both worlds. We experimentally demonstrate such benefits on applications like deep matrix completion \cite{arora2019implicit}, showing that we can obtain the same sampling complexity as the original network while significantly improving the optimization efficiency. We believe such a finding could also have broad applications in training and fine-tuning practical deep networks \cite{hu2022lora}.
    \item \textbf{New theoretical insights into the progressive data separation in deep representation learning.} We show that the law of parsimony plays an important role in deciphering representation learning in the context of multi-class classification problems. Recent works \cite{papyan2020traces,he2022law,li2022principled,rangamani2023feature,xie2022hidden} have experimentally shown that the features across layers of a deep network exhibit a certain law of data separation that manifests in modern deep architectures during the terminal phases of training. Specifically, each layer of a trained network roughly improves a certain measure of data separation by an \textit{equal} multiplicative factor \cite{he2022law}, which is shown in \Cref{fig:progressive NC}. In this work, we theoretically investigate this phenomenon based on DLNs with orthogonal data. Thanks to the ``law of parsimony'' in the GD trajectory, we can precisely characterize the \emph{linear} progressive separation and concentration of representations from shallow to deep layers, potentially shedding new light on understanding the phenomenon in deep nonlinear networks.
\end{itemize}

\paragraph{Notations and Organization.}  
Let $\R^n$ be the $n$-dimensional Euclidean space and $\|\cdot\|$ be the Euclidean norm. 
Given any $n \in \mathbb{N}$, We use $\mb I_n \in \bb R^{n \times n} $ to denote an identity matrix of size $n$, and $\mb 1_n \in \bb R^n $ denote an all one vector of length $n$. Given any $L \in \mathbb{N}$, we use $[L]$ to denote the index set $\{1,\cdots,L\}$.  Let $\mathcal{O}^{m\times n} = \{\bm{X} \in \R^{m\times n}: \bm{X}^\top \bm{X} = \bm{I}_n\}$ denote the set of all $m\times n$ orthogonal matrices and $\calO^n$ the set of all $n \times n$ orthogonal matrices. 

The rest of the paper is organized as follows. In \Cref{sec:problem}, we introduce the basic problem setup. We present our main theoretical result in \Cref{sec:results}. The applications in deep matrix factorization and progressive collapse are demonstrated in \Cref{sec:low-rank} and \Cref{sec:separation}, respectively. 


\section{Problem Formulation}\label{sec:problem}
\vspace{-0.05in}
\paragraph{Basic Setup of Deep Linear Networks.}
Suppose that we have $N$ training samples $\{(\bm{x}_i,\bm{y}_i)\}_{i = 1}^N \subset \R^{d_x} \times \R^{d_y}$. Let $\mb X = \left[\mb x_1\ \mb x_2\ \dots\ \mb x_N
   \right] \in \bb R^{d_x \times N} $ and $\mb Y = \left[
\mb y_1\ \mb y_2\ \dots\ \mb y_N\right] \in \bb R^{d_y \times N}$ and define $\mb Y \mb X^\top$ to be the cross-correlation matrix. The goal of training a deep network is to learn a parameterized, hierarchical function $f: \bb R^{d_x} \mapsto \bb R^{d_y}$ that maps an input $\mb x_i \in \bb R^{d_x}$ to its corresponding label $\mb y_i \in \bb R^{d_y}$ for all $1\leq i \leq N$. In this work, we consider an $L$-layer ($L \geq 2$) linear network $ f_{\mb \Theta}(\cdot): \R^{d_x} \rightarrow \R^{d_y}$, parameterized by $\bm \Theta = \{\bW_l\}_{l=1}^L$ with input $\bm x \in \R^{d_x}$, i.e.,
\begin{equation}\label{eq:DLN}
    f_{\mb \Theta} (\vx) \;:=\; \mW_L \cdots \mW_1 \vx \;=\; \mb W_{L:1}\mb x ,
\end{equation}
where $\bm W_1 \in \R^{d_1 \times d_x}$, $\bm W_l \in \R^{d_{l} \times d_{l-1}}\;(l=2,\dots,L-1)$, and $\bm W_L \in \R^{d_y \times d_{L-1}}$ are weight matrices. For convenience, throughout the paper we adopt the abbreviations $\mW_{j:i} = \mW_j \cdots \mW_i$ and $\mW_{j:i}^\top = \mW_i^\top \cdots \mW_j^\top$ for $j \geq i$, where both are identity if $j < i$. 

To learn the network parameters $\bm \Theta $, we consider minimizing the $\ell_2$ loss on the training data $\{(\bm{x}_i,\bm{y}_i)\}_{i = 1}^N \subseteq \R^{d_x} \times \R^{d_y}$ as
\begin{align}\label{eq:obj}
    \min_{\bm \Theta}\ \ell (\bm \Theta) = \frac{1}{2}\sum_{i=1}^N \left\|f_{\bm \Theta}(\bm{x}_i) - \bm{y}_i \right\|_F^2  =  \frac{1}{2} \left\|{\bm W}_{L:1}\bm{X} - \bm{Y}\right\|_F^2.
\end{align}

\vspace{-0.1in} 
\paragraph{Training DLNs via GD.}

As the network is often over-parameterized, a common approach to enforce implicit regularization in solving the problem is to utilize GD starting from small initialization \cite{arora2019implicit,gidel2019implicit}. Here, we use $\mb W_l(t)$ to denote the weight of the $l$-th layer at the $t$-th iteration.
\begin{itemize}[leftmargin=*]
\vspace{-0.05in} 
    \item \textbf{Orthogonal initialization.} We initialize the weight matrices $\bW_l(0)$ for all $l \in [L]$ using $\varepsilon$-scaled orthogonal matrices for some $\varepsilon > 0$, i.e., 
    \begin{align}\label{eq:init}
        \mb W_l^\top(0)\mb W_l(0) = \varepsilon^2\mb I \quad \mbox{or} \quad \mb W_l(0)\mb W_l^\top(0) = \varepsilon^2\mb I, \quad \forall l \in [L],
    \end{align}
    which depends on the size of $\mW_l$. It is worth noting that orthogonal weight initialization is a commonly employed technique in neural network training due to its ability to speed up the convergence of GD \cite{chen2018dynamical,pennington2018emergence,saxe2013exact,xiao2018dynamical,hu2020provable}. 
    \item \textbf{Learning dynamics of GD.} We update all weights via GD for $t = 1,2,\dots$ as 
    \begin{align*}
        \bm{W}_l(t) = (1-\eta\lambda) \bm{W}_l(t-1) - \eta \nabla_{\mW_l} \ell(\mTheta(t-1)),\;\; \forall\; l \in [L],
    \end{align*}
    where $\lambda \ge 0$ is an optional weight decay parameter and $\eta>0$ is the learning rate. Substituting the explicit form of the gradient $\nabla_{\mW_l} \ell(\mTheta)$ of \eqref{eq:obj} into the above equation, we obtain the analytical form of GD as
    \begin{align}\label{eq:gd}
         \bm{W}_l(t) = (1-\eta\lambda) \bm{W}_l(t-1) -  \eta \bW_{L:l+1}^\top (t-1){\bm \Gamma}(t-1)\bW_{l-1:1}^\top (t-1),
    \end{align}
    where we denote ${\bm \Gamma}(t) = \paren{\bW_{L:1}(t)\bm{X} - \bm{Y}}\bm{X}^\top$ for simplicity. 
\end{itemize}

\section{A Law of Parsimony in Gradient Descent of DLNs}\label{sec:results}
\vspace{-0.1in}
Before stating our main result, we make the following assumption for ease of exposition. %

\begin{assum}\label{AS:1}
The weight matrices are square except the last layer, i.e., $d_1 = d_2 = \cdots = d_{L-1} = d$ for some $d \in \N_+$. In particular, we have $d_x = d$. Also, the input data is {\em whitened} in the sense that $\bm{X}\bm{X}^\top = \bm{I}_{d_x}$.\footnote {For any full rank $\mX \in \R^{d_x \times N}$ with $N \geq d_x$, whitened data can always be obtained with a data pre-processing step such as preconditioning.}
\end{assum}

\noindent Note that the above assumptions can be relaxed to give very similar results to the ones we prove here.
For example, Assumption \ref{AS:1} can be potentially relaxed to $d_{L-1} \geq d_{L-2} \geq \cdots \geq d_1$. Moreover, empirical evidence suggests that our main results should approximately hold for any well-conditioned $\mX$, not necessarily restricted to whitened $\mb X$ -- we leave this study for future work. Based on the assumption above, we show that all iterates $\mb W_l(t)$ along the GD trajectory exhibit parsimonious structures when the 
cross-correlation matrix
possesses low-dimensional structures. 

\begin{figure}[t]
    \centering
    \vspace{-3em}
    \includegraphics[width=1\linewidth]{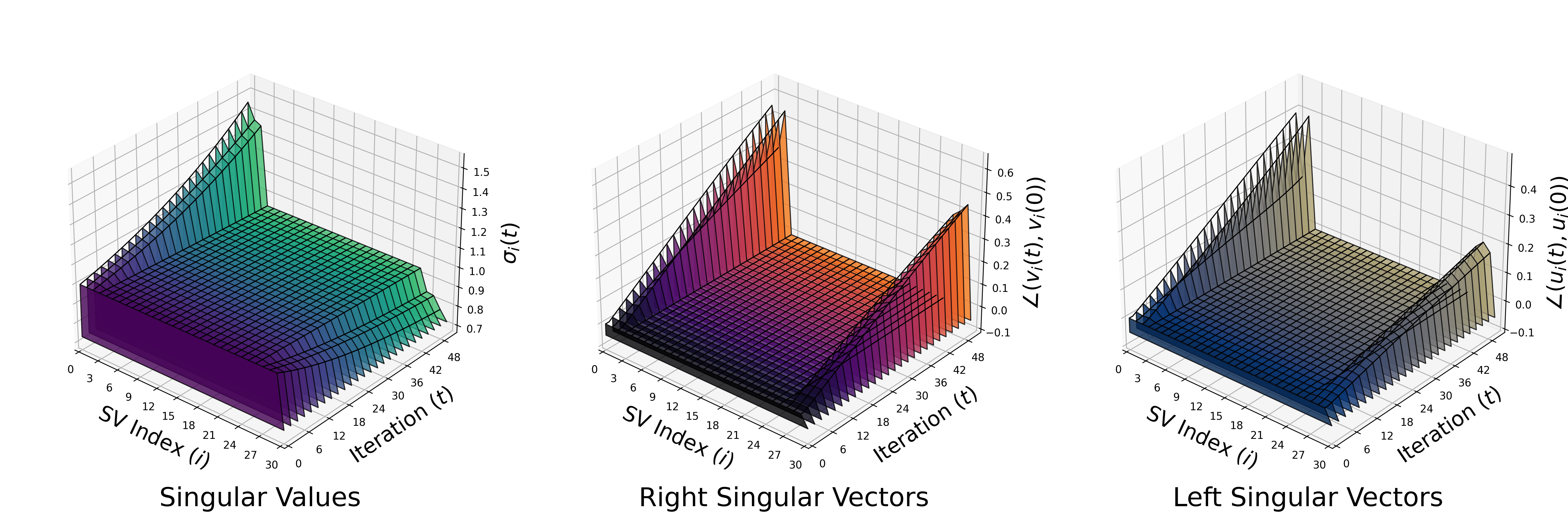}
    \caption{\textbf{Evolution of SVD of weight matrices.} We visualize the SVD of the first layer weight matrix of an $L=3$ layer deep linear network with $d_x = d_y = 30$ and $r=3$ (\textbf{Case 1}) throughout GD without weight decay. \textit{Left}: Magnitude  of the $i$-th singular value $\sigma_i(t)$ at iteration $t$. \textit{Middle}: Angle $\angle(\vv_i(t), \vv_i(0))$ between the $i$-th right singular vector at iteration $t$ and initialization. \textit{Right}: Angle $\angle(\vu_i(t), \vu_i(0))$ between the $i$-th left singular vector at iteration $t$ and initialization.
    }
    \label{fig:traj_matrices}
    \vspace{-0.15in} 
\end{figure}


\begin{theorem}\label{thm:main}
Suppose that an $L$-layer $f_{\mb \Theta}(\cdot)$ and the training data $(\mb X,\mb Y)$ satisfy Assumption \ref{AS:1}. We run GD \eqref{eq:gd} with weight decay parameter $\lambda$ and learning rate $\eta$ to train  $f_{\mb \Theta}(\cdot)$ starting from $\varepsilon$-scaled orthogonal initialization \eqref{eq:init}. Then, the iterates $\Brac{\mb W_l(t)}_{l=1}^L$ for all $t \ge 0$ possess parsimonious structures in the following sense:
\begin{itemize}
    \item[\textbf{Case 1.}] 
    Suppose the cross-correlation $\mb Y \mb X^\top \in \R^{d_y \times d_x}$ is of rank $r \in \N_+$ with $d_y = d_x$, and
    $m:= d_x - 2r > 0$. Then there exist orthogonal matrices $\{\mb U_{l}\}_{l=1}^{L} \subseteq \calO^d$ and $\{\mb V_l\}_{l=1}^L \subseteq \calO^d $ satisfying $\mb V_{l+1} = \mb U_{l}$ for all $l \in [L-1]$, such that $\mb W_l(t)$ admits the following decomposition
    \begin{align}\label{eq:weight_structures}
        \mb W_l(t) = \mb U_l \begin{bmatrix}
            \widetilde{\mb W}_l(t) & \bm{0} \\
            \bm{0} & \rho(t) \bm{I}_m
        \end{bmatrix}  \mb V_l^\top
    \end{align}
    for all $l \in [L]$ and $t \ge 0$, where $\wt{\mb W}_l(t)\in \bb R^{2r \times 2r}$ for all $l \in [L]$ with $\wt{\mW}_l(0) = \varepsilon \mb I_{2r}$, and 
    \begin{align}\label{eq:rho 1}
        \rho(t) = \rho(t-1)\left(1 - \eta\lambda -\eta \cdot \rho(t-1)^{2(L-1)}\right)
    \end{align}
    for all $t\geq 1$ with $\rho(0) = \varepsilon$.
    \item[\textbf{Case 2.}] Suppose the cross-correlation $\mb Y \mb X^\top \in \R^{d_y \times d_x}$
    with $d_y = r$ satisfies $m := d_x - 2d_y > 0$. Then, $\mb W_l(t)$ admits the same decomposition as in \eqref{eq:weight_structures} for all $l \in [L-1]$ and $t \geq 0$ except 
    \begin{align}\label{eq:rho 2}
        \rho(t)= \varepsilon \paren{1 - \eta \lambda }^t,\ \forall t \ge 0.
    \end{align}
\end{itemize}
\end{theorem}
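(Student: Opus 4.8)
The key idea is an induction on the iteration count $t$, showing that the block-diagonal structure in \eqref{eq:weight_structures} is preserved by a single GD step. The real work is in choosing the right orthogonal factors $\{\mb U_l, \mb V_l\}$ up front, using the SVD of the cross-correlation matrix, and then verifying that the GD update \eqref{eq:gd} never mixes the ``active'' $2r$-dimensional block with the ``frozen'' $m$-dimensional block. First I would set up the factors: write the (reduced) SVD $\mb Y \mb X^\top = \mb P \bSigma \mb Q^\top$ with $\mb P, \mb Q \in \calO^{d \times r}$ in Case 1 (and handle Case 2 where $\mb Y\mb X^\top$ is $r \times d_x$ similarly). The subtlety is that we need a $2r$-dimensional active subspace, not $r$-dimensional: this is because the gradient term $\bW_{L:l+1}^\top \bm\Gamma \bW_{l-1:1}^\top$ has $\bm\Gamma(t) = (\mb W_{L:1}(t)\mb X - \mb Y)\mb X^\top$, and since $\mb X\mb X^\top = \mb I$ this is $\mb W_{L:1}(t) - \mb Y\mb X^\top$; the product $\mb W_{L:1}(t)$ restricted to the frozen subspace is $\rho(t)^L \mb I_m$, and for the structure to close up we need $\bm\Gamma(t)$ to be block-diagonal in the same basis, hence the column span of $\mb Y\mb X^\top$ and its row span must \emph{together} be contained in the active subspace — giving the $2r$ (generic case where the two $r$-dimensional spaces are in general position so $m = d_x - 2r$).

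Then I would carry out the induction. At $t=0$, orthogonal initialization \eqref{eq:init} gives $\mb W_l(0) = \varepsilon \mb I$ in any basis, so \eqref{eq:weight_structures} holds with $\wt{\mb W}_l(0) = \varepsilon \mb I_{2r}$ and $\rho(0) = \varepsilon$, provided we also arrange $\mb V_{l+1} = \mb U_l$ — which is automatic since we are free to pick $\mb U_l = \mb V_l$ equal to a common orthogonal basis $\mb O$ that block-diagonalizes $\mb Y\mb X^\top$ (extending an orthonormal basis of $\Span(\mb P) + \Span(\mb Q)$). For the inductive step, assume \eqref{eq:weight_structures} at time $t-1$. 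Then each product $\bW_{j:i}(t-1)$ is block-diagonal in the consistent bases (the telescoping $\mb V_{l+1} = \mb U_l$ makes all the inner orthogonal factors cancel), with frozen block $\rho(t-1)^{j-i+1}\mb I_m$. Hence $\bm\Gamma(t-1) = \mb W_{L:1}(t-1) - \mb Y\mb X^\top$ is block-diagonal with frozen block $\rho(t-1)^L \mb I_m$, and the gradient $\bW_{L:l+1}^\top(t-1)\bm\Gamma(t-1)\bW_{l-1:1}^\top(t-1)$ is block-diagonal with frozen block $\rho(t-1)^{L-1} \cdot \rho(t-1)^L \cdot \mbox{(nothing)} = \rho(t-1)^{2L-1}\mb I_m$ — wait, one must count exponents carefully: $L-l$ factors from $\bW_{L:l+1}$, plus $L$ from $\bm\Gamma$ contributing $\rho^L$... the clean bookkeeping is that the \emph{frozen} block of the update to $\mb W_l$ is $-\eta\,\rho(t-1)^{(L-l)+(l-1)}\cdot\rho(t-1)^{L}\cdot\rho(t-1)^{0}$; combined with the $(1-\eta\lambda)\rho(t-1)$ from weight decay this yields exactly \eqref{eq:rho 1}. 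In Case 2 the product $\mb W_{L:1}$ has its frozen-block contribution annihilated because $\mb W_L$ has only $r$ rows (the frozen directions lie in the nullspace of the overall map), so $\bm\Gamma$'s restriction to the frozen subspace is zero, the gradient's frozen block vanishes, and only weight decay acts, giving \eqref{eq:rho 2}.

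The main obstacle, and the part deserving the most care, is proving that the \emph{active} $2r \times 2r$ block closes up under the update in a way that is consistent across layers — i.e., that the same pair of $2r$-dimensional subspaces (not just their dimensions) serves for all $t$, so that $\wt{\mb W}_l(t)$ is well-defined by \eqref{eq:weight_structures}. Concretely, one must check that $\bW_{L:l+1}^\top(t-1)\bm\Gamma(t-1)\bW_{l-1:1}^\top(t-1)$, after conjugating by the fixed bases $\mb U_l, \mb V_l$, has zero off-diagonal blocks — which follows from the fact that $\bm\Gamma(t-1)$ maps the active subspace into itself and kills nothing outside it in a way that leaks across, but this requires the $2r$-dimensional active subspace to be invariant under both left- and right-multiplication structures simultaneously; this is exactly why $\mb P$'s span \emph{and} $\mb Q$'s span must both sit inside it. I would also need to confirm the recursion \eqref{eq:rho 1} is the only constraint on $\rho$ and that no interaction between the blocks forces additional terms — a direct block-matrix multiplication. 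Everything else (the SVD-of-iterates interpretation shown in \Cref{fig:traj_matrices}, namely that $\sigma_i(t)$ for the frozen indices equals $|\rho(t)|$ and the corresponding singular vectors are stationary) is then an immediate corollary of \eqref{eq:weight_structures}.
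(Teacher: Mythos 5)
Your high-level strategy — fix bases up front, induct on $t$, show the GD update preserves a block-diagonal structure, and read off the recursion for $\rho(t)$ from the exponent bookkeeping — is the same as the paper's, and your exponent count $\rho^{(L-l)+(l-1)}\cdot\rho^{L}=\rho^{2L-1}$ correctly reproduces \eqref{eq:rho 1}. However, there is a genuine gap in your construction of the bases. You assert that $\varepsilon$-scaled orthogonal initialization gives $\mb W_l(0)=\varepsilon\mb I$ ``in any basis,'' and hence that one may take $\mb U_l=\mb V_l=\mb O$ for a single orthogonal matrix block-diagonalizing $\mb Y\mb X^\top$. This is false: \eqref{eq:init} only says $\mb W_l(0)=\varepsilon\mb O_l$ for an \emph{arbitrary} orthogonal $\mb O_l$ (different per layer), so for \eqref{eq:weight_structures} to hold at $t=0$ with $\wt{\mb W}_l(0)=\varepsilon\mb I_{2r}$ one is forced to take $\mb U_l=\mb O_l\mb V_l$, which is not equal to $\mb V_l$ in general; the chaining $\mb V_{l+1}=\mb U_l$ then propagates the initialization through all layers. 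Consequently your candidate frozen subspace $(\Span(\mb P)+\Span(\mb Q))^{\perp}$, taken identically at every layer, is not invariant: the condition needed at the output of layer $L$ is $\mb U_{L,2}\subseteq\calN(\bm\Phi^\top)$ while the condition at the input of layer $1$ is $\mb V_{1,2}\subseteq\calN(\bm\Phi)$, and these two subspaces are linked by $\mb U_{L,2}=\mb W_{L:1}(0)\mb V_{1,2}/\varepsilon^{L}$. The paper resolves this by defining the layer-one frozen subspace as $\calS=\calN(\bm\Psi)\cap\calN(\bm\Psi^\top\mb W_1(0))$ with $\bm\Psi=\mb W_{L:2}^\top(0)\bm\Phi$, proving $\dim(\calS)\ge 2(d-r)-d=m$ by a dimension count (a step entirely absent from your plan, and the real reason the answer is $2r$ rather than $r$ — not a genericity/general-position argument), and then pushing an orthonormal basis of $\calS$ forward layer by layer via $\bm u_i^{(l)}=\mb W_l(0)\bm v_i^{(l)}/\varepsilon$, $\bm v_i^{(l+1)}=\bm u_i^{(l)}$.

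A second, smaller point: you correctly flag that one must verify the off-diagonal blocks of the gradient vanish, but you do not supply the mechanism. The paper gets this for free from the invariants $\calA(t)$/$\calB(t)$ (the frozen vectors are simultaneously right \emph{and} left singular vectors of $\mb W_l(t)$ with the common value $\rho(t)$), together with the auxiliary invariants $\calC(t)$/$\calD(t)$ stating that $\bm\Phi$ (Case 1) or $\bm\Gamma$ and the downstream product (Case 2) annihilate the propagated frozen directions; all four must be carried simultaneously through the induction on $t$, since $\calA(t+1)$ uses $\calD(t)$ and $\calB(t+1)$ uses $\calC(t)$. Your plan, as written, would only go through if every layer were initialized to exactly $\varepsilon\mb I$.
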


\noindent We defer comprehensive comparison with prior arts and detailed proofs to \Cref{sec:conclusion} and \Cref{app:proof3}, respectively. To help the reader gain more insights from our results, we make several remarks in the following. 
\vspace{-0.1in}  
\paragraph{Dynamics of singular values and vectors of weight matrices.} It is worth noting that the decomposition \eqref{eq:weight_structures} is closely related to the singular value decomposition (SVD) of $\mb W_l(t)$. Specifically, let $\mb U_l = [\mb U_{l,1}\ \mb U_{l,2}]$, $\mb V_l = [\mb V_{l,1}\ \mb V_{l,2}]$, where $\mb U_{l,1},\mb V_{l,1} \in \mc O^{d\times 2r}$, $\mb U_{l,2},\mb V_{l,2} \in \mc O^{d\times (d-2r)}$. Let $\widetilde{\mb W}_l(t) = \widetilde{\bm U}_l(t)\widetilde{\bm \Sigma}_l(t)\widetilde{\bm V}_l^\top(t)$ be an SVD of $\widetilde{\mb W}_l(t)$, where $\widetilde{\bm U}_l(t),\widetilde{\bm V}_l(t) \in \calO^{2r}$ and $\widetilde{\bm \Sigma}_l(t) \in \R^{2r\times 2r}$ is a diagonal matrix. Then, we can rewrite \eqref{eq:weight_structures} into
\begin{align}\label{eq:SVD Wlt}
    \mb W_l(t) = \begin{bmatrix}
      \mb U_{l,1}\widetilde{\bm U}_l(t)  &  \mb U_{l,2}
    \end{bmatrix} \begin{bmatrix}
            \widetilde{\bm \Sigma}_l(t) & \bm{0} \\
            \bm{0} & \rho(t) \bm{I}_m
        \end{bmatrix}  
    \begin{bmatrix}
      \mb V_{l,1} \widetilde{\bm V}_{l}(t)  &  \mb V_{l,2}
    \end{bmatrix}^\top,
\end{align}
which is essentially an SVD of $\bm{W}_l(t)$ (besides the ordering of singular values). According to this, we can verify that the (repeated) singular value $\rho(t)$ undergoes minimal changes across iterations when $\varepsilon$ is small according to \eqref{eq:rho 1} and \eqref{eq:rho 2} -- this is illustrated in \Cref{fig:traj_matrices} (left). 
\vspace{-0.1in}  
\paragraph{Low-rank implicit bias.} We emphasize that our result sheds new light on the implicit low-rank bias of GD. Specifically, it follows from \eqref{eq:rho 1} and \eqref{eq:rho 2} that $\lim_{\varepsilon \rightarrow 0} \rho(t) = 0$ for all $t \ge 0$. This, together with \eqref{eq:SVD Wlt}, implies that the dynamics of GD are inherently biased towards finding low-rank solutions with a rank of at most $2r$. In contrast to existing work that demonstrates the tendency of GD to find low nuclear-norm solutions \cite{gunasekar2017implicit,arora2019implicit}, we directly show that GD tends to find low-rank solutions.  
\vspace{-0.1in}   
\paragraph{Invariance of subspaces.} According to \eqref{eq:SVD Wlt}, it is evident that the subspace of dimension $d-2r$ formed by left (resp. right) singular vectors in $\mb U_{l,2}$ (resp. $\mb V_{l,2}$) corresponding to the singular values in $\rho(t)$ remains unchanged during iterations; see \Cref{fig:traj_matrices}. This indicates that the learning process occurs only within an invariant subspace of dimension $2r$. This result allows us to gain insights into and improve the training efficiency of DLNs for deep matrix completion (\Cref{sec:low-rank}), as well as elucidate the linear progressive separation of representations of DLNs (\Cref{sec:separation}). 
\vspace{-0.1in}   
\paragraph{Comparison to prior arts.} In our analysis, we specifically investigate the impact of weight decay on the implicit bias of GD. Unlike previous work on implicit bias \cite{min2022convergence,gissin2019implicit,arora2019implicit,vardi2021implicit}, which did not explicitly consider weight decay, we carefully examine the effect of this regularization technique. In particular, when weight decay regularizer $\lambda > 0$ is applied, we observe that the singular value $\rho(t)$ tends to zero asymptotically as $t$ goes to infinity. This finding indicates that gradient descent with weight decay is biased towards finding low-rank solutions. Moreover, we believe our result can be generalized to other optimization methods beyond GD, such as Adam \cite{kingma2014adam}, AdaGrad \cite{duchi2011adaptive}, and RMSprop \cite{tieleman2012rmsprop}.

\section{Applications and Experiments}\label{sec:appli}

In this section, we focus on showcasing two specific applications of \Cref{thm:main}. In \Cref{sec:low-rank}, we present the utilization of \textbf{Case 1} in \Cref{thm:main} to improve the speed of training in low-rank deep matrix completion. In \Cref{sec:separation}, we apply the findings from \textbf{Case 2} in \Cref{thm:main} to gain insights into the phenomenon of progressive feature collapse in representation learning.

\subsection{Application I: Accelerating Deep Low-Rank Matrix Completion}\label{sec:low-rank}
\vspace{-0.05in} 
First, we demonstrate how the parsimonious structures of GD in DLNs can be applied to dramatically improve the optimization efficiency of solving deep matrix completion \cite{arora2019implicit}. 

\vspace{-0.1in} 
\paragraph{Problem Setup.} We consider the low-rank matrix completion problem \cite{candes2012exact,candes2010power,davenport2016overview} with ground-truth $\mPhi \in \R^{d\times d}$ with $r := \mbox{rank}(\mPhi) \ll d$. Our goal is to recover $\mb \Phi$ from as few number of observations as possible, where the observed entries are encoded by an index matrix $\mOmega \in \{0, 1\}^{d \times d}$. To solve the problem, we consider the recent deep matrix factorization approach \cite{arora2019implicit}, by optimizing a variant of Problem \eqref{eq:obj} with $\mb Y = \mb \Phi$ and identity input $\mb X = \mb I_d$, where the objective\footnote{The recovery error is defined by flipping the entries in $\mOmega$ above, i.e., the error in the unobserved entries. } is defined as
\begin{equation}\label{eq:obj_mc}
    \ell_{\mathrm{mc}}(\mTheta) := \frac{1}{2}\|\mOmega \odot (\mW_{L:1} - \mPhi)\|_F^2.
\end{equation}
When the complete observation $\mOmega = \bm 1_d \bm 1_d^\top$ is available, the above problem simplifies to deep matrix factorization, as depicted in Problem \eqref{eq:obj}. Moreover, if the network depth is $L=2$, Problem \eqref{eq:obj_mc} reduces to a Burer-Monteiro factorization \cite{burer2003nonlinear}. Despite its nonconvexity, significant advances have been made in understanding its global optimality and GD convergence under various settings in the past few years \cite{jain2013low,zheng2016convergence,sun2016guaranteed,ge2016matrix,bhojanapalli2016global,ge2017no,gunasekar2017implicit,li2019non,chi2019nonconvex,li2018algorithmic,soltanolkotabi2023implicit}.


\vspace{-0.1in} 
\paragraph{Benefits of Deep Networks.} In practice, the true rank $r$ is often not known exactly. Instead, we may have a rough estimate of its upper bound $\wh{r}$, i.e., $r \leq \wh{r} \ll d$. When we consider Problem \eqref{eq:obj_mc} in the over-parameterized regime by overestimating the rank as $\wh{r} \geq r$, as recent work has shown that GD starting from small initialization has an implicit bias towards the true rank solution \cite{gunasekar2017implicit}, we do not need an explicit regularization and set the weight decay $\lambda =0$ when we run GD 
for optimizing Problem \eqref{eq:obj_mc}. Furthermore, in the over-parameterized regime, more recent work \cite{arora2019implicit} demonstrated that using a deeper network (i.e., $L\geq 3$) in solving Problem \eqref{eq:obj_mc} enjoys several substantial benefits over the shallow counterpart $L=2$. 
\begin{itemize}[leftmargin=*]
\vspace{-0.05in} 
    \item \textbf{Benefits of depth.} As shown in \Cref{fig:depth_2_v_3} (left),  when we increase over-parameterization, training deeper networks ($L=3$) with GD is less prone to overfitting. Additionally, the work \cite{arora2019implicit} has shown that deeper networks improve sample complexity over their shallow counterparts. It has also been shown that GD for deeper networks has a stronger implicit bias towards low-rank solutions \cite{huh2023simplicitybias}.
    \item \textbf{Benefits of width.} On the other hand, increasing the width of the network results in accelerated convergence of GD in terms of iterations.  As shown in \Cref{fig:depth_2_v_3} (right), increasing the network width for a 3-layer network reduces the number of GD iterations needed to converge.
\end{itemize}
\noindent Nonetheless, the advantages of deeper and wider networks are accompanied by computational trade-offs: Increasing both the depth and width of a network significantly increases the number of parameters that need to be optimized, thereby causing the per-iteration cost of minimizing Problem \eqref{eq:obj_mc} via GD to be much greater than that of shallower and narrower networks. In the following, we show that this computational challenge associated with training deeper and wider networks can be mitigated by invoking the law of parsimony in \Cref{thm:main}, where we can construct an approximately equivalent but considerably smaller network to speed up the optimization process.



\begin{figure}
\begin{minipage}{.49\textwidth}
\begin{center}
\includegraphics[width=\linewidth]{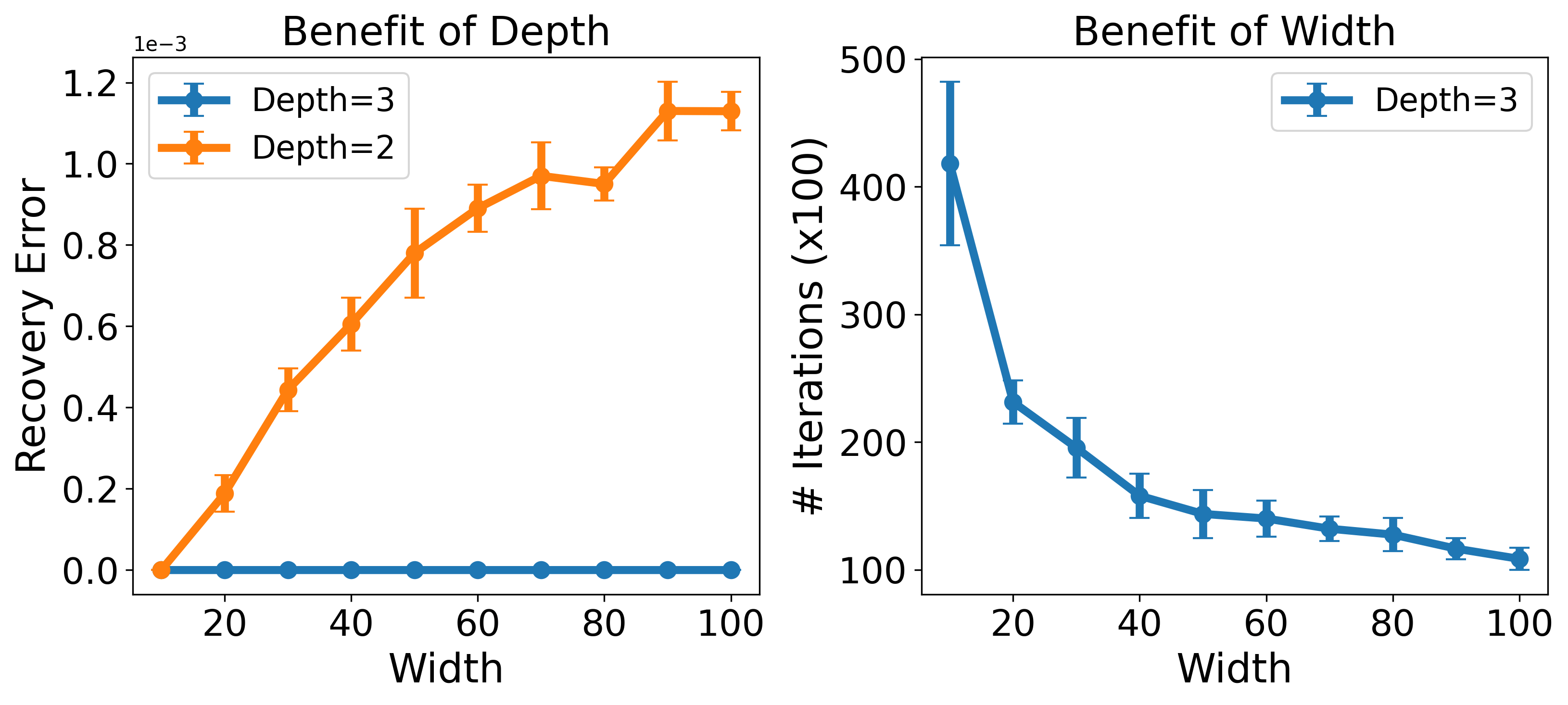}
  \caption{\textbf{Benefits of depth and width in overparameterized matrix completion.} \textit{Left}: Recovery error using $L=2$ vs. $L=3$. \textit{Right}: Number of GD iterations to converge.}
  \label{fig:depth_2_v_3}
  \end{center}
\end{minipage}
\hfill 
\begin{minipage}{.49\textwidth}
  \includegraphics[width=\linewidth]{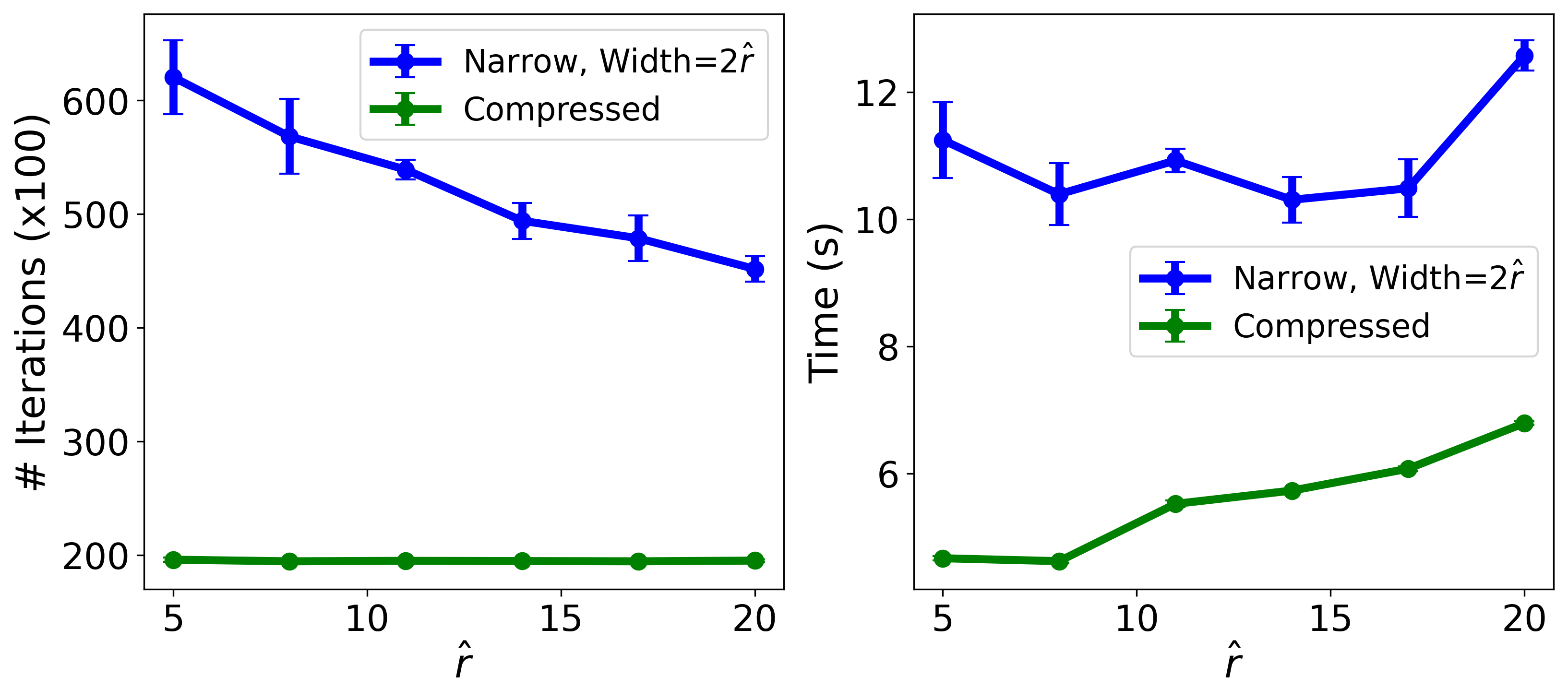}
  \caption{\textbf{Comparison of efficiency of compressed networks vs. narrow network} with different overestimated $\wh{r}$. \textit{Left}: Number of iterations to converge. \textit{Right}: Time to converge.}
  \label{fig:comp_iter_time}
\end{minipage}
\vspace{-0.15in}
\end{figure}
\vspace{-0.1in} 
\paragraph{Network Compression via Law of Parsimony.} To build up intuition, we first describe our approach in a simplified setting of deep matrix factorization, where we have full observation of $\mb \Phi$. Then we extend the idea to over-parameterized low-rank matrix completion. 


\noindent \textbf{\emph{$\blacktriangleright$ The Deep Matrix Factorization Setting:}} With $\mOmega = \bm 1_d \bm 1_d^\top$, \eqref{eq:obj_mc} now reduces to the vanilla problem \eqref{eq:obj} where we can apply \Cref{thm:main} with $m=d - 2\wh{r} > 0$.\footnote{We note that we can replace $r$ in \Cref{thm:main} with $\wh{r}$ and maintain the same dynamics provided that $d - 2\wh{r} > 0$. } Based on \eqref{eq:weight_structures} as well as the fact that $\mb V_{l+1} = \mb U_l$ for all $l \in [L-1] $ from \Cref{thm:main}, we can always write the end-to-end matrix $\mW_{L:1}(t)$ as
\begin{align*}
    \mW_{L:1}(t) =  \begin{bmatrix}
        \mb U_{L,1} & \mb U_{L,2}
    \end{bmatrix} \begin{bmatrix}  \wt{\mW}_{L:1}(t)  & \mb 0 \\
    \mb 0 & \rho^L(t) \mb I_m \end{bmatrix} \begin{bmatrix}
        \mb V_{1,1}^\top \\ \mb V_{1,2}^\top
    \end{bmatrix}
    = \mU_{L,1} \wt{\mW}_{L:1}(t) \mV_{1,1}^\top + \rho^{L}(t) \mU_{L, 2}\mV_{1, 2}^\top
\end{align*}
for all $t\geq 0$, where $\wt{\mW}_{L:1}(t) \in \bb R^{2\wh{r} \times 2\wh{r}}$ is the end-to-end matrix for all the compressed weights, $\mb U_{L,1},\mb V_{1,1} \in \mc O^{d\times 2\wh{r}}$, and $\mb U_{L,2},\mb V_{1,2} \in \mc O^{d\times m }$. Then, our claim for deep matrix factorization is that:
\begin{center}
    \emph{For optimizing \eqref{eq:obj_mc} with small initialization, running GD on the original weights $\Brac{\mb W_l}_{l=1}^L \subseteq \bb R^{d \times d} $\ is \textbf{almost equivalent} to running GD on the compressed weights $\{\wt{\mb W}_l\}_{l=1}^L \subseteq \bb R^{2\wh{r} \times 2\wh{r}} $.}
\end{center}
As a result, since the compressed weights consist of only $4L\wh{r}^2$ parameters as opposed to the $Ld^2$ parameters of the original weights, their optimization can be significantly more efficient when $\wh{r} \ll d$. To numerically verify our claim, we train the compressed network for target $\mPhi$ with $d=1000$, $\wh{r} = r = 5$ in \Cref{fig:traj_loss}. It can be observed that the end-to-end GD trajectory of optimizing the compressed network closely follows the trajectory of the original network, while converging to the optimal solution an entire order of magnitude earlier. While our claim may not possess complete rigor, let us briefly outline two fundamental components to elucidate why our claim above holds true:
\begin{itemize}[leftmargin=0.25in]
    \item \textbf{The effects of small initialization $\varepsilon$ and depth $L$.} From \eqref{eq:rho 1}, we know that $\rho(t)$ depends on $\varepsilon$, with $ \lim_{\varepsilon \rightarrow 0} \rho(t) = 0 $.
    As such, the term $\rho^{L}(t) \cdot \mU_{L, 2}\mV_{1, 2}^\top \approx \mb 0$ when we use small initialization $\varepsilon\approx 0$, and its size decreases with larger depth $L$. Therefore, we have
    \begin{align}\label{eq:f0}
     \mb W_{L:1}(t) \;=\; \mU_{L,1} \wt{\mW}_{L:1}(t) \mV_{1,1}^\top + \rho^{L}(t) \mU_{L, 2}\mV_{1, 2}^\top \;\approx\; \mU_{L, 1} \wt{\mW}_{L:1}(t) \mV_{1,1}^\top,\quad \forall t\geq 0.
    \end{align} 
    \item \textbf{Invariance of weight subspaces and GD dynamics throughout training.}   
    Moreover, we know from \Cref{thm:main} that all $\Brac{\mU_{l,1}}_{l=1}^L$ and $\Brac{\mV_{l,1}}_{l=1}^L$ remain unchanged throughout the GD dynamics, and gradients commute under orthogonal transformations across different layers. Therefore, running GD on the original weights $\mW_l(t)$ is essentially equivalent to running GD on the compressed weights $\wt{\mW}_l(t)$ for all $t\geq 0$ and $l \in [L]$. 
\end{itemize}

\begin{figure}[t]
  \centering
  \includegraphics[width=0.95\linewidth]{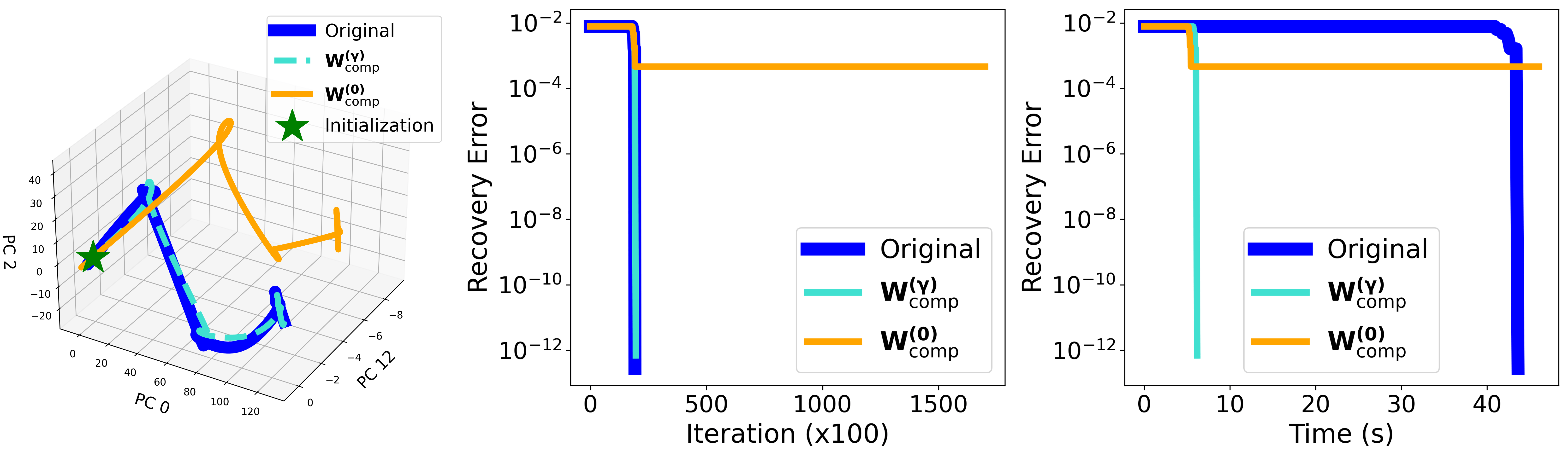}
  \caption{\textbf{Ablation of $\gamma$ in compressed network for deep matrix completion.}
  Comparison of original overparameterized 3-layer network and its approximate compressed networks $\mW_{\mathrm{comp}}^{(\gamma)}$ ($\gamma > 0$) and $\mW_{\mathrm{comp}}^{(0)}$ ($\gamma = 0$). \textit{Left}: Principal components of end-to-end trajectories of each network. \textit{Middle}: Recovery error vs. iteration comparison. \textit{Right}: Recovery error vs wall-time comparison.}
  \label{fig:equiv_traj_mc}
  \vspace{-0.15in}
\end{figure}

\noindent \textbf{\emph{$\blacktriangleright$ Extension to Deep Matrix Completion:}} 
We should mention that our result for deep matrix factorization cannot be directly applied to deep matrix completion -- this is due to the fact that the observation matrix $\mOmega \odot \mPhi$ for a generic $\mOmega$ does not necessarily have the low-rank structure of $\mb \Phi$, and hence the $\mV_{1, 1}$ and $\mU_{L, 1}$ factors in \eqref{eq:f0} do \emph{not} remain unchanged throughout the GD iterations for all $t\geq 0$. To deal with this issue, we propose to update both $\mV_{1, 1}(t)$ and $\mU_{L, 1}(t)$ factors via GD with learning rate $\gamma \eta$ in
\begin{equation}\label{eq:fgamma}
    \mb W_{\mathrm{comp}}^{(\gamma)}(t) := \mU_{L, 1}(t) \wt{\mW}_{L:1}(t) \mV_{1, 1}^\top(t).
\end{equation}
This is done simultaneously with the GD updates on the subnetwork $\wt{\mW}_{L:1}(t)$, which uses the original learning rate $\eta$. We call $\mb W_{\mathrm{comp}}^{(\gamma)}(t)$ the \emph{compressed network}, where $\gamma \in(0,1)$ denotes the discrepancy in the learning rate. More specifically, we make the following modifications to our method:
\begin{itemize}[leftmargin=0.25in]
    \item \textbf{Initialize $(\mV_{1, 1}(0),\mU_{L, 1}(0))$.} We initialize the factor $(\mV_{1, 1}(0),\mU_{L, 1}(0))$ using the ones described in \eqref{eq:f0}, but they are calculated based upon $\mOmega \odot \mPhi$ instead of $\mb \Phi$.
    \item \textbf{Update $(\mV_{1, 1}(t),\mU_{L, 1}(t))$ with small learning rates.} Although the subspaces $(\mV_{1, 1}(t),\mU_{L, 1}(t))$ are changing, they are changing much more slowly compared to the compressed weights $\wt{\mb W}_l(t)$. Therefore, we update $\mV_{1, 1}(t)$ and $\mU_{L, 1}(t)$ using a discrepant and smaller learning rate $\gamma \eta$ with ($\gamma\ll 1$) for all $t\geq 0$, compared to the learning rate $\eta$ used for the compressed weights $\wt{\mb W}_l(t)$.
\end{itemize}
As illustrated in \Cref{fig:equiv_traj_mc} (left), monitoring the end-to-end GD trajectory on the compressed network $\mb W_{\mathrm{comp}}^{(\gamma)}(t)$ in \eqref{eq:fgamma} reveals that maintaining $(\mU_{L, 1}(t),\mV_{1, 1}(t))$ unchanged (i.e., $\gamma=0$) results in the GD trajectory of $\mb W_{\mathrm{comp}}^{(0)}(t)$ deviating from that of the original network during the later stages of training, leaving the final test error fairly high. This is due to the accumulation of approximation error throughout the final training phase. On the other hand, updating $(\mV_{1, 1}(t),\mU_{L, 1}(t))$ using a small learning rate $\gamma \eta$ with $\gamma=0.01$ ensures that the GD trajectory of $\mb W_{\mathrm{comp}}^{(\gamma)}(t)$ closely mirrors that of the original network, resulting in a significantly reduced recovery error upon convergence of training.


Regarding computation, it is worth noting that the modified approach requires optimizing an additional $4\wh{r}d$ parameters compared to the deep matrix factorization scenario -- this is due to the additional updates on $(\mV_{1, 1}(t),\mU_{L, 1}(t))$. Nevertheless, compared to optimizing the original network, our modified approach remains considerably more efficient, optimizing only $O(d)$ parameters as opposed to the $O(d^2)$ parameters required for the original network. Our experimental result in \Cref{fig:equiv_traj_mc} (right) also supports this, demonstrating that the compressed network converges significantly faster in terms of time compared to the original network.



\vspace{-0.1in}
\paragraph{Compressed Networks vs. Narrow Networks.} 
Despite the intriguing observation that the effective rank of the changing subspace is capped at $2\wh{r}$, it still prompts the following question: \emph{Does this imply that optimizing a narrow network of the same width $2\wh{r}$ would perform just as efficiently as the compressed network with a true width of $d\gg \wh{r}$?}


Our experimental results suggest that the answer is \emph{no} in general -- we compare the training efficiency of a $2\wh{r}$-compressed network (within a wide network of width $d \gg \wh{r}$) versus a narrow network with width 2$\wh{r}$ under different over-parameterized estimates $\wh{r}$. As depicted in \Cref{fig:comp_iter_time} (left), the compressed network requires fewer iterations to reach convergence, and the number of iterations necessary is almost unaffected by $\wh{r}$. Consequently, \emph{training compressed networks is considerably more time-efficient than training narrow networks of the same size}, provided that $\wh{r}$ is not significantly larger than $r$. The distinction between the compressed and narrow networks underscores the benefits of wide networks, as previously demonstrated and discussed in \Cref{fig:depth_2_v_3} (right), where increasing the network width results in faster convergence. However, increasing the network width alone also increases the number of parameters. By employing our network compression methodology, we can achieve the best of both worlds.
\vspace{-0.1in} 
\paragraph{Experimental Setups.} Regarding our experimental setups, for \Cref{fig:depth_2_v_3} we consider matrix completion problem with $d=100$, $r=10$, and $30$\% of entries observed, and for \Cref{fig:comp_iter_time} and \Cref{fig:equiv_traj_mc}, we consider a matrix completion problem with $d=1000$, $r=5$, and $20$\% of entries observed. We optimize deep networks with depth $L=3$ via GD starting from a small orthogonal initialization of scale $\varepsilon = 10^{-3}$, until the objective \eqref{eq:obj_mc} achieves a value less than $10^{-10}$.

\subsection{Application II: Understanding Progressive Feature Collapse in DLN}\label{sec:separation}

For the multi-class classification problem, we employ our general result in \Cref{thm:main} to demystify the intriguing phenomenon illustrated in \Cref{fig:progressive NC}. We refer the reader to \Cref{app:proof4} for the proofs of the results presented in this section. 
\vspace{-0.1in}
\paragraph{Problem Setup for Multi-Class Classification.} 
We consider a $K$-class classification problem with training data samples $\{(\bm{x}_{k,i},\bm{y}_k)\}_{ i \in [n_k], k \in [K] }$, where $\bx_{k,i} \in \mathbb R^{d}$ is the $i$-th sample in the $k$-th class, $\bm{y}_k \in \mathbb R^K$ is an one-hot label vector\footnote{A one-hot label, e.g., $\bm{y}_k$, has only the $k$-th entry equal to $1$ with the remaining entries equal to $0$.}, and the number of samples $n_k$ in each class is balanced with $n_1 = \cdots = n_K = n$. We denote the total number of samples by $N = nK $. Based upon the above, we train a $L$-layer DLN to learn weights $\bm \Theta = \{\bW_l\}_{l=1}^L$ via minimizing the $\ell_2$ loss in \eqref{eq:obj}, where $\bW_l \in \R^{d_l \times d_{l-1}}$, $d_0=d$, and $d_L=K$. We write the feature $\bm{z}_{k,i}^l$ of an input sample $\bx_{k,i}$ in the $l$-th layer as
\begin{align}\label{eq:zl}
\bm{z}_{k,i}^l := \bW_l \dots \bW_1 \bx_{k,i} = \mb W_{l:1} \mb x_{k,i},\ \forall l \in [L],
\end{align}
and we denote $\bz_{k,i}^0 = \bx_{k,i}$. To characterize the network's capability to separate data separation across layers on the whole training dataset, we use a metric introduced in \cite{tirer2022perturbation} as
\begin{align}\label{eq:D measure}
D_l  \;:=\;  \mathrm{Tr}(\bm{\Sigma}_W^l) / \mathrm{Tr}(\bm{\Sigma}_B^l ),
\end{align}
where $\bm{\Sigma}_W^l$ and $\bm{\Sigma}_B^l$ characterize the between-class and with-class variabilities for the $l$-th layer respectively as
\begin{align}\label{eq:Sigma}
   \bm{\Sigma}_W^l   = \frac{1}{N} \sum_{k=1}^K \sum_{i=1}^{n} \left( \bz_{k,i}^l - \bar{\bz}_k^l \right)\left( \bz_{k,i}^l - \bar{\bz}_k^l \right)^\top,\   \bm{\Sigma}_B^l  = \frac{1}{K}\sum_{k=1}^K \left(  \bar{\bz}_k^l - \bar{\bz}^l \right)\left( \bar{\bz}_k^l - \bar{\bz}^l \right)^\top
\end{align}
for all $l \in [L-1]$, where $\bar{\bz}_k^l = \frac{1}{n} \sum_{i=1}^{n} \bz_{i,k}^l$ denotes the sample of the $k$-th class for the $l$-th layer's feature, and $\bar{\bz}^l = \frac{1}{K} \sum_{k=1}^{K} \bar{\bz}_k^l$ denotes  the corresponding global sample mean. Intuitively, $\mathrm{Tr}(\bm{\Sigma}_W^l)$ measures how well the features $\bm{z}_{k,i}^l$ in the $l$-th layer collapse to their means in each class, and $\mathrm{Tr}(\bm{\Sigma}_B)$ measures the discrimination between classes. 

Therefore, by using $\mathrm{Tr}(\bm{\Sigma}_B)$ as a normalization factor, the metric $D_l$ measures how the data are separated and concentrated to the class means in the $l$-th layer. The smaller the value of $D_l$ is, the more collapsed the features are. Additionally, it should be noted that the metric $D_l$ can be viewed as a simplification of the original metric $\mathrm{Tr}(\bm{\Sigma}_W^l (\bm{\Sigma}_B^l)^{\dagger})$ extensively studied in \cite{papyan2020prevalence,zhu2021geometric,he2022law}, by replacing the pseudoinverse with a trace division.

\paragraph{Theoretical Result: Progressive Data Separation with Linear Decay.} Based upon the above setup, we are ready to theoretically justify the progressive data separation phenomenon in \Cref{fig:progressive NC}, where the metric $D_l$ decays at least with a linear rate across layers. Our result is based on DLNs under some mild conditions. We defer all proofs to \Cref{app:proof4}. 

\begin{figure}[t]
    \begin{center}
    \includegraphics[width=0.9\linewidth]{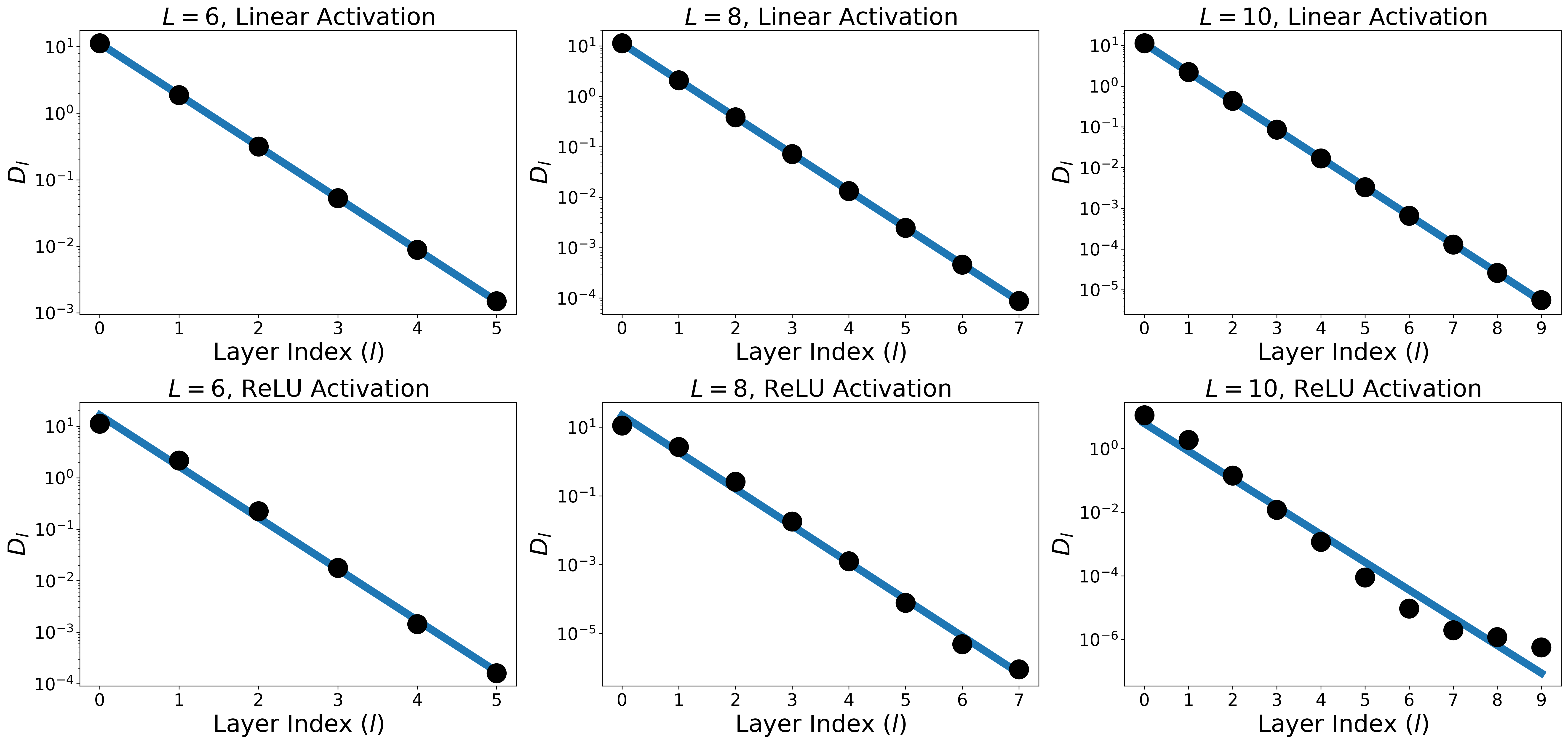}
    \end{center}
    \vspace{-0.1in}
    \caption{\textbf{Linear decay of feature separation in trained deep networks.} Varying the depth $L$ and activation type (Linear vs ReLU), we plot the separation measure $D_l$ in \eqref{eq:D measure} at each layer $l$ up to the penultimate layer ($L-1$), along with the best log-linear fit line. }
    \label{fig:grid_collapse}
    \vspace{-0.1in}
\end{figure}

\begin{theorem}\label{thm:NC}
For a $K$-class classification problem on a balanced dataset, suppose that the input dataset $\bm X \in \R^{d\times N}$ is square and orthogonal. For an $L$-layer DLN with parameters $\bm{\Theta} = \{\bW_l\}_{l=1}^L$ described in \eqref{eq:DLN} with $d_l = d > 2K$ for all $l \in [L-1]$, suppose that $\bm{\Theta}$ satisfies \\
(i)  Global Optimality: $\bW_{L:1}\bm{X} = \bm{Y}.$ \\
(ii) Balancedness: There exists a positive constant $\varepsilon \le \left\{\frac{n^{1/2L}}{\sqrt{30}L\sqrt[4]{d-K}},\frac{(n/2)^{1/4L}}{\sqrt[4]{d-K}}, \frac{1}{\sqrt{2(\sqrt{K}+1)}} \right\}$ such that 
$\bW_{l+1}^\top\bW_{l+1} = \bW_l\bW_l^\top, \forall l \in [L-2],\ \|\bW_{L}^\top\bW_{L} - \bW_{L-1}\bW_{L-1}^\top\|_F \le \varepsilon^2\sqrt{d-K}.$ \\ 
(iii)  Unchanged Spectrum: There exists a positive constant $\varepsilon > 0$ and an index set $\calA \subseteq [d]$ with $|\calA| = d-2K$ such that for all $l \in [L-1]$ that $\sigma_i(\bW_l) = \varepsilon,\ \forall i \in \calA.$ \\
Then, it holds for all $l=0,1,\dots,L-2$ that
\begin{align}\label{eq:linear decay}
  {D_{l+1}} / D_l  \le 2(\sqrt{K}+1)\varepsilon^2. 
\end{align}
\end{theorem}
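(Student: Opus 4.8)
The plan is to reduce the ratio $D_{l+1}/D_l$ to a product of two ratios of squared Frobenius norms measuring how the end-to-end maps $\bW_{l:1}$ act on two fixed, mutually orthogonal subspaces of $\R^d$, and then to control these via the (approximately) layer-aligned singular value decompositions forced by (i)--(iii). First I would pin down the data geometry: since $\bm X$ is square and orthogonal, (i) gives $\bW_{L:1}=\bm Y\bm X^\top$, so $\bW_{L:1}\bW_{L:1}^\top=\bm Y\bm Y^\top=n\bm I_K$; its row space $\calB:=\Span\{\bar{\bz}_k^0\}_{k=1}^K$ is $K$-dimensional and its null space $\calW:=\calB^\perp$ is $(d-K)$-dimensional. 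Using orthonormality of the columns of $\bm X$ and class balance, a direct computation shows the layer-$0$ scatter matrices are scalar multiples of orthogonal projections, $\bSigma_W^0=\tfrac1N P_{\calW}$ and $\bSigma_B^0=\tfrac1{nK}P_{\calB'}$ for a fixed $(K{-}1)$-dimensional $\calB'\subseteq\calB$. Since $\bSigma_W^l=\bW_{l:1}\bSigma_W^0\bW_{l:1}^\top$ (and likewise for $B$), taking traces yields
\begin{align*}
 D_l \;=\; \frac{\tr(\bW_{l:1}^\top\bW_{l:1}\bSigma_W^0)}{\tr(\bW_{l:1}^\top\bW_{l:1}\bSigma_B^0)}\;=\;\frac{\|\bW_{l:1}P_{\calW}\|_F^2}{\|\bW_{l:1}P_{\calB'}\|_F^2},
\end{align*}
so that $D_{l+1}/D_l=\big(\|\bW_{l+1:1}P_{\calW}\|_F^2/\|\bW_{l:1}P_{\calW}\|_F^2\big)\cdot\big(\|\bW_{l:1}P_{\calB'}\|_F^2/\|\bW_{l+1:1}P_{\calB'}\|_F^2\big)$.

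Next I would extract the layer structure. Balancedness (ii) forces compatible SVDs $\bW_l=\bU_l\bSigma\bV_l^\top$ $(l\in[L-1])$ with a \emph{common} $\bSigma$ and $\bV_{l+1}=\bU_l$; by (iii) we may reorder so that $\bSigma=\diag(\sigma_1,\dots,\sigma_{2K},\varepsilon,\dots,\varepsilon)$ with $\sigma_1\ge\cdots\ge\sigma_{2K}$, hence $\bW_{l:1}=\bU_l\bSigma^l\bV_1^\top$. Splitting $\bV_1=[\bV_{1,L}\ \bV_{1,S}]$, $\bU_l=[\bU_{l,L}\ \bU_{l,S}]$, $\bSigma=\diag(\bSigma_L,\bSigma_S)$ into first-$K$ versus last-$(d-K)$ parts, orthogonality of $\bU_l$ gives the Pythagorean split $\|\bW_{l:1}P_{\calW}\|_F^2=a_l+b_l$ with ``main'' part $a_l:=\|\bSigma_S^l\bV_{1,S}^\top P_{\calW}\|_F^2$ and ``leakage'' $b_l:=\|\bSigma_L^l\bV_{1,L}^\top P_{\calW}\|_F^2$, and similarly $\|\bW_{l:1}P_{\calB'}\|_F^2=a_l'+b_l'$. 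Two facts about $\bSigma$ are then needed: (a) $\sigma_i^2\ge(n/2)^{1/L}$ for $i\le K$; (b) $\sigma_i^2\le\sqrt{K}\,\varepsilon^2$ for $K<i\le 2K$. Fact (b) comes from Hoffman--Wielandt applied to the near-balancedness estimate $\|\bW_L^\top\bW_L-\bW_{L-1}\bW_{L-1}^\top\|_F\le\varepsilon^2\sqrt{d-K}$ together with $\rank(\bW_L^\top\bW_L)\le K$; Fact (a) comes from $\bW_{L:1}\bW_{L:1}^\top=n\bm I_K$, the telescoped identity $\bW_{L:1}^\top\bW_{L:1}=\bN^L$ with $\bN:=\bW_1^\top\bW_1$ up to an error $\Delta:=\sigma_1^{2(L-1)}\varepsilon^2\sqrt{d-K}$ in Frobenius norm, and Weyl's inequality, where the bound $\varepsilon^2\sqrt{d-K}\le n^{1/L}/(30L^2)$ in (ii) keeps $\Delta\le n/(15L^2)$. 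A Davis--Kahan-type argument for the top-$K$ eigenspace (with spectral gap $\asymp n$, and $P_{\calB}P_{\calW}=0$) then gives $\|\bV_{1,L}^\top P_{\calW}\|_{\mathrm{op}},\ \|\bV_{1,S}^\top P_{\calB'}\|_{\mathrm{op}}\le 2\Delta/n$.

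Finally I would estimate the two ratios. For the $P_{\calB'}$ factor, Fact (a) gives $a_{l+1}'\ge(n/2)^{1/L}a_l'\ge a_l'$, while the bounds above give $b_l'\le(K{-}1)(2\Delta/n)^2$ and $a_{l+1}'\ge a_0'\ge(K{-}1)/2$, so $\|\bW_{l:1}P_{\calB'}\|_F^2/\|\bW_{l+1:1}P_{\calB'}\|_F^2\le 1+O(L^{-4})$. For the $P_{\calW}$ factor, Fact (b) gives $a_{l+1}\le\|\bSigma_S\|_{\mathrm{op}}^2 a_l\le\sqrt K\,\varepsilon^2 a_l$; the key point is to bound the leakage $b_{l+1}$ not by the wasteful $\sigma_1^2 b_l\asymp n^{1/L}b_l$ but through the boundary relation $\bW_L\bW_{L-1:1}P_{\calW}=0$, written as $\bC_L\bSigma_L^{\,L-2-l}(\bSigma_L^{\,l+1}\bV_{1,L}^\top P_{\calW})=-\,\bC_S\bSigma_S^{\,L-2-l}(\bSigma_S^{\,l+1}\bV_{1,S}^\top P_{\calW})$ with $[\bC_L\ \bC_S]:=\bW_L\bU_{L-1}$: since $\bC^\top\bC$ is within $\varepsilon^2\sqrt{d-K}$ of $\bSigma^2$ (so $\sigma_{\min}(\bC_L)^2\gtrsim n^{1/L}$, $\|\bC_S\|_{\mathrm{op}}^2\le\sqrt K\,\varepsilon^2+\varepsilon^2\sqrt{d-K}$), Facts (a)--(b) give $b_{l+1}\le\tfrac{\|\bC_S\|_{\mathrm{op}}^2}{\sigma_{\min}(\bC_L)^2}\,a_{l+1}\le\tfrac{4}{15L^2}a_{l+1}\le\tfrac{4}{15L^2}\sqrt K\,\varepsilon^2 a_l$ (invoking (ii) again, and $d>2K$ to dominate the $\sqrt{d-K}$-errors). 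Hence $\|\bW_{l+1:1}P_{\calW}\|_F^2/\|\bW_{l:1}P_{\calW}\|_F^2\le(a_{l+1}+b_{l+1})/a_l\le(1+\tfrac{4}{15L^2})\sqrt K\,\varepsilon^2$, and multiplying the two ratios while absorbing lower-order terms (using the remaining bounds in (ii), which also ensure $2(\sqrt K+1)\varepsilon^2\le 1$) yields $D_{l+1}/D_l\le 2(\sqrt K+1)\varepsilon^2$ for all $l=0,\dots,L-2$.

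The step I expect to be the main obstacle is exactly this leakage estimate: the naïve bound on $b_{l+1}$ loses a factor $\sigma_1^2\asymp n^{1/L}$ per layer and would already be fatal for $L=3$, so the argument hinges on recognizing that the ``large-singular-subspace'' part of $\bW_{l:1}P_{\calW}$ must itself be negligible because it is annihilated by the last layer up to the near-balancedness error, and on making this self-referential bound quantitative with the correct powers of $\sigma_{K+1}/\sigma_K<1$; the rest is careful bookkeeping of the constants supplied by the three bounds in hypothesis (ii).
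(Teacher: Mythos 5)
Your proposal is correct and follows essentially the same route as the paper's proof: the same reduction of $D_{l+1}/D_l$ to within-/between-class Frobenius-norm ratios (your projections $P_{\calW},P_{\calB'}$ are exactly $\bm\Delta_W\bm\Delta_W^\top/N$ and $\bm\Delta_B\bm\Delta_B^\top/K$), the same aligned-SVD structure and singular-value bounds extracted from balancedness and the unchanged spectrum, the same crucial observation that the large-singular-subspace ``leakage'' of $\bW_{l:1}$ on the within-class directions is controlled via $\bW_{L:1}\bm\Delta_W=\bm 0$ together with the near-balancedness of the last layer (the paper's Eq.~\eqref{eq:kappa}), and the same Davis--Kahan argument for the between-class factor. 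The only differences are cosmetic bookkeeping of constants.
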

\noindent Note that (iii) depends on \Cref{thm:main}. Based on the above theorem, we can directly verify that DLNs trained by gradient flow (i.e., $\eta \to 0_+$ in \eqref{eq:gd}) with proper initialization exhibit linear progressive collapse.  
\begin{coro}\label{coro:NC}
Consider the setting in \Cref{thm:NC}. Suppose that we employ gradient flow to train Problem \eqref{eq:obj}, and initialize $\mb W_l(0)$ for all $l \in [L-1]$ to be arbitrary $\varepsilon$-scaled orthogonal matrices satisfying \eqref{eq:init} and $\mb W_{L}(0) = \varepsilon[\mb U_L\ \mb 0]$, where $\mb U_L \in \calO^K$ and $\varepsilon \le \left\{\frac{n^{1/2L}}{\sqrt{30}L\sqrt[4]{d-K}},\frac{(n/2)^{1/4L}}{\sqrt[4]{d-K}}, \frac{1}{\sqrt{2(\sqrt{K}+1)}} \right\}$. If an optimal solution $\bm{\Theta}^*$ is found, then \eqref{eq:linear decay} holds.  
\end{coro}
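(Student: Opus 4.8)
The corollary is a verification result: its hypothesis already imposes exactly the bound on $\varepsilon$ required by \Cref{thm:NC}, so the plan is simply to check that the limit point $\mb\Theta^*$ of gradient flow satisfies conditions (i)--(iii) of \Cref{thm:NC}, after which that theorem yields \eqref{eq:linear decay} verbatim. Condition (i), global optimality, is immediate: since $\mb X$ is square and orthogonal we have $\mb X\mb X^\top=\mb X^\top\mb X=\mb I_d$, so the choice $\mb W_1=\cdots=\mb W_{L-1}=\mb I_d$, $\mb W_L=\mb Y\mb X^\top$ gives $\mb W_{L:1}\mb X=\mb Y\mb X^\top\mb X=\mb Y$; hence the global minimum of \eqref{eq:obj} is $0$, and any optimal $\mb\Theta^*$ must satisfy $\mb W_{L:1}^*\mb X=\mb Y$.

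For condition (ii), balancedness, I would invoke the standard conservation law of gradient flow on linear networks, under which $\mb W_{l+1}^\top\mb W_{l+1}-\mb W_l\mb W_l^\top$ is constant along the trajectory for each $l$. For $l\in[L-2]$, the $\varepsilon$-scaled orthogonal initialization \eqref{eq:init} gives $\mb W_l(0)\mb W_l^\top(0)=\mb W_l^\top(0)\mb W_l(0)=\varepsilon^2\mb I_d$ (these agree because the factors are square), so the conserved difference is $\mb 0$ and $\mb W_{l+1}^\top\mb W_{l+1}=\mb W_l\mb W_l^\top$ holds for all time, in particular at $\mb\Theta^*$. For the last pair, $\mb W_L(0)=\varepsilon[\mb U_L\ \mb 0]$ gives $\mb W_L^\top(0)\mb W_L(0)=\varepsilon^2\diag(\mb I_K,\mb 0)$ while $\mb W_{L-1}(0)\mb W_{L-1}^\top(0)=\varepsilon^2\mb I_d$, so their difference equals $-\varepsilon^2\diag(\mb 0,\mb I_{d-K})$ with Frobenius norm exactly $\varepsilon^2\sqrt{d-K}$; by conservation $\|\mb W_L^\top\mb W_L-\mb W_{L-1}\mb W_{L-1}^\top\|_F=\varepsilon^2\sqrt{d-K}$ at $\mb\Theta^*$, which matches the required inequality.

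For condition (iii), the unchanged spectrum, I would apply \Cref{thm:main}, Case 2, with $r=d_y=K$ and $d_x=d$: the hypothesis $d>2K$ is precisely $m:=d_x-2d_y>0$, and since \eqref{eq:obj} carries no weight decay we have $\lambda=0$, so \eqref{eq:rho 2} collapses to $\rho(t)\equiv\varepsilon$, independently of the step size and hence also in the gradient-flow limit. Therefore, for every $l\in[L-1]$ and all $t$, the block decomposition \eqref{eq:weight_structures} shows that $\mb W_l(t)$ carries the singular value $\varepsilon$ with multiplicity at least $m=d-2K$, and passing to the limit the same holds for $\mb W_l^*$. The balancedness established above forces $\mb W_1^*,\dots,\mb W_{L-1}^*$ to share identical singular values, so the $\varepsilon$-block can be recorded by a single index set $\calA$ with $|\calA|=d-2K$ valid for all $l\in[L-1]$ at once. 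With (i)--(iii) in hand, \Cref{thm:NC} gives $D_{l+1}/D_l\le 2(\sqrt K+1)\varepsilon^2$ for all $l=0,\dots,L-2$.

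The main obstacle I anticipate is twofold: transferring the discrete-time conclusions of \Cref{thm:main} to the continuous gradient flow used here, and producing one index set $\calA$ that works for every layer simultaneously. The first is mild because, with $\lambda=0$, both $\rho(t)\equiv\varepsilon$ and the block structure \eqref{eq:weight_structures} are step-size-independent and stable under the $\eta\to 0_+$ limit (equivalently, one re-derives them directly for the flow from \eqref{eq:gd}). The second is where the conservation law does double duty: it certifies (ii) and, at the same time, synchronizes the spectra of $\mb W_1^*,\dots,\mb W_{L-1}^*$ so that the common $\calA$ in (iii) exists — without balancedness the $\varepsilon$-blocks of different layers could in principle sit at different sorted positions.
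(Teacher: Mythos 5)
Your proof is correct and follows exactly the route the paper intends: the paper itself gives no written proof of this corollary (it only remarks that one can ``directly verify'' conditions (i)--(iii) of \Cref{thm:NC}, using \Cref{thm:main} for (iii)), and your argument supplies precisely that verification --- (i) from attaining the zero global minimum, (ii) from the gradient-flow conservation of $\bW_{l+1}^\top\bW_{l+1}-\bW_l\bW_l^\top$ together with the stated initialization, and (iii) from \textbf{Case 2} of \Cref{thm:main} with $\lambda=0$ so that $\rho(t)\equiv\varepsilon$. Your observation that balancedness also synchronizes the spectra across layers, so that a single index set $\calA$ works for all $l\in[L-1]$, is a detail the paper glosses over but which is genuinely needed for condition (iii) as stated.
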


\vspace{-0.1in}
\paragraph{Experimental Results.} 

We conduct numerical experiments to verify our theoretical results. In the multi-class classification problem, we consider $K=5$ classes and $n=10$ samples per class, and randomly generate the input data $\mb X \in \R^{d\times N}$ with $d = N = 50$ such that $\mb X \mb X^\top = \mb I$. We draw the network weights from random $\varepsilon$-scale orthogonal matrices with $\varepsilon = 0.5$, and vary the depth $L \in \{6, 8, 10\}$ as well as activation $\sigma \in \{\mbox{Linear}, \mbox{ReLU}\}$. The results are shown in \Cref{fig:grid_collapse}. First, we can see that the linear progressive decay phenomenon persists across networks of different depths and even in nonlinear networks. Second, as suggested by \eqref{eq:D measure}, the decay rate shown in \Cref{fig:grid_collapse} is \textit{independent} of the depth $L$, suggesting that deeper networks yield greater feature separation at the penultimate layer. Additional experimental results can be found in \Cref{app:addexp}.



\section{Conclusion}\label{sec:conclusion}

This paper offers a comprehensive analysis of the law of parsimony in gradient descent for learning DLNs, contributing to the ongoing pursuit of more efficient and effective deep-learning models. By uncovering the mechanisms that drive parsimonious solutions, we hope to inspire future research and the development of advanced techniques that harness the power of simplicity in deep learning. Ultimately, our goal is to bridge the gap between theory and practice, enabling practitioners to design and train deep learning models with improved efficiency and effectiveness. Finally, we conclude by providing a survey of related works and subsequently discuss the connections and distinctions between our results and the existing literature.  


\paragraph{Linear networks.} Due to their relative simplicity, deep linear networks are widely used as an alternative approach for investigating the optimization, generalization, and representation characteristics of non-linear networks. Regarding optimization properties, previous works such as \cite{kawaguchi2016deep,lu2017depth} have studied the optimization landscape of deep linear networks. Some recent works \cite{arora2018convergence,arora2018optimization,eftekhari2020training} have established convergence guarantees of gradient descent for training deep linear networks. Many seminal works are devoted to explaining the generalization ability of deep networks via different approaches, such as sharpness \cite{neyshabur2017exploring}, neural tangent kernel \cite{jacot2018neural,vyas2022limitations}, and implicit bias \cite{gunasekar2018implicit,huh2021low,valle2018deep}. In regards to understanding deep representations, recent seminal works \cite{papyan2020prevalence,fang2021exploring,
zhu2021geometric} studied an intriguing phenomenon termed neural collapse, which is prevalent across different network architectures, datasets, and training losses during the terminal phase of training. Notably, many researchers have provided theoretical explanations for this phenomenon by assuming the unconstrained feature model \cite{yaras2022neural,zhu2021geometric,han2021neural,zhou2022optimization,zhou2022are,wang2022linear,kothapalli2023neural}. This assumption simplifies over-parameterized nonlinear networks into two-layer linear networks, enabling a deeper understanding of the underlying mechanisms.    

\paragraph{Implicit bias.} In recent years, {\em implicit bias} (a.k.a. {\em implicit regularization}) has played an important role in understanding the phenomenon that deep neural networks in the over-parameterized setting often generalize well even when trained without any explicit regularization. Numerous studies have been dedicated to unraveling the mysteries of implicit bias, investigating it from both theoretical and empirical perspectives. In particular, {\em simplicity bias}, {\em low-rank bias}, and {\em spectral bias} have been extensively explored in the literature. Simplicity bias refers to the tendency of gradient descent (GD) for training deep networks to learn only the simplest features over other useful
but more complex features \cite{shah2020pitfalls,gissin2019implicit,morwani2023simplicity}, e.g., it has been shown \cite{soudry2018implicit,nacson2019convergence,gunasekar2018implicit} that GD favors max-margin solutions in linear models for classification problems. Low-rank bias refers to the notion that deep networks trained by GD are biased toward low-rank solutions \cite{gunasekar2017implicit,arora2019implicit,huh2021low}. For instance, \cite{gunasekar2017implicit} showed that GD for solving matrix factorization is biased towards minimum nuclear norm solutions. The line of works studied the robustness with overparameterization with implicit bias \cite{Hu2020Simple,you2020robust,liu2022robust}. The works \cite{gidel2019implicit,arora2019implicit} demonstrated that adding depth to matrix factorization enhances an implicit tendency towards minimum nuclear norm solutions, leading to more accurate recovery. Recently, \cite{huh2021low} provided substantial empirical observations and concluded that deep networks exhibit an inductive bias towards solutions with lower effective ranks. As for spectral bias, it describes the phenomenon that the learning dynamics of deep networks tends to find low-frequency functions. \cite{cao2019towards,rahaman2019spectral} studied the spectral bias of deep networks using tools from Fourier analysis. 


\paragraph{Comparison to existing works.} Here, we would like to highlight the differences and connections between existing work and our own. First, most of the existing works analyze the implicit bias by investigating the dynamics of gradient flow \cite{arora2019implicit,gissin2019implicit,min2021explicit}, whereas our work directly focuses on the dynamics of GD by utilizing the low-dimensional structure of the cross-correlation matrix $\mb Y \mb X^\top$. Our main proof idea is to show that the gradient updates across all iterates are low rank and share a common nullspace throughout the entire training process. Since the weights are initialized orthogonally, the right singular vectors can be chosen arbitrarily to align with the nullspace of
gradient updates so that they remain fixed. Second, it is worth noting that we can incorporate weight decay regularization into our analysis. Unlike previous work on implicit bias \cite{min2022convergence,gissin2019implicit,arora2019implicit,vardi2021implicit} which did not explicitly consider weight decay, we carefully examine the effect of this regularization technique. In particular, when weight decay regularizer $\lambda > 0$ is applied, we observe that the singular value $\rho(t)$ tends to zero asymptotically as $t$ goes to infinity. This finding indicates that gradient descent with weight decay is also biased towards finding low-rank solutions. Third, compared to the dynamics analysis of SVD of the product matrix (i.e., $\mb W_{L:1}$) in \cite{arora2019implicit}, which characterizes its singular values and vectors through partial differential equations, \Cref{thm:main} directly characterizes the dynamics of SVD of each weight matrix (i.e., $\mb W_l$) via revealing the structures of singular values and vectors. More precisely, our results not only accurately predict the singular values of each weight matrix during the training process, as shown in \eqref{eq:rho 1} and \eqref{eq:rho 2}, but also demonstrate that the subspace corresponding to the singular values in $\rho(t)$ remains unchanged throughout the iterations. 

\section*{Acknowledgment} 
CY and QQ acknowledge support from U-M START \& PODS grants, NSF CAREER CCF-2143904, NSF CCF-2212066, and NSF CCF-2212326. QQ and PW also acknowledge support from ONR N00014-22-1-2529, an AWS AI Award, and a gift grant from KLA. PW and LB acknowledge support from DoE award DE-SC0022186, ARO YIP W911NF1910027, and NSF CAREER CCF-1845076. ZZ acknowledges support from NSF grant CCF-2240708. WH acknowledges support from the Google Research Scholar Program.

{\small 
\bibliographystyle{unsrt}
\bibliography{biblio/refs, biblio/DNN, biblio/neural_collapse}
}

\newpage

\appendix

\newpage
\onecolumn
\par\noindent\rule{\textwidth}{1pt}
\begin{center}
{\Large \bf Appendix}
\end{center}
\vspace{-0.1in}
\par\noindent\rule{\textwidth}{1pt}

\noindent In the appendix, 
we present additional experiments in \Cref{app:addexp}, and provide complete proofs for the technical results of Sections \ref{sec:results} and \ref{sec:appli} in \Cref{app:proof3} and \Cref{app:proof4} respectively. Before we proceed, we introduce some further notation. For simplicity, let $\ell(t) = \ell(\bm{\Theta}(t))$.  We use $\sigma_{\max}(\bA)$ (or $\|\bA\|$), $\sigma_i(\bA)$, and $\sigma_{\min}(\bA)$ to denote the largest, the $i$-th largest, and the smallest singular values of a matrix $\bA$, respectively. Given weight matrices $\bW_1,\dots,\bW_L$, we denote $\bW_{j:i} := \bW_j\bW_{j-1}\dots\bW_i$ if $1 \le i \le j \le L$, and $\bW_{0:1} := \bm{I}$ and $\bW_{L+1:L}=\bm{I}$. We denote the Kronecker product by $\otimes$. 

\section{Additional Experiments}\label{app:addexp}
In this section, we present additional experimental results to supplement those presented in the main paper. We note that all experiments in this work were carried out on a single NVIDIA Tesla V100 GPU. 

\begin{figure}[h!]
    \centering
    \includegraphics[width=0.65\linewidth]{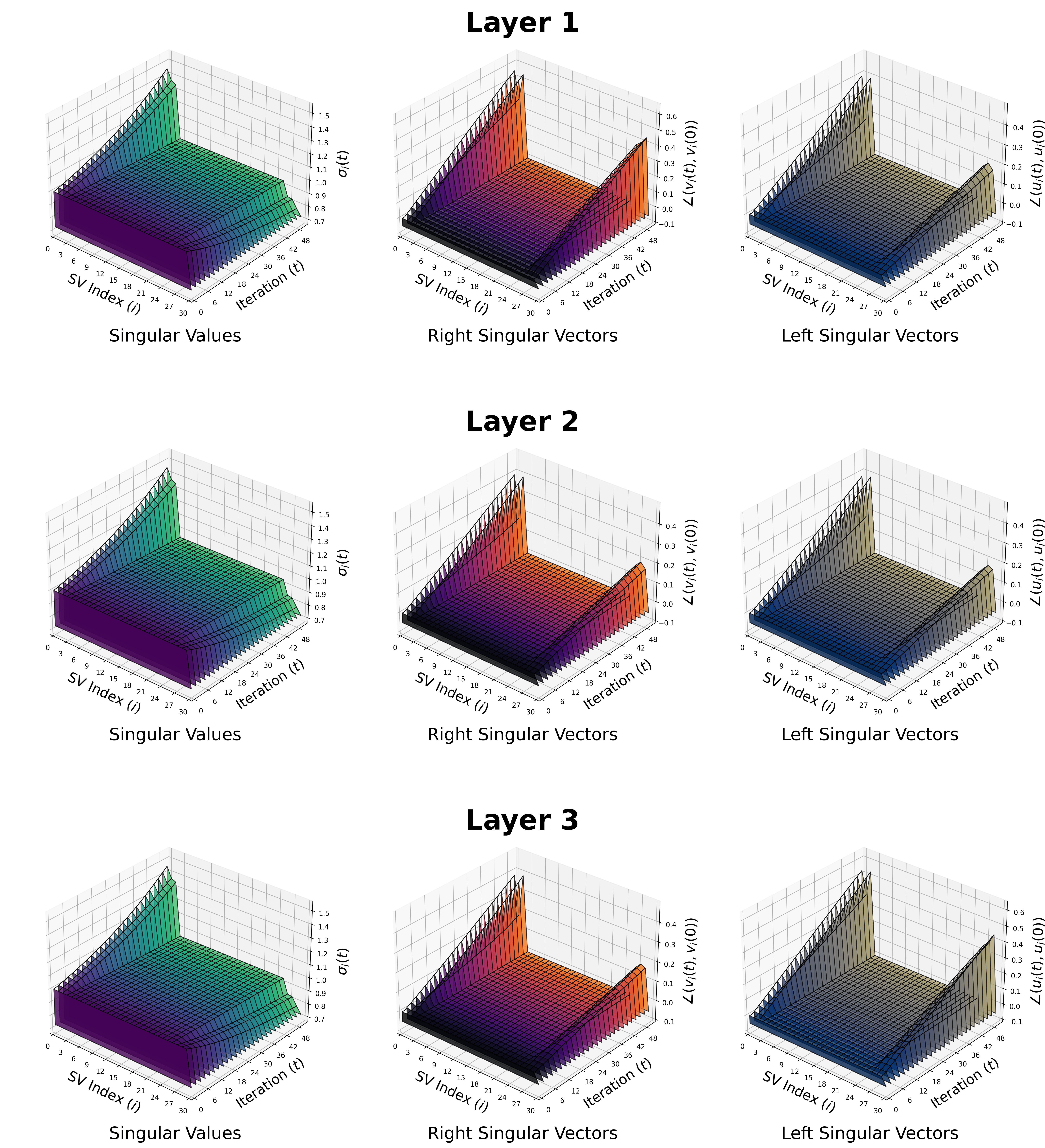}
    \caption{\textbf{Evolution of SVD of weight matrices for \textit{all} layers.} We visualize the SVD of the weight matrix for each layer of an $L=3$ layer deep linear network with $d_x = d_y = 30$ and $r=3$ (\textbf{Case 1}) throughout GD without weight decay. \textit{Left}: Magnitude  of the $i$-th singular value $\sigma_i(t)$ at iteration $t$. \textit{Middle}: Angle $\angle(\vv_i(t), \vv_i(0))$ between the $i$-th right singular vector at iteration $t$ and initialization. \textit{Right}: Angle $\angle(\vu_i(t), \vu_i(0))$ between the $i$-th left singular vector at iteration $t$ and initialization.
    }
    \label{fig:thm_sup}
\end{figure}

\begin{figure}[t]
    \centering
    \vspace{-3em}
    \includegraphics[width=1.0\linewidth]{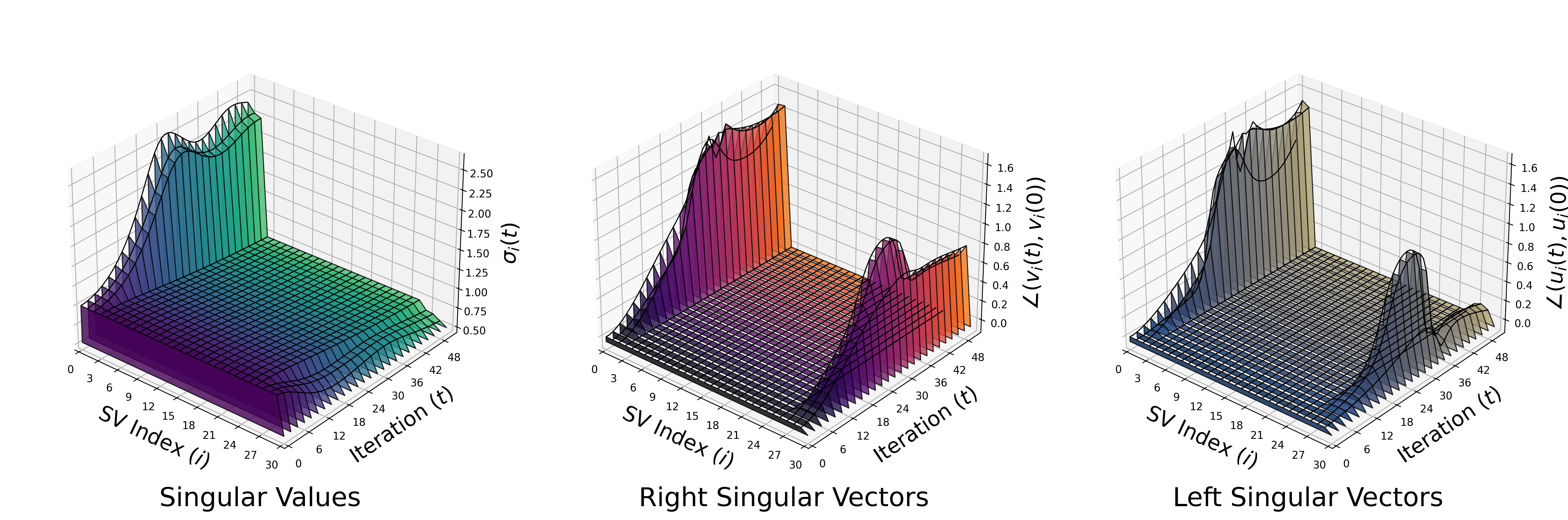}
    \caption{\textbf{Evolution of SVD of weight matrices \textit{with momentum}.} We visualize the SVD of the first layer weight matrix of an $L=3$ layer deep linear network with $d_x = d_y = 30$ and $r=3$ (\textbf{Case 1}) throughout GD \textit{with momentum} \eqref{eq:momentum}. \textit{Left}: Magnitude  of the $i$-th singular value $\sigma_i(t)$ at iteration $t$. \textit{Middle}: Angle $\angle(\vv_i(t), \vv_i(0))$ between the $i$-th right singular vector at iteration $t$ and initialization. \textit{Right}: Angle $\angle(\vu_i(t), \vu_i(0))$ between the $i$-th left singular vector at iteration $t$ and initialization.
    }
    \label{fig:momentum}
\end{figure}

First, we extend the visualization in \Cref{fig:traj_matrices} to \textit{all} layers, rather than simply the first layer -- the results are shown in \Cref{fig:thm_sup}. We see that the same parsimonious structures persist across all layers throughout GD, as implied by \Cref{thm:main}.

Next, we demonstrate that the law of parsimony generalizes to other variants of GD beyond weight decay, such as GD \textit{with momentum}, i.e., we update all weights for $t = 2,3,\dots$ as 
\begin{align}\label{eq:momentum}
    \bm{W}_l(t) = \bm{W}_l(t-1) - \eta \nabla_{\mW_l} \ell(\mTheta(t-1)) + \mu (\bm W_l(t-1) - \bm W_l(t-2))
\end{align}
for all $l \in [L]$, where $0 \leq \mu < 1$ is the momentum parameter and $\eta>0$ is the learning rate. We see in \Cref{fig:momentum} that introducing the momentum term still results in a low-dimensional trajectory along minimal singular subspaces. Using a similar approach to the analysis presented in this work, it should be fairly straightforward to extend our results to GD with momentum, but this is left as future work.

\begin{figure}[h!]
    \centering
    \includegraphics[width=1.0\linewidth]{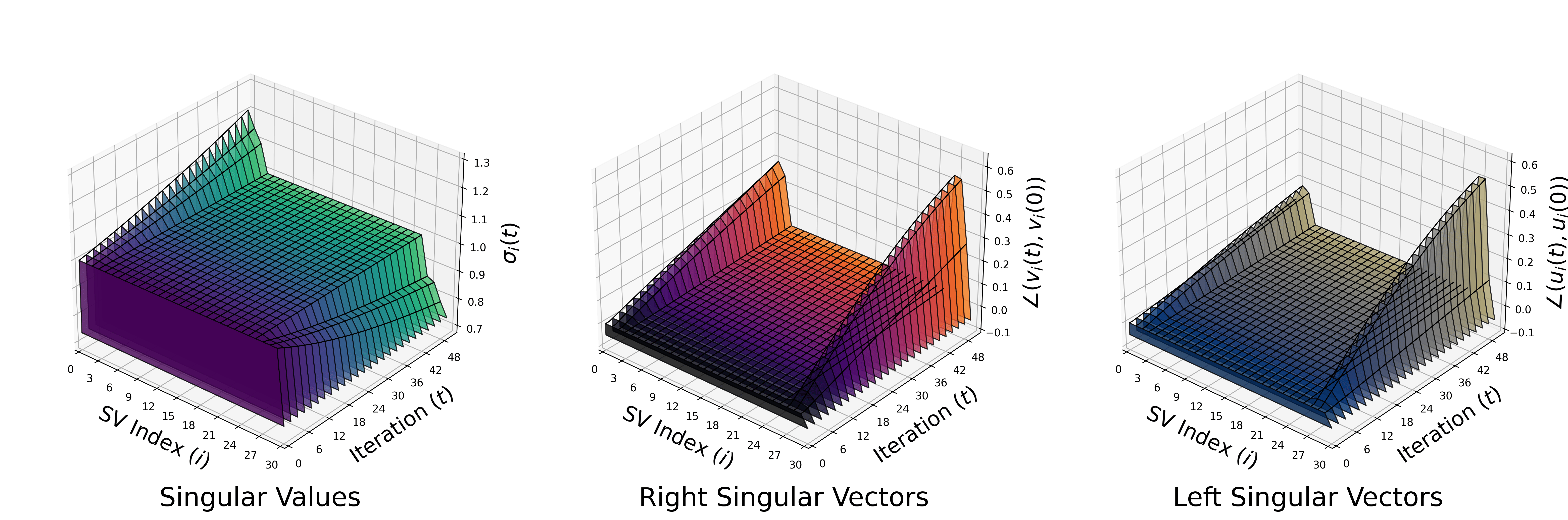}
    \caption{\textbf{Evolution of SVD of weight matrices \textit{without whitened data}.} We visualize the SVD of the first layer weight matrix of an $L=3$ layer deep linear network with $d_x = 30$, $d_y = 3$, and $N = 300$ throughout GD without weight decay. \textit{Left}: Magnitude  of the $i$-th singular value $\sigma_i(t)$ at iteration $t$. \textit{Middle}: Angle $\angle(\vv_i(t), \vv_i(0))$ between the $i$-th right singular vector at iteration $t$ and initialization. \textit{Right}: Angle $\angle(\vu_i(t), \vu_i(0))$ between the $i$-th left singular vector at iteration $t$ and initialization.
    }
    \label{fig:nonwhitened}
\end{figure}

Besides the low-dimensional structure of $\bm Y \bm X^\top$, the other data assumption made in this work is that the input data is whitened, i.e., $\bm X \bm X^\top = \bm I$. Here, we demonstrate that the law of parsimony continues to hold \textit{approximately} when $\bm X$ is not whitened or preprocessed in any way -- in particular, we draw the entries of $\bm X \in \R^{d_x \times N}$ i.i.d. from the standard normal distribution. The results are shown in \Cref{fig:nonwhitened}. We can see that most of the singular value remain unchanged from initialization as with before, and the corresponding singular subspaces evolve very little throughout optimization.

To conclude this section, we present several extensions to the experimental results in \Cref{sec:separation}. First, to investigate the effect of initialization scale $\varepsilon$ on the rate of linear progressive collapse, we fix the depth $L=4$ and vary $\varepsilon \in \{0.5, 0.25, 0.125\}$, keeping all other aspects of the setup the same as in \Cref{sec:separation}. The results are shown in \Cref{fig:init_scale_collapse}. As implied by \eqref{eq:linear decay}, we see that decreasing the initialization scale $\varepsilon$ leads to a steeper decay in the measure $D_l$ \eqref{eq:D measure} in both linear and nonlinear networks, demonstrating that there is indeed a necessary dependence on $\varepsilon$ in the upper bound \eqref{eq:linear decay}. Therefore, the initialization scale should be chosen carefully when training deep networks in practice, particularly due to the real implications of progressive collapse in transfer learning \cite{li2022principled}.

\begin{figure}[t]
    \centering
    \includegraphics[width=0.8\linewidth]{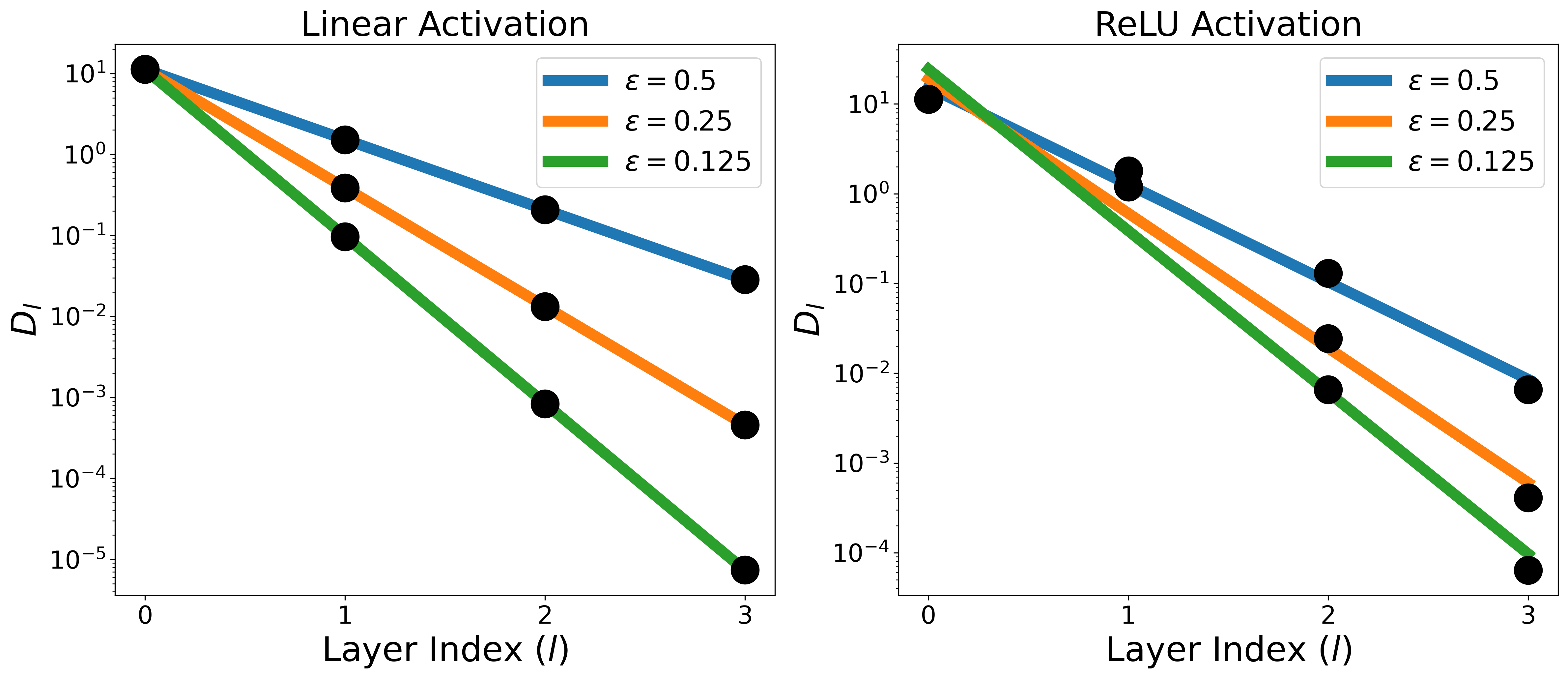}
    \caption{\textbf{Linear decay of feature separation in trained deep networks with varying initialization scale $\varepsilon$.} Varying the initialization scale $\varepsilon$ for networks with both linear and ReLU activations for an $L=4$ deep network, we plot the separation measure $D_l$ in \eqref{eq:D measure} at each layer $l$ up to the penultimate layer ($L-1$), along with the best log-linear fit line.}
    \label{fig:init_scale_collapse}
\end{figure}

Throughout this work, we have assumed that the network weights are initialized \textit{orthogonally} for the sake of analysis -- we now empirically verify that the linear progressive collapse phenomenon is \textit{independent} of the type of initialization, i.e., the distribution from which the weight matrices are drawn. With the same experimental set-up as in \Cref{sec:separation}, we fix the depth $L=8$ and vary the initialization type among orthogonal, normal, and uniform initializations. The results are shown in \Cref{fig:init_type_collapse}. To keep the initialization scale comparable between different initialization schemes, for the normally distributed weights, we draw the entries independently from $\mathcal{N}\left(0, \frac{\varepsilon^2}{d}\right)$, and for the uniformly distributed weights, we draw the entries independently from $\mathcal{U}\left(-\frac{3\varepsilon}{d}, \frac{3\varepsilon}{d}\right)$. This guarantees that drawing $\mW \in \R^{d \times d}$ from any distribution gives $\E[\|\mW\|_F^2] = d\cdot \varepsilon^2$. As a result, we see that progressive collapse occurs in both linear and nonlinear networks, and in fact decays at the same rate, regardless of the kind of initialization used. 

\begin{figure}[h!]
    \begin{center}
    \includegraphics[width=1.0\linewidth]{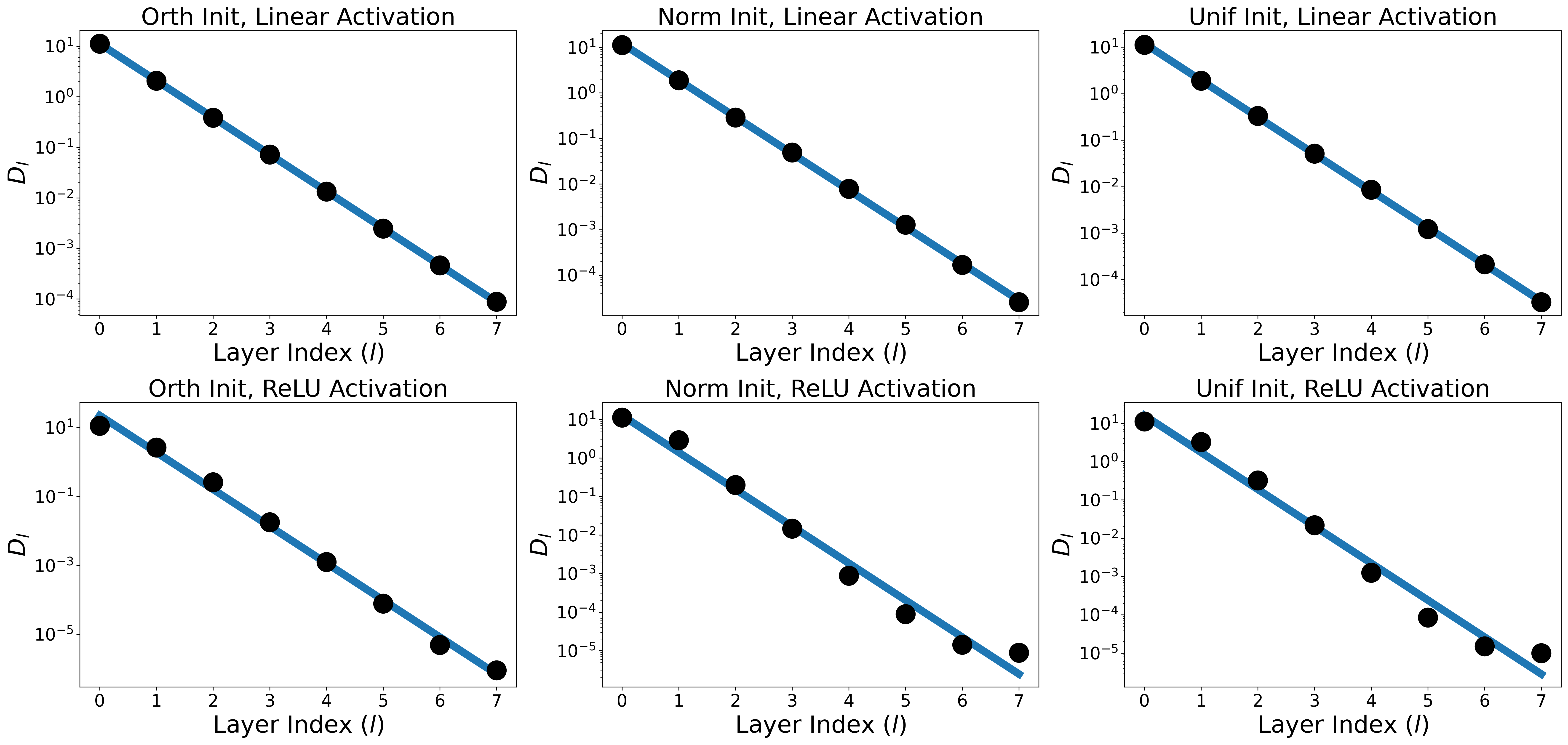}
    \end{center}
    \caption{\textbf{Linear decay of feature separation in trained deep networks with different initialization types.} Varying the initialization type (orthogonal vs normal vs uniform) and activation type (Linear vs ReLU) for an $L=8$ deep network, we plot the separation measure $D_l$ in \eqref{eq:D measure} at each layer $l$ up to the penultimate layer ($L-1$), along with the best log-linear fit line. }
    \label{fig:init_type_collapse}
\end{figure}

\section{Proofs in \Cref{sec:results}}\label{app:proof3}
In this section, we work towards proving \Cref{thm:main} by establishing Lemmas \ref{lem:r} and \ref{lem:dy}, which imply the conclusions of \textbf{Case 1} and \textbf{Case 2} respectively. Before proceeding, we note that by the assumption that the data $\bm{X} \in \R^{d_x \times n}$ is whitened, i.e., $\bm{X}\bm{X}^\top = \bm{I}_{d_x}$, we have that ${\bm \Gamma}(t)$ in \eqref{eq:gd} takes the form
\begin{equation}\label{eq:gam}
    {\bm \Gamma}(t) = \bW_{L:1}(t) - \bm{\Phi},\ \text{where}\  \bm{\Phi} = \bm{Y}\bm{X}^\top \in \R^{d_y \times d_x}
\end{equation}
In addition, we note that all statements quantified by $i$ in this section implicity hold for all $i \in [m]$ (as defined in \Cref{thm:main}) for the sake of notational brevity. 



\begin{lemma}\label{lem:r}
Under the setting of \Cref{thm:main} (\textbf{Case 1}), there exist orthonormal sets $\{\bm u_i^{(l)}\}_{i=1}^m \subset \R^d$ and $\{\bm v_i^{(l)}\}_{i=1}^m \subset \R^d$ for $l \in [L]$ satisfying $\bm v_i^{(l+1)} = \bm u_i^{(l)}$ for all $l \in [L-1]$ such that the following hold for all $t \geq 0$: 
\begin{alignat*}{2}
    \mathcal{A}(t)&: \bm W_{l}(t) \bm v_i^{(l)} = \rho(t) \bm u_i^{(l)} \quad &&\forall l \in [L], \\
    \mathcal{B}(t)&: \bm W_{l}^\top(t) \bm u_i^{(l)} = \rho(t) \bm v_i^{(l)} \quad &&\forall l \in [L], \\
    \mathcal{C}(t)&: \bm \Phi^\top \bm W_{L:l+1}(t) \bm u_i^{(l)} = \bm 0 \quad && \forall l \in [L], \\
    \mathcal{D}(t)&: \bm \Phi \bm W^\top_{l-1:1}(t)\bm v_i^{(l)} = \bm 0 \quad && \forall l \in [L],
\end{alignat*}
where $\rho(t) = \rho(t-1)\left(1 - \eta\lambda -\eta \cdot \rho(t-1)^{2(L-1)}\right)$ for all $t \geq 1$ with $\rho(0) = \varepsilon$.
\end{lemma}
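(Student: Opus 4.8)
The plan is to prove the four identity families $\mathcal A$--$\mathcal D$ \emph{simultaneously by induction on $t$}, with the orthonormal sets $\{\bm u_i^{(l)}\}$, $\{\bm v_i^{(l)}\}$ chosen once at $t=0$ and never updated — this fixed choice is exactly what produces the invariant subspaces. Two reductions are used throughout: whitening turns the residual factor into $\bm\Gamma(t)=\bm W_{L:1}(t)-\bm\Phi$ as in \eqref{eq:gam}, and $\varepsilon$-scaled orthogonal initialization means $\bm W_l(0)=\varepsilon\bm Q_l$ for some $\bm Q_l\in\calO^d$.

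\textbf{Base case ($t=0$).} Let $\bm P:=\bm Q_L\cdots\bm Q_1$ be the end-to-end orthogonal factor, so $\bm W_{L:1}(0)=\varepsilon^L\bm P$. First I would pick an orthonormal set $\{\bm u_i^{(L)}\}_{i=1}^m$ inside the subspace $\Null(\bm\Phi^\top)\cap\bm P\,\Null(\bm\Phi)$; this is possible because $\dim\Null(\bm\Phi^\top)=\dim\bm P\,\Null(\bm\Phi)=d-r$, so the intersection has dimension at least $2(d-r)-d=d-2r=m>0$. Then I would set $\bm v_i^{(l)}:=\bm Q_l^\top\bm Q_{l+1}^\top\cdots\bm Q_L^\top\bm u_i^{(L)}$ and $\bm u_i^{(l)}:=\bm v_i^{(l+1)}$ for $l\in[L-1]$; orthonormality is preserved since only orthogonal matrices are applied, and the chaining requirement holds by construction. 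With these choices, $\mathcal A(0)$ and $\mathcal B(0)$ follow from $\bm W_l(0)=\varepsilon\bm Q_l$ and $\bm Q_l\bm v_i^{(l)}=\bm u_i^{(l)}$. For $\mathcal C(0)$ and $\mathcal D(0)$, the telescoping of the $\bm Q_l$'s collapses $\bm W_{L:l+1}(0)\bm u_i^{(l)}=\varepsilon^{L-l}\bm u_i^{(L)}$ and $\bm W_{l-1:1}^\top(0)\bm v_i^{(l)}=\varepsilon^{l-1}\bm P^\top\bm u_i^{(L)}$, so the two conditions reduce to $\bm\Phi^\top\bm u_i^{(L)}=\bm 0$ and $\bm\Phi\bm P^\top\bm u_i^{(L)}=\bm 0$, which hold by the choice of $\bm u_i^{(L)}$.

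\textbf{Inductive step.} Assuming $\mathcal A(t)$--$\mathcal D(t)$, I would substitute the GD update \eqref{eq:gd} into $\bm W_l(t+1)\bm v_i^{(l)}$. The weight-decay term gives $(1-\eta\lambda)\rho(t)\bm u_i^{(l)}$ by $\mathcal A(t)$. In the gradient term, iterating $\mathcal B(t)$ with the chaining $\bm v_i^{(l)}=\bm u_i^{(l-1)}$ gives $\bm W_{l-1:1}^\top(t)\bm v_i^{(l)}=\rho(t)^{l-1}\bm v_i^{(1)}$; then $\bm\Gamma(t)\bm v_i^{(1)}=\bm W_{L:1}(t)\bm v_i^{(1)}-\bm\Phi\bm v_i^{(1)}=\rho(t)^L\bm u_i^{(L)}-\bm 0$ by iterating $\mathcal A(t)$ and using $\mathcal D(t)$; finally $\bm W_{L:l+1}^\top(t)\bm u_i^{(L)}=\rho(t)^{L-l}\bm u_i^{(l)}$ by iterating $\mathcal B(t)$. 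Collecting powers of $\rho(t)$ yields $\bm W_l(t+1)\bm v_i^{(l)}=\big[(1-\eta\lambda)\rho(t)-\eta\rho(t)^{2L-1}\big]\bm u_i^{(l)}=\rho(t)\big(1-\eta\lambda-\eta\rho(t)^{2(L-1)}\big)\bm u_i^{(l)}=\rho(t+1)\bm u_i^{(l)}$, i.e.\ $\mathcal A(t+1)$ with the recursion \eqref{eq:rho 1}. A symmetric computation on $\bm W_l^\top(t+1)\bm u_i^{(l)}$, now invoking $\mathcal A(t)$, $\mathcal C(t)$ (to kill $\bm\Phi^\top\bm u_i^{(L)}$), and $\mathcal B(t)$, gives $\mathcal B(t+1)$ with the same $\rho(t+1)$. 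Finally $\mathcal C(t+1)$ and $\mathcal D(t+1)$ follow from the freshly established $\mathcal A(t+1)$ and $\mathcal B(t+1)$: chaining them gives $\bm W_{L:l+1}(t+1)\bm u_i^{(l)}=\rho(t+1)^{L-l}\bm u_i^{(L)}$ and $\bm W_{l-1:1}^\top(t+1)\bm v_i^{(l)}=\rho(t+1)^{l-1}\bm v_i^{(1)}$, and then the \emph{time-independent} facts $\bm\Phi^\top\bm u_i^{(L)}=\bm 0$ and $\bm\Phi\bm v_i^{(1)}=\bm 0$ (secured in the base case) finish the induction.

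\textbf{Expected main obstacle.} The delicate part is the base-case construction: one must nest the bases so that $\mathcal C(0)$ and $\mathcal D(0)$ hold \emph{at the same time}, and this is precisely where the hypothesis $m=d-2r>0$ enters, through the dimension count for $\Null(\bm\Phi^\top)\cap\bm P\,\Null(\bm\Phi)$. The inductive step is otherwise routine bookkeeping; the only thing to keep an eye on is that each occurrence of $\bm\Gamma(t)$ is absorbed by $\mathcal C(t)$ or $\mathcal D(t)$ so that no $\bm\Phi$-dependent term survives, leaving a scalar multiple of the correct singular vector with total exponent $2L-1$ on $\rho(t)$, which is exactly $\rho(t)\cdot\rho(t)^{2(L-1)}$ as in \eqref{eq:rho 1}.
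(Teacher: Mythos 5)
Your proof is correct and follows essentially the same double induction (over layers for the base case, then over $t$) as the paper; the only cosmetic difference is that you anchor the base-case subspace at the top layer via $\Null(\bm\Phi^\top)\cap\bm P\,\Null(\bm\Phi)$ and transport it downward, whereas the paper anchors at layer $1$ via $\calN(\bm W_{L:2}^\top(0)\bm\Phi)\cap\calN(\bm\Phi^\top\bm W_{L:1}(0))$ and propagates upward — these are the same $m$-dimensional subspace carried by the orthogonal initialization, with the identical dimension count $2(d-r)-d=m$. The inductive step, including the bookkeeping that yields the exponent $2L-1$ on $\rho(t)$ and the reduction of $\mathcal C(t+1)$, $\mathcal D(t+1)$ to the time-independent facts $\bm\Phi^\top\bm u_i^{(L)}=\bm 0$ and $\bm\Phi\bm v_i^{(1)}=\bm 0$, matches the paper's argument.
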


\begin{proof}
Define $\bm \Psi := \bm W^\top_{L:2}(0)\bm \Phi$. Since the rank of $\bm \Phi$ is at most $r$, we have that the rank of $\bm \Psi \in \R^{d \times d}$ is at most $r$, which implies that $\mathrm{dim} \,\calN\left(\bm \Psi\right) = \mathrm{dim}\,\calN\left(\bm \Psi^\top \right) \ge d - r$. We define the subspace
\begin{align*}
    \calS := \calN\left(\bm \Psi\right)\cap \calN\left(\bm \Psi^\top \bm{W}_1(0)\right) \subset \R^d.
\end{align*}
Since $\bm{W}_1(0) \in \R^{d\times d}$ is nonsingular, we have
\begin{align*}
    \mathrm{dim}(\calS) \ge 2(d-r) - d = m. 
\end{align*}
Let $\{\bm{v}_i^{(1)}\}_{i=1}^m$ denote an orthonormal basis for $\calS$ and set $\bm{u}_i^{(1)} := \bW_1(0)\bv_i^{(1)}/\varepsilon$, where $\varepsilon > 0$ is the constant in \eqref{eq:init} -- since $\mW_1(0)$ is orthogonal, $\{\bm u_i^{(1)}\}_{i=1}^m$ is also an orthonormal set. Then we trivially have $\bW_1(0)\bv_i^{(1)} = \varepsilon\bm{u}_i^{(1)}$, which, together with \eqref{eq:init}, implies $\bm W_{1}^\top(0) \bm u_i^{(1)} = \varepsilon \bm v_i^{(1)}$. It follows from $\bm{v}_i^{(1)} \in \calS$ that $\bm \Psi \bm{v}_i^{(1)} = \bm{0}$ and $\bm \Psi^\top \bW_1(0) \bm{v}_i^{(1)} = \bm{0}$, which is equivalent to $\bm W^\top_{L:2}(0)\bm \Phi \bm v_i^{(1)} = \bm 0$ and $\bm\Phi^\top \bm W_{L:2}(0) \bm W_1(0) \bm v_i^{(1)} = \varepsilon \bm\Phi^\top \bm W_{L:2}(0) \bm u_i^{(1)} = \bm 0$ respectively. Since $\mW_{L:2}^\top(0)$ is full column rank, we further have that $\bm \Phi \bm v_i^{(1)} = \bm 0$.

Now let $\mathcal E(l)$ be the event that we have orthonormal sets $\{\bm u_i^{(l)}\}_{i=1}^m$ and $\{\bm v_i^{(l)}\}_{i=1}^m$ satisfying $\bm{W}_{l}(0) \bm v_i^{(l)} = \varepsilon \bm u_i^{(l)}$, $\bm W_{l}^\top(0) \bm u_i^{(l)} = \varepsilon \bm v_i^{(l)}$, $\bm \Phi^\top \bm W_{L:l+1}(0) \bm u_i^{(l)} = \bm 0$, and $\bm \Phi \bW_{l-1:1}^\top(0) \bm v_i^{(l)} = \bm 0$. From the above arguments, we have that $\mathcal E(1)$ holds -- now suppose $\mathcal E(k)$ holds for some $1 \le k < L$. Set $\bm v_i^{(k+1)} := \bm u_i^{(k)}$ and $\bm u_i^{(k+1)} := \bm W_{k+1}(0) \bm v_i^{(k+1)}/\varepsilon$. This, together with \eqref{eq:init}, implies that $\bm W_{k+1}(0) \bm v_i^{(k+1)} = \varepsilon \bm u_i^{(k+1)}$ and $\bm W_{k+1}^\top(0) \bm u_i^{(k+1)} = \varepsilon \bm v_i^{(k+1)}$. Moreover, we have 
\begin{align*}
    \bm \Phi^\top \bm W_{L:(k+1)+1}(0) \bm u_i^{(k+1)} &= \bm \Phi^\top\bm W_{L:k+1}(0) \bm W_{k+1}^\top(0) \bm u_i^{(k+1)} / \varepsilon^2 \\
    &= \bm \Phi^\top\bm W_{L:k+1}(0) \bm v_i^{(k+1)} / \varepsilon \\
    &= \bm \Phi^\top\bm W_{L:k+1}(0) \bm u_i^{(k)} / \varepsilon  = \bm 0,
\end{align*}
where the first two equalities follow from \eqref{eq:init} and  $\bm u_i^{(k+1)} = \bm W_{k+1}(0) \bm v_i^{(k+1)}/\varepsilon$, and the last equality is due to $\bm v_i^{(k+1)} = \bm u_i^{(k)}$. Similarly, we have
\begin{align*}
    \bm \Phi \bm W_{(k+1)-1:1}^\top(0) \bm v_i^{(k+1)} & = \bm \Phi \bm W_{k-1:1}^\top(0) \bm W_k^\top(0) \bm v_i^{(k+1)}  \\
    &= \bm \Phi \bm W_{k-1:1}^\top(0)  \bm W_k^\top(0) \bm u_i^{(k)} \\
    & = \varepsilon \bm \Phi \bm W_{k-1:1}^\top(0) \bm v_i^{(k)} = \bm 0,
\end{align*}
where the second equality follows from $\bm v_i^{(k+1)} = \bm u_i^{(k)}$ and the third equality is due to $\bm W_k^\top(0)\bm u_i^{(k)} = \varepsilon \bm v_i^{(k)}$. Therefore $\calE(k+1)$ holds, so we have $\calE(l)$ for all $l \in [L]$. As a result, we have shown the base cases $\calA(0)$, $\calB(0)$, $\calC(0)$, and $\calD(0)$. 

Now we proceed by induction on $t \ge 0$. Suppose that $\calA(t)$, $\calB(t)$, $\calC(t)$, and $\calD(t)$ hold for some $t \ge 0$. First, we show $\calA(t+1)$ and $\calB(t+1)$. We have
\begin{align*}
    \mW_l(t+1) \bm v_i^{(l)}  & =  \left[(1-\eta\lambda) \bm W_l(t) -  \eta \bm W_{L:l+1}^\top (t) \bm \Gamma (t)\bm W_{l-1:1}^\top (t) \right]\bm v_i^{(l)} \\
    &= \left[(1-\eta\lambda) \bm W_l(t) -  \eta \bm W_{L:l+1}^\top (t)\left(\bm W_{L:1}(t) - \bm \Phi\right)\bm W_{l-1:1}^\top (t) \right]\bm v_i^{(l)} \\
    & = (1-\eta\lambda)\bm{W}_l(t) \bm v_i^{(l)} - \eta \bm W_{L:l+1}^\top(t) \bm W_{L:1}(t) \bm W_{l-1:1}^\top(t) \bm v_i^{(l)} \\
    & = (1-\eta\lambda)\bm{W}_l(t) \bm v_i^{(l)} - \eta \cdot \rho(t)^{2(L-1)} \bm{W}_l(t) \bm v_i^{(l)} \\
    &= \rho(t) \left(1 - \eta \lambda - \eta \cdot \rho(t)^{2(L-1)}\right) \bm u_i^{(l)} = \rho(t+1) \bm u_i^{(l)}
\end{align*}
for all $l \in [L]$, where the first equality follows from \eqref{eq:gd}, the second equality follows from \eqref{eq:gam}, the third equality follows from $\mathcal{D}(t)$, and the fourth equality follows from $\mathcal{A}(t)$ and $\mathcal{B}(t)$ applied repeatedly along with $\bm v_i^{(l+1)} = \bm u_i^{(l)}$ for all $l \in [L-1]$, proving $\mathcal{A}(t+1)$. Similarly, we have 
\begin{align*}
    \mW_l^\top(t+1) \bm u_i^{(l)}  & = \left[(1-\eta\lambda) \bm W_l^\top(t) -  \eta \bm W_{l-1:1}(t) \bm \Gamma^\top(t) \bm W_{L:l+1}(t) \right]\bm u_i^{(l)} \\
    &= \left[(1-\eta\lambda) \bm W_l^\top(t) -  \eta \bm W_{l-1:1}(t) \bm \left(\bm W_{L:1}^\top(t) - \bm \Phi^\top\right) \bm W_{L:l+1}(t) \right]\bm u_i^{(l)} \\
    &= (1-\eta\lambda) \bm W_l^\top(t) \bm u_i^{(l)} - \eta \bm W_{l-1:1}(t) \bm W_{L:1}^\top(t) \bm W_{L:l+1}(t) \bm u_i^{(l)} \\
    &= (1-\eta\lambda) \bm W_l^\top(t) \bm u_i^{(l)} - \eta \cdot \rho(t)^{2(L-1)} \bm W_l^\top(t) \bm u_i^{(l)} \\
    &= \rho(t) \left(1 - \eta \lambda - \eta \cdot \rho(t)^{2(L-1)}\right) \bm v_i^{(l)} = \rho(t+1) \bm v_i^{(l)}
\end{align*}
for all $l \in [L]$, where the third equality follows from $\mathcal{C}(t)$, and the fourth equality follows from $\mathcal{A}(t)$ and $\mathcal{B}(t)$ applied repeatedly along with $\bm v_i^{(l+1)} = \bm u_i^{(l)}$ for all $l \in [L-1]$, proving $\mathcal{B}(t+1)$. Now, we show $\calC(t+1)$. For any $k \in [L-1]$, it follows from $\bv_i^{(k+1)} = \bu_i^{(k)}$ and $\calA(t+1)$ that 
\begin{equation*}
    \bm W_{k+1}(t+1) \bm u_i^{(k)} = \bm W_{k+1}(t+1) \bm v_i^{(k+1)}  = \rho(t+1) \bm u_i^{(k+1)}.
\end{equation*}
Repeatedly applying the above equality for $k = l, l+1, \dots, L-1$, we obtain
\begin{equation*}
          \bm \Phi^\top \bm W_{L:l+1}(t) \bm u_i^{(l)} = \rho(t+1)^{L-l} \bm \Phi^\top \bm u_i^{(L)} = \bm 0
\end{equation*}
which follows from $\mathcal{C}(t)$, proving $\mathcal{C}(t+1)$. Finally, we show $\calD(t+1)$. For any $k \in \{2, \dots, L\}$, it follows from $\bm v_i^{(k)} = \bm u_i^{(k-1)}$ and $\calB(t+1)$ that 
\begin{equation*}
    \bm W_{k-1}^\top(t+1) \bm v_i^{(k)} = \bm W_{k-1}^\top(t+1) \bm u_i^{(k-1)} = \rho(t+1) \bm v_i^{(k-1)}.  
\end{equation*}
Repeatedly applying the above equality for $k = l, l-1, \dots, 2$, we obtain
\begin{equation*}
    \bm \Phi \bm W^\top_{l-1:1}(t)\bm v_i^{(l)} = \bm \Phi \bm v_i^{(1)} = \bm 0
\end{equation*}
which follows from $\mathcal{D}(t)$. Thus we have proven $\mathcal D(t+1)$, concluding the proof. 
\end{proof}

\begin{lemma}\label{lem:dy}
Under the setting of \Cref{thm:main} (\textbf{Case 2}), there exist orthonormal sets $\{\bm u_i^{(l)}\}_{i=1}^m \subset \R^d$ for $l \in [L-1]$ and $\{\bm v_i^{(l)}\}_{i=1}^m \subset \R^d$ for $l \in [L]$ satisfying $\bm v_i^{(l+1)} = \bm u_i^{(l)}$ for all $l \in [L-1]$ such that the following hold for all $t \geq 0$: 
\begin{alignat*}{2}
    \mathcal{A}(t)&: \bm W_{l}(t) \bm v_i^{(l)} = \rho(t) \bm u_i^{(l)} \quad &&\forall l \in [L-1], \\
    \mathcal{B}(t)&: \bm W_{l}^\top(t) \bm u_i^{(l)} = \rho(t) \bm v_i^{(l)} \quad 
 &&\forall l \in [L-1],\\
    \mathcal{C}(t)&: \bm W_{L:l+1}(t)\bm u_i^{(l)} = \bm 0 \quad  && \forall l \in [L-1], \\
    \mathcal{D}(t)&: \bm \Gamma(t) \bm W_{l-1:1}^\top(t) \bm v_i^{(l)} = \bm 0 \quad  && \forall l \in [L],
\end{alignat*}
where $\rho(t) = \rho(t-1) (1 - \eta \lambda)$ for all $t\geq 1$ with $\rho(0) = \varepsilon$.
\end{lemma}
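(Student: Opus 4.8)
The plan is to run the same inductive scheme as in the proof of \Cref{lem:r}, adapted to the ``wide'' regime $d_y=r$. The structural point that makes this case cleaner is that the whole end-to-end map $\mb W_{L:1}(0)$ (not just $\bm\Phi$) already annihilates the relevant subspace at initialization, which is exactly what collapses the $\rho$-recursion to the pure geometric form $\rho(t)=\rho(t-1)(1-\eta\lambda)$ with no cubic correction. Throughout we use that whitening gives $\bm\Gamma(t)=\mb W_{L:1}(t)-\bm\Phi$ with $\bm\Phi=\mb Y\mb X^\top\in\R^{r\times d}$.

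\textbf{Step 1 (building the chain at $t=0$).} Since $\bm\Phi\in\R^{r\times d}$ and $\mb W_L(0)\in\R^{r\times d}$, both $\bm\Phi$ and $\mb W_{L:1}(0)=\mb W_L(0)\mb W_{L-1:1}(0)$ have rank at most $r$, so $\dim\calN(\bm\Phi)\ge d-r$ and $\dim\calN(\mb W_{L:1}(0))\ge d-r$. Define $\calS:=\calN(\mb W_{L:1}(0))\cap\calN(\bm\Phi)\subset\R^d$, which has dimension at least $2(d-r)-d=m$. Pick an orthonormal set $\{\bm v_i^{(1)}\}_{i=1}^m\subset\calS$, set $\bm u_i^{(1)}:=\mb W_1(0)\bm v_i^{(1)}/\varepsilon$, and propagate up the layers via $\bm v_i^{(l+1)}:=\bm u_i^{(l)}$, $\bm u_i^{(l+1)}:=\mb W_{l+1}(0)\bm v_i^{(l+1)}/\varepsilon$ for $l=1,\dots,L-2$, with $\bm v_i^{(L)}:=\bm u_i^{(L-1)}$. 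Because $\mb W_l(0)$ is $\varepsilon$-scaled orthogonal for $l\in[L-1]$, each $\{\bm u_i^{(l)}\}$ and $\{\bm v_i^{(l)}\}$ is orthonormal, and $\calA(0),\calB(0)$ hold with $\rho(0)=\varepsilon$. For $\calC(0)$: $\mb W_{L:2}(0)\bm u_i^{(1)}=\mb W_{L:1}(0)\bm v_i^{(1)}/\varepsilon=\bm 0$ by definition of $\calS$, and pushing $\mb W_{k+1}(0)\bm u_i^{(k)}=\varepsilon\bm u_i^{(k+1)}$ through gives $\mb W_{L:l+1}(0)\bm u_i^{(l)}=\bm 0$ for all $l\in[L-1]$. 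For $\calD(0)$: $\bm\Phi\bm v_i^{(1)}=\bm 0$ together with $\mb W_{L:1}(0)\bm v_i^{(1)}=\bm 0$ yields $\bm\Gamma(0)\bm v_i^{(1)}=\bm 0$, and using $\mb W_k^\top(0)\bm u_i^{(k)}=\varepsilon\bm v_i^{(k)}$ repeatedly shows $\mb W_{l-1:1}^\top(0)\bm v_i^{(l)}=\varepsilon^{l-1}\bm v_i^{(1)}$, so $\calD(0)$ holds.

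\textbf{Step 2 (induction on $t$).} Assume $\calA(t)$--$\calD(t)$. Substituting \eqref{eq:gd} into $\mb W_l(t+1)\bm v_i^{(l)}$, the gradient term $\mb W_{L:l+1}^\top(t)\bm\Gamma(t)\mb W_{l-1:1}^\top(t)\bm v_i^{(l)}$ vanishes by $\calD(t)$ --- note it is the \emph{entire} residual $\bm\Gamma(t)$ that is annihilated --- leaving $\mb W_l(t+1)\bm v_i^{(l)}=(1-\eta\lambda)\rho(t)\bm u_i^{(l)}=\rho(t+1)\bm u_i^{(l)}$, which is $\calA(t+1)$; symmetrically the gradient term in $\mb W_l^\top(t+1)\bm u_i^{(l)}$ vanishes by $\calC(t)$, giving $\calB(t+1)$. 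For $\calC(t+1)$, telescoping with $\calA(t+1)$ reduces $\mb W_{L:l+1}(t+1)\bm u_i^{(l)}$ to $\rho(t+1)^{L-1-l}\mb W_L(t+1)\bm u_i^{(L-1)}$; since $\mb W_{L:L+1}^\top(t)=\bm I$, the $\mb W_L$ update is $\mb W_L(t+1)=(1-\eta\lambda)\mb W_L(t)-\eta\bm\Gamma(t)\mb W_{L-1:1}^\top(t)$, and applying it to $\bm u_i^{(L-1)}=\bm v_i^{(L)}$ kills the first term by $\calC(t)$ (at $l=L-1$) and the second by $\calD(t)$ (at $l=L$), so $\calC(t+1)$ holds. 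For $\calD(t+1)$, telescoping with $\calB(t+1)$ gives $\mb W_{l-1:1}^\top(t+1)\bm v_i^{(l)}=\rho(t+1)^{l-1}\bm v_i^{(1)}$, so it suffices to show $\bm\Gamma(t+1)\bm v_i^{(1)}=\bm 0$; telescoping $\mb W_{L:1}(t+1)\bm v_i^{(1)}$ with $\calA(t+1)$ and then $\calC(t+1)$ yields $\mb W_{L:1}(t+1)\bm v_i^{(1)}=\bm 0$, which with $\bm\Phi\bm v_i^{(1)}=\bm 0$ closes the induction, and $\rho(t)=\varepsilon(1-\eta\lambda)^t$ follows by unrolling.

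\textbf{Main obstacle.} Each computation is a one-line telescoping, so the care lies in (a) the base-case layer propagation, which seeds the orthonormal chain and embeds the crucial property that $\bm v_i^{(1)}$ sits in both null spaces, and (b) the bookkeeping of the dependency order within the $t$-step --- $\calA(t+1),\calB(t+1)$ must precede $\calC(t+1)$, which must precede $\calD(t+1)$, while each also uses level-$t$ facts. It is also worth stating explicitly why the recursion here is $\rho(t)=\varepsilon(1-\eta\lambda)^t$ rather than \eqref{eq:rho 1}: in Case 2 the span of the $\bm v_i^{(1)}$ is annihilated by $\mb W_{L:1}(0)$ from the outset, so $\calD(t)$ removes the full gradient contribution along the invariant directions, whereas in \Cref{lem:r} only the $\bm\Phi$-part is removed and the surviving $\mb W_{L:1}(t)\mb W_{l-1:1}^\top(t)$ term produces the $\rho(t)^{2(L-1)}$ correction.
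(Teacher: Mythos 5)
Your proposal is correct and follows essentially the same route as the paper: the same four invariants $\calA$--$\calD$, the same layer-by-layer propagation at $t=0$, and the same induction on $t$ with identical telescoping steps (including the key use of $\calC(t)$ at $l=L-1$ and $\calD(t)$ at $l=L$ to handle the last-layer update inside $\calC(t+1)$). The only difference is cosmetic and in fact slightly cleaner: you seed the construction with $\calS=\calN(\bW_{L:1}(0))\cap\calN(\bm{\Phi})$ directly, whereas the paper takes $\calS=\calN(\nabla_{\bW_1}\ell(0))\cap\calN(\nabla_{\bW_1}^\top\ell(0)\bW_1(0))$ and then recovers the same two conditions $\bm{\Gamma}(0)\bm v_i^{(1)}=\bm 0$ and $\bW_{L:2}(0)\bm u_i^{(1)}=\bm 0$ via a full-column-rank argument on $\bW_{L:2}^\top(0)$ and $\bm{\Gamma}^\top(0)$, the latter of which your choice of subspace does not need.
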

\begin{proof}
Since $\bm{\Gamma}(0) \in \R^{d_y\times d_x}$, the rank of
\begin{align}\label{eq1:lem d=dx}
\nabla_{\bW_1}\ell(0) = \bW_{L:2}^\top(0)\bm{\Gamma}(0) \in \R^{d \times d}   
\end{align}
is at most $d_y$, which implies that $\mathrm{dim} \,\calN\left(\nabla_{\bW_1}\ell(0)\right) = \mathrm{dim}\,\calN\left(\nabla_{\bW_1}^\top\ell(0)\right) \ge d - d_y$. We define the subspace
\begin{align*}
    \calS := \calN\left(\nabla_{\bW_1}\ell(0)\right)\cap \calN\left(\nabla_{\bW_1}^\top\ell(0)\bm{W}_1(0)\right) \subset \R^d.
\end{align*}
Since $\bm{W}_1(0) \in \R^{d\times d}$ is nonsingular, we have
\begin{align*}
    \mathrm{dim}(\calS) \ge 2(d-d_y) - d = m. 
\end{align*}
Let $\{\bm{v}_i^{(1)}\}_{i=1}^m$ denote an orthonormal basis for $\calS$ and set $\bm{u}_i^{(1)} := \bW_1(0)\bv_i^{(1)}/\varepsilon$, where $\varepsilon > 0$ is the constant in \eqref{eq:init} -- since $\mW_1(0)$ is orthogonal, $\{\bm u_i^{(1)}\}_{i=1}^m$ is also an orthonormal set. Then we trivially have $\bW_1(0)\bv_i^{(1)} = \varepsilon\bm{u}_i^{(1)}$, which, together with \eqref{eq:init}, implies $\bm W_{1}^\top(0) \bm u_i^{(1)} = \varepsilon \bm v_i^{(1)}$. It follows from $\bm{v}_i^{(1)} \in \calS$ that $\nabla_{\bW_1}\ell(0) \bm{v}_i^{(1)} = \bm{0}$ and $\nabla_{\bW_1}^\top\ell(0)\bW_1(0) \bm{v}_i^{(1)} = \bm{0}$, which is equivalent to $\bW_{L:2}^\top(0) \bm{\Gamma}(0) \bm{v}_i^{(1)} = \bm{0}$ and $\bm{\Gamma}^\top(0) \bW_{L:2}(0) \bW_1(0)  \bm{v}_i^{(1)} = \varepsilon \bm{\Gamma}^\top(0) \bW_{L:2}(0) \bu_i^{(1)} = \bm{0}$ respectively by \eqref{eq1:lem d=dx}. Since $\bm{W}^\top_{L:2}(0)$ and $\bm{\Gamma}^\top(0)$ are full column rank, we have $\bm{\Gamma}(0) \bm{v}_i^{(1)} = \bm{0}$ and $\bW_{L:2}(0) \bu_i^{(1)} = \bm{0}$. It then follows that
\begin{align*}
    \bm{0} = \bm{\Gamma}(0)  \bm{v}_i^{(1)}  = \left(\bW_{L:1}(0) - \bm{\Phi}\right) \bm{v}_i^{(1)} = \varepsilon\bW_{L:2}(0)\bm{u}_i^{(1)} -  \bm{\Phi}\bm{v}_i^{(1)} = -\bm{\Phi}\bm{v}_i^{(1)}.  
\end{align*}
Therefore, we have $\bm{\Phi}\bm{v}_i^{(1)} = \bm{0}$. 

Now let $\mathcal E(l)$ be the event that we have orthonormal sets $\{\bm u_i^{(l)}\}_{i=1}^m$ and $\{\bm v_i^{(l)}\}_{i=1}^m$ satisfying $\bm{W}_{l}(0) \bm v_i^{(l)} = \varepsilon \bm u_i^{(l)}$, $\bm W_{l}^\top(0) \bm u_i^{(l)} = \varepsilon \bm v_i^{(l)}$, $\bm W_{L:l+1}(0) \bm u_i^{(l)} = \bm 0$, and $\bm \Gamma(0)\bW_{l-1:1}^\top(0) \bm v_i^{(l)} = \bm 0$. From the above arguments, we have that $\mathcal E(1)$ holds -- now suppose $\mathcal E(k)$ holds for some $1 \le k < L-1$. Set $\bm v_i^{(k+1)} := \bm u_i^{(k)}$ and $\bm u_i^{(k+1)} := \bm W_{k+1}(0) \bm v_i^{(k+1)}/\varepsilon$. This, together with \eqref{eq:init}, implies that $\bm W_{k+1}(0) \bm v_i^{(k+1)} = \varepsilon \bm u_i^{(k+1)}$ and $\bm W_{k+1}^\top(0) \bm u_i^{(k+1)} = \varepsilon \bm v_i^{(k+1)}$. Moreover, we have 
\begin{align*}
    \bm W_{L:(k+1)+1}(0) \bm u_i^{(k+1)} &=  \bm W_{L:k+1}(0) \bm W_{k+1}^\top(0) \bm u_i^{(k+1)} / \varepsilon^2 \\
    &= \bm W_{L:k+1}(0) \bm v_i^{(k+1)} / \varepsilon \\
    &= \bm W_{L:k+1}(0) \bm u_i^{(k)} / \varepsilon  = \bm 0,
\end{align*}
where the first two equalities follow from \eqref{eq:init} and  $\bm u_i^{(k+1)} = \bm W_{k+1}(0) \bm v_i^{(k+1)}/\varepsilon$, and the last equality is due to $\bm v_i^{(k+1)} = \bm u_i^{(k)}$. Similarly, we have
\begin{align*}
    \bm \Gamma(0) \bm W_{(k+1)-1:1}^\top(0) \bm v_i^{(k+1)} & = \bm \Gamma(0) \bm W_{k-1:1}^\top(0)  \bm W_k^\top(0) \bm v_i^{(k+1)}  \\
    &= \bm \Gamma(0) \bm W_{k-1:1}^\top(0)  \bm W_k^\top(0) \bm u_i^{(k)} \\
    & = \varepsilon \bm \Gamma(0) \bm W_{k-1:1}^\top(0) \bm v_i^{(k)} = \bm 0,
\end{align*}
where the second equality follows from $\bm v_i^{(k+1)} = \bm u_i^{(k)}$ and the third equality is due to $\bm W_k^\top(0)\bm u_i^{(k)} = \varepsilon \bm v_i^{(k)}$. Therefore $\calE(k+1)$ holds, so we have $\calE(l)$ for all $l \in [L-1]$. Finally, setting $\bm v_i^{(L)} = \bm u_i^{(L-1)}$, we have
\begin{align*}
    \bm \Gamma(0) \bm W_{L-1:1}^\top(0) \bm v_i^{(L)} & = \bm \Gamma(0) \bm W_{L-2:1}^\top(0) \bm W_{L-1}^\top(0) \bm v_i^{(L)}  \\
    &= \bm \Gamma(0) \bm W_{L-1:1}^\top(0)  \bm W_{L-1}^\top(0) \bm u_i^{(L-1)} \\
    & = \varepsilon \bm \Gamma(0) \bm W_{L-1:1}^\top(0)  \bm v_i^{(L-1)} = \bm 0.
\end{align*}
As a result, we have shown the base cases $\calA(0)$, $\calB(0)$, $\calC(0)$, and $\calD(0)$. 

Now we proceed by induction on $t \ge 0$. Suppose that $\calA(t)$, $\calB(t)$, $\calC(t)$, and $\calD(t)$ hold for some $t \ge 0$. First, we show $\calA(t+1)$ and $\calB(t+1)$. We have
\begin{align*}
    \mW_l(t+1) \bm v_i^{(l)}  & =  \left[(1-\eta\lambda) \bm W_l(t) -  \eta \bm W_{L:l+1}^\top (t){\bm \Gamma}(t)\bm W_{l-1:1}^\top (t) \right]\bm v_i^{(l)} \\
    & = (1-\eta\lambda)\bm{W}_l(t) \bm v_i^{(l)} = \rho(t) (1-\eta \lambda) \bm u_i^{(l)} = \rho(t+1) \bm u_i^{(l)} 
\end{align*}
for all $l \in [L-1]$, where the first equality follows from \eqref{eq:gd}, the second equality uses $\calD(t)$, and the last equality is due to $\calA(t)$, proving $\calA(t+1)$. Similarly, we have
\begin{align*}
    \bm W_{l}^\top(t+1) \bm u_i^{(l)} & = \left[(1-\eta\lambda)\bm W_{l}^\top(t) - \eta \bm W_{l-1:1}(t) \bm \Gamma^\top(t) \bm W_{L:l+1}(t)\right] \bm u_i^{(l)} \\
   & =(1-\eta\lambda)\bm W_{l}^\top(t)\bm u_i^{(l)} = \rho(t) (1-\eta\lambda) \bm v_i^{(l)} = \rho(t+1) \bm v_i^{(l)}
\end{align*}
for all $l \in [L-1]$, where the second equality uses $\calC(t)$, and the last equality is due to $\calB(t)$, proving $\calB(t+1)$. Now, we show $\calC(t+1)$. For any $k \in [L-2]$, it follows from $\bv_i^{(k+1)} = \bu_i^{(k)}$ and $\calA(t+1)$ that 
\begin{equation*}
    \bm W_{k+1}(t+1) \bm u_i^{(k)} = \bm W_{k+1}(t+1) \bm v_i^{(k+1)}  = \rho(t+1) \bm u_i^{(k+1)}.
\end{equation*}
Repeatedly applying the above equality for $k = l, l+1, \dots, L-2$, we obtain
\begin{align*}
          \bm W_{L:l+1}(t+1) \bm u_i^{(l)} & = \rho(t+1)^{L-l-1} \bm W_L(t+1) \bm u_i^{(L-1)} \\
          &= \rho(t+1)^{L-l-1} \left[ (1-\eta \lambda) \bm W_L(t) - \eta \bm \Gamma(t) \bm W_{L-1:1}^\top(t)\right] \bm u_i^{(L-1)} = \bm 0
\end{align*}
where $\bm W_L(t) \bm u_i^{(L-1)} = \bm 0$ follows from $\mathcal C(t)$ and $\bm \Gamma(t) \bm W_{L-1:1}^\top(t) \bm u_i^{(L-1)} = \bm 0$ follows from $\bm u_i^{(L-1)} = \bm v_i^{(L)}$ and $\mathcal D(t)$, proving $\mathcal C(t+1)$. Finally, we show $\calD(t+1)$. For any $k \in \{2, \dots, L\}$, it follows from $\bm v_i^{(k)} = \bm u_i^{(k-1)}$ and $\calB(t+1)$ that 
\begin{align*}
    \bm W_{k-1}^\top(t+1) \bm v_i^{(k)} = \bm W_{k-1}^\top(t+1) \bm u_i^{(k-1)} = \rho(t+1) \bm v_i^{(k-1)}.  
\end{align*}
Repeatedly applying the above equality for $k = l, l-1, \dots, 2$, we obtain
\begin{align*}
    \bm \Gamma(t+1) \bm W_{l-1:1}^\top(t+1) \bm v_i^{(l)} & = \rho(t+1)^{l-1}\bm \Gamma(t+1)\bm v_i^{(1)} \\
    & = \rho(t+1)^{l-1}(\bm W_{L:1}(t+1) - \bm \Phi) \bm v_i^{(1)} = \bm{0}
\end{align*}
where the last line follows from $\bm \Phi \bm v_i^{(1)} = \bm 0$ as well as $$\bm W_{L:1}(t+1) \bm v_i^{(1)} = \bm W_{L:2}(t+1) \bm W_1(t+1) \bm v_i^{(1)} = \rho(t+1) \bm W_{L:2}(t+1) \bm u_i^{(1)} = \bm 0$$ by $\mathcal{A}(t+1)$ and $\mathcal{C}(t+1)$. Thus we have proven $\mathcal D(t+1)$, concluding the proof. 
\end{proof}

\begin{proof}[Proof of \Cref{thm:main}]
We show the result for \textbf{Case 1} -- the proof is nearly identical for \textbf{Case 2}. 

By $\mathcal{A}(t)$ and $\mathcal{B}(t)$ of \Cref{lem:r}, there exists orthonormal matrices $\{ \bm U_{l, 2} \} \subset \R^{d \times m}$ and $\{ \bm V_{l, 2} \} \subset \R^{d \times m}$ for $l \in [L]$ satisfying $\bm U_{l+1, 2} = \bm V_{l, 2}$ for all $l \in [L-1]$ as well as 
\begin{equation}\label{eq:svd_22}
    \bm W_l(t) \bm V_{l, 2} = \rho(t) \bm U_{l, 2} \quad \mbox{and} \quad \bm W_l(t)^\top \bm U_{l, 2} = \rho(t) \bm V_{l, 2}
\end{equation}
for all $l \in [L]$ and $t \geq 0$, where $\rho(t)$ satisfies \eqref{eq:rho 1} for $t \geq 1$ with $\rho(0) = \varepsilon$. First, complete $\bm V_{1, 2}$ to an orthonormal basis for $\R^d$ as $\bm V_1 = \begin{bmatrix} \bm V_{1, 1} & \bm V_{1, 2} \end{bmatrix} \in \mathcal O^d$. Then for each $l \in [L-1]$, set $\bm U_l = \begin{bmatrix} \bm U_{l, 1} & \bm U_{l, 2} \end{bmatrix} \in \mathcal O^d$ where $\bm U_{l, 1} = \bm W_l(0) \bm V_{l, 1} / \varepsilon$ and $\bm V_{l+1} = \begin{bmatrix} \bm V_{l+1, 1} & \bm V_{l+1, 2} \end{bmatrix} \in \mathcal O^d$ where $\bm V_{l+1, 1} = \bm U_{l, 1}$, and finally set $\bm U_L = \begin{bmatrix} \bm U_{L, 1} & \bm U_{L, 2} \end{bmatrix} \in \mathcal O^d$ where $\bm U_{L, 1} = \bm W_L(0) \bm V_{L, 1} / \varepsilon$. We note that $\bm V_{l+1} = \bm U_l$ for each $l \in [L-1]$. Then we have
\begin{equation}\label{eq:svd_12}
    \bm U_{l, 1}^\top \bm W_l(t) \bm V_{l, 2} = \rho(t) \bm U_{l, 1}^\top \bm U_{l, 2} = \bm 0
\end{equation}
for all $l \in [L]$, where the first equality follows from \eqref{eq:svd_22}. Similarly, we also have
\begin{equation}\label{eq:svd_21}
    \bm U_{l, 2}^\top \bm W_l(t) \bm V_{l, 1} = \rho(t) \bm V_{l, 2}^\top \bm V_{l, 1} = \bm 0
\end{equation}
for all $l \in [L]$, where the first equality also follows from \eqref{eq:svd_22}. Therefore, combining \eqref{eq:svd_22}, \eqref{eq:svd_12}, and \eqref{eq:svd_21} yields
\begin{equation*}
    \bm U_l^\top \bm W_l(t) \bm V_l = \begin{bmatrix} \bm U_{l, 1} & \bm U_{l, 2} \end{bmatrix}^\top \bm W_l(t) \begin{bmatrix} \bm V_{l+1, 1} & \bm V_{l+1, 2} \end{bmatrix} = \begin{bmatrix}
        \wt{\bm W}_l(t) & \bm 0 \\ \bm 0 & \rho(t) \bm I_m
    \end{bmatrix}
\end{equation*}
for all $l \in [L]$, where $\wt{\bm W}_l(0) = \varepsilon \bm I_{2r}$ by construction of $\bm U_{l, 1}$. This directly implies \eqref{eq:weight_structures}, completing the proof.
\end{proof}

\section{Proofs in Section \ref{sec:appli}}\label{app:proof4} 

Suppose that $\bm{\Theta} = \{\mb W_l\}_{l=1}^L$ satisfies \\
(i)  Global Optimality: 
\begin{align}\label{eq:opti}
    \bW_{L:1}\bm{X} = \bm{Y}.
\end{align}
(ii) Balancedness: There exists a positive constant $\varepsilon \le \left\{\frac{n^{1/2L}}{\sqrt{30}L\sqrt[4]{d-K}},\frac{(n/2)^{1/4L}}{\sqrt[4]{d-K}}, \frac{1}{\sqrt{2(\sqrt{K}+1)}} \right\}$ such that 
\begin{align}\label{eq:bala}
    \bW_{l+1}^\top\bW_{l+1} = \bW_l\bW_l^\top, \forall l \in [L-2],\ \|\bW_{L}^\top\bW_{L} - \bW_{L-1}\bW_{L-1}^\top\|_F \le \varepsilon^2\sqrt{d-K}. 
\end{align}
(iii)  Unchanged Spectrum: There exists a positive constant $\varepsilon > 0$ and an index set $\calA \subseteq [d]$ with $|\calA| = d-2K$ such that for all $l \in [L-1]$ that
\begin{align}\label{eq:d-2K}
    \sigma_i(\bW_l) = \varepsilon,\ \forall i \in \calA.
\end{align}

For ease of exposition, we introduce some additional notation. In our analysis, we can assume $\bm{Y} = \bm{I}_K \otimes \bm{1}_n^\top$ without loss of generality. This, together with \eqref{eq:opti} and $\bm{X}$ is full rank, yields that the rank of $\bW_l$ is at least $K$ for all $l \in [L]$. Let
\begin{align}\label{eq:SVD Wl}
\bW_l = \bU_l\bm{\Sigma}_l\bV_l^\top =  \begin{bmatrix}
\bU_{l,1} & \bU_{l,2} 
\end{bmatrix}\begin{bmatrix}
\bm{\Sigma}_{l,1} & \bm{0} \\
\bm{0} & \bm{\Sigma}_{l,2}
\end{bmatrix}\begin{bmatrix}
\bV_{l,1}^\top \\ \bV_{l,2}^\top
\end{bmatrix} = \bU_{l,1} \bm{\Sigma}_{l,1}\bV_{l,1}^\top + \bU_{l,2}\bm{\Sigma}_{l,2}\bV_{l,2}^\top, 
\end{align}   
be a singular value decomposition (SVD) of $\bW_l$, where $\bm{\Sigma}_l \in \R^{d\times d}$ is diagonal, $\bm{\Sigma}_{l,1} = \diag\left(\sigma_{l,1},\dots,\sigma_{l,K} \right)$ with $\sigma_{l,1} \ge \cdots \ge \sigma_{l,K} > 0 $ being the singular values, and $\bm{\Sigma}_{l,2} = 
\diag\left(\sigma_{l,K+1},\dots,\sigma_{l,d} \right) $ with $\sigma_{l,K+1} \ge \cdots \ge \sigma_{l,d} \ge 0 $ being the remaining singular values of $\bW_l$; 
$\bU_{l} \in \calO^{d }$ with $\bU_{l,1} \in \R^{d\times K}$, $\bU_{l,2} \in \R^{d \times (d-K)}$; $\bV_{l} \in \calO^{d}$ with $\bV_{l,1} \in \R^{d \times K}$, $\bV_{l,2} \in \R^{d \times (d-K)}$. Noting that $\bW_L \in \R^{K\times d}$, let
\begin{align}\label{eq:SVD WL}
\bW_L =  \bU_L\bm{\Sigma}_L\bV_L^\top =  \bU_L \begin{bmatrix}
\bm{\Sigma}_{L,1} & \bm{0} 
\end{bmatrix}\begin{bmatrix}
\bV_{L,1}^\top \\ \bV_{L,2}^\top
\end{bmatrix} = \bU_{L}\bm{\Sigma}_{L,1}\bV_{L,1}^\top
\end{align}
be its singular value decomposition (SVD), where $\bm{\Sigma}_{L} \in \R^{K\times d}$, and $\bm{\Sigma}_{L,1} = \diag\left(\sigma_{L,1},\dots,\sigma_{L,K} \right)$ with $\sigma_{L,1} \ge \cdots \ge \sigma_{L,K} > 0 $ being the singular values; 
$\bU_{L} \in \calO^{K}$, and $\bV_{L} \in \calO^{d}$ with $\bV_{L,1} \in \R^{d\times K}$, $\bV_{L,2} \in \R^{d \times (d-K)}$.

Before we prove Theorem \ref{thm:NC}, we need some preliminary results. We first show that when $\{\mb W_l\}_{l=1}^L$ satisfy \eqref{eq:opti} and \eqref{eq:bala}, we can bound the leading $K$ singular values of $\mb W_l$ for all $l \in [L]$. Throughout this section, let 
\begin{align}\label{eq:delta}
\delta := \varepsilon^2\sqrt{d-K} \le \frac{n^{1/L}}{30L^2},
\end{align}
where the inequality follows from 
\begin{align}\label{eq:epsilon}
    \varepsilon \le \min\left\{\frac{n^{1/2L}}{\sqrt{30}L\sqrt[4]{d-K}},\frac{(n/2)^{1/4L}}{\sqrt[4]{d-K}}, \frac{1}{\sqrt{2(\sqrt{K}+1)}}\right\}. 
\end{align}

\begin{lemma}\label{lem:bound Wl}
Suppose that the weights $\{\bW_l\}_{l=1}^L$ satisfy \eqref{eq:opti} and \eqref{eq:bala}. 
Then, it holds that
\begin{align}\label{eq:bound Wl}
\left( \sqrt{\frac{n}{2}}\right)^{1/L} \le \sigma_K\left(\bW_l\right)  \le \sigma_1\left(\bW_l\right) \le \left( \sqrt{2n} \right)^{1/L},\ \forall l \in [L].
\end{align}
\end{lemma}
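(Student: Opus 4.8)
The plan is to leverage the balancedness condition \eqref{eq:bala} to relate the singular values of $\bW_l$ across layers, and then use global optimality \eqref{eq:opti} to pin down the product of these singular values. First I would observe that the exact balancedness $\bW_{l+1}^\top\bW_{l+1} = \bW_l\bW_l^\top$ for $l \in [L-2]$ means that $\bW_{l+1}$ and $\bW_l$ share the same nonzero singular values, so $\sigma_i(\bW_1) = \sigma_i(\bW_2) = \cdots = \sigma_i(\bW_{L-1})$ for each index $i$. For the last layer, the approximate balancedness $\|\bW_L^\top\bW_L - \bW_{L-1}\bW_{L-1}^\top\|_F \le \varepsilon^2\sqrt{d-K} = \delta$ together with Weyl's inequality gives $|\sigma_i^2(\bW_L) - \sigma_i^2(\bW_{L-1})| \le \delta$ for all $i \in [K]$ (noting $\bW_L$ has only $K$ nonzero singular values). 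Hence all layers have \emph{approximately} equal leading $K$ singular values, with a controlled $O(\delta)$ discrepancy only at the last layer.

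Next I would use global optimality. Since $\bW_{L:1}\bm{X} = \bm{Y}$ and $\bm{X}$ is square orthogonal (and, after the normalization $\bm{Y} = \bm{I}_K \otimes \bm{1}_n^\top$ made in the preamble to this section), we have $\bW_{L:1} = \bm{Y}\bm{X}^\top$, whose nonzero singular values are all equal to $\sqrt{n}$ (since $\bm{Y}\bm{Y}^\top = n\bm{I}_K$ and $\bm{X}$ is orthogonal). Therefore $\sigma_K(\bW_{L:1}) = \sigma_1(\bW_{L:1}) = \sqrt{n}$. Now I would bound $\sigma_1(\bW_{L:1}) \le \prod_{l=1}^L \sigma_1(\bW_l)$ and $\sigma_K(\bW_{L:1}) \ge \prod_{l=1}^L \sigma_K(\bW_l)$ (the latter using that each $\bW_l$ has rank at least $K$, so the product of the top $K$ singular values behaves submultiplicatively in the right direction; more carefully, $\sigma_K(AB) \ge \sigma_K(A)\sigma_K(B)$ when the inner dimension is at least $K$). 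Combining with the near-equality of singular values across layers, if we write $s := \sigma_K(\bW_{L-1}) \le \sigma_1(\bW_{L-1}) =: S$, then $S^{L-1}\sqrt{S^2+\delta} \ge \sqrt{n}$ forces $S$ not too small, and $s^{L-1}\sqrt{\max(s^2-\delta,0)} \le \sqrt{n}$ forces $s$ not too large — and symmetrically using $\sigma_1(\bW_{L:1}) = \sqrt{n}$ to bound $S$ from above and $s$ from below. Turning the crank with the bound $\delta \le n^{1/L}/(30L^2)$ from \eqref{eq:delta} should yield $s^L \ge \sqrt{n}/2$ and $S^L \le \sqrt{2n}$, i.e., $\sigma_K(\bW_l) \ge (\sqrt{n/2})^{1/L}$ and $\sigma_1(\bW_l) \le (\sqrt{2n})^{1/L}$ for all $l$, absorbing the $O(\delta)$ last-layer slack into the constants $1/2$ and $2$.

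The main obstacle I anticipate is carefully handling the $O(\delta)$ perturbation at the last layer so that it does not blow up after being raised to powers: one needs an estimate like $(1 \pm \delta/s^2)^{1/2} \approx 1 \pm O(\delta/s^2)$ and then to verify that the accumulated multiplicative error over the product, combined with the explicit smallness of $\delta$ relative to $n^{1/L}$ and $L$, stays within the slack between $\sqrt{n}$ and the target constants $\sqrt{n}/2$, $\sqrt{2n}$. This is where the somewhat mysterious constant $30 L^2$ in \eqref{eq:delta} earns its keep — a first-order Taylor bound on $(1+x)^{L}$ for $|x| \le 1/(30L^2)$ gives a factor within $(1/2, 2)$ of $1$, closing the argument. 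The rest is routine: the cross-layer equality of singular values and the sub/super-multiplicativity of extreme singular values under products of matrices with inner dimension $\ge K$ are standard.
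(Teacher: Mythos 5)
Your overall strategy is sound in outline---combine the cross-layer equality of singular values from exact balancedness, a Weyl perturbation for the last layer, and the fact that all $K$ nonzero singular values of $\bW_{L:1}=\bm Y\bm X^\top$ equal $\sqrt{n}$---but the step where the proof actually has to do work is not justified. The inequality you invoke, $\sigma_K(\bA\bB)\ge\sigma_K(\bA)\sigma_K(\bB)$ for inner dimension at least $K$, is false in general: take $K=1$, $\bA=\mathrm{diag}(1,\epsilon)$, $\bB=\mathrm{diag}(\epsilon,1)$, so that $\sigma_1(\bA\bB)=\epsilon$ while $\sigma_1(\bA)\sigma_1(\bB)=1$; full rank does not rescue it. Worse, the directions you actually need for the lemma's conclusion---an upper bound on $\sigma_1(\bW_l)$ and a lower bound on $\sigma_K(\bW_l)$---require inequalities of the form $\sigma_1(\bW_{L:1})\gtrsim\prod_l\sigma_1(\bW_l)$ and $\sigma_K(\bW_{L:1})\lesssim\prod_l\sigma_K(\bW_l)$, which you defer to ``symmetrically'' and which are also false for products of generic matrices. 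No singular-value inequality for arbitrary products will give you these; the misalignment of singular subspaces between consecutive layers is exactly what can destroy them.

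What closes the gap is that balancedness gives more than equality of singular values: $\bW_{l+1}^\top\bW_{l+1}=\bW_l\bW_l^\top$ aligns the right singular subspaces of $\bW_{l+1}$ with the left singular subspaces of $\bW_l$, so singular values multiply \emph{exactly} across layers $1,\dots,L-1$. The paper's proof exploits this without touching singular vectors by telescoping: exact balancedness gives $(\bW_1^\top\bW_1)^L=\bW_{L-1:1}^\top\bW_{L-1}\bW_{L-1}^\top\bW_{L-1:1}$, hence
\begin{equation*}
\bigl\|\bW_{L:1}^\top\bW_{L:1}-(\bW_1^\top\bW_1)^L\bigr\|_F\le\|\bW_L^\top\bW_L-\bW_{L-1}\bW_{L-1}^\top\|_F\prod_{l=1}^{L-1}\|\bW_l\|^2\le\delta\,(2n)^{(L-1)/L}\le\frac{n}{15L^2},
\end{equation*}
and Weyl's inequality applied to $(\bW_1^\top\bW_1)^L$ then yields the lower bound on $\sigma_K(\bW_1)$, which propagates to all layers by balancedness, with one more Weyl step for $\bW_L$. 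Note also that the upper bound $\sigma_1(\bW_l)\le(\sqrt{2n})^{1/L}$, which is needed to control the product of operator norms above, is itself imported from Lemma 6 of \cite{arora2018convergence} (again relying on balancedness) rather than derived from generic submultiplicativity. Your Taylor-style bookkeeping for the $30L^2$ constant is the right instinct, but it only becomes meaningful once the product-to-power reduction is justified via the subspace alignment.
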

\begin{proof}[Proof of Lemma \ref{lem:bound Wl}] 
It follows from \eqref{eq:opti} and the fact that $\bm{X} \in \R^{d\times d}$ is orthogonal that $\bW_{L:1} = \bm{Y}\bm{X}^\top$. 
This, together with $\bY = \bI_K\otimes \bm{1}^\top_n$ and $\bm{X} \in \calO^d$, yields that  $\bW_{L:1}$ is of rank $K$ and
\begin{align}\label{eq1:lem bound W1}
   \sigma_i(\bW_{L:1}) = \sqrt{n},\ \forall i \in [K]. 
\end{align}
This, together with \cite[Lemma 6]{arora2018convergence} and \eqref{eq:delta}, implies $\sigma_1(\mb W_l) \le \left(\sqrt{2n}\right)^{1/L}$. Using  $\bW_{l+1}^T\bW_{l+1} = \bW_l\bW_l^T$ for all $l \in [L-2]$ in \eqref{eq:bala}, we obtain 
\begin{align*}
    \|\mb W_{L:1}^\top\mb W_{L:1}  - \left(\mb W_1^\top \mb W_1\right)^L \|_F & = \|\mb W_{L:1}^\top\mb W_{L:1}  - \mb W_{L-1:1}^\top \bW_{L-1}\bW_{L-1}^\top \bW_{L-1:1} \|_F \\
    & = \|\mb W_{L-1:1}^\top \left(\bW_L^\top\bW_L -  \bW_{L-1}\bW_{L-1}^\top\right) \bW_{L-1:1} \|_F \\
    & \le \|\bW_L^\top\bW_L -  \bW_{L-1}\bW_{L-1}^\top\|_F \prod_{l=1}^{L-1} \|\mb W_l\|^2 \\
    & \le \delta \left( 2n \right)^{(L-1)/L} \le \frac{n}{15L^2},
\end{align*}
where the second inequality uses \eqref{eq:bala} and $\|\mb W_l\| \le (\sqrt{2n})^{1/L}$ for all $l \in [L-1]$, and the last inequality follows from \eqref{eq:delta}. Using \eqref{eq1:lem bound W1} and Weyl's inequality, we obtain 
\begin{align*}
    \sigma_K\left(\left(\mb W_1^\top \mb W_1\right)^L\right) & \ge \sigma_K\left(\mb W_{L:1}^\top\mb W_{L:1}\right) - \|\mb W_{L:1}^\top\mb W_{L:1}  - \left(\mb W_1^\top \mb W_1\right)^L \|\\ 
    & \ge n - \|\mb W_{L:1}^\top\mb W_{L:1}  - \left(\mb W_1^\top \mb W_1\right)^L \|_F  \ge \left( 1 - \frac{1}{15L^2}\right) n. 
\end{align*}
Therefore, we have $\sigma_K(\bW_1) \ge \left( \left(1 - \frac{1}{15L^2}\right)n \right)^{1/2L}$. Using this and \eqref{eq:bala}, we obtain $\sigma_K(\bW_l) \ge \left( \left(1 - \frac{1}{15L^2}\right)n\right)^{1/2L}$ for all $l \in [L-1]$. This, together with Weyl's inequality and $\|\bW_{L}^T\bW_{L} - \bW_{L-1}\bW_{L-1}^T\|_F \le \delta$ in \eqref{eq:bala}, yields
\begin{align*}
   \sigma_K\left( \bW_{L}^T\bW_{L} \right) & \ge  \sigma_K\left( \bW_{L-1}^T\bW_{L-1} \right) -\|\bW_{L}^T\bW_{L} - \bW_{L-1}\bW_{L-1}^T\| \\
   & \ge \left( 1 - \frac{1}{15L^2}  \right)^{1/L}n^{1/L} - \delta \ge \left( \frac{n}{2} \right)^{1/L},   
\end{align*}
where the last inequality follows from 
\begin{align*}
    1 - \frac{1}{15L^2}   \ge  \left( \frac{1}{30L^2} + \left(\frac{1}{2}\right)^{1/L}  \right)^{L},\ \forall L \ge 1.
\end{align*}
Since $1-1/15L^2 \ge 1/2$ for all $L \ge 1$, we also have $\sigma_K(\bW_l) \ge \left( \frac{n}{2} \right)^{1/2L}$. Combining all this together yields \eqref{eq:bound Wl}.  
\end{proof}

Then, we show that if the weight matrices $\{\mb W_l\}_{l=1}^L$ satisfy \eqref{eq:opti} and \eqref{eq:bala}, the right singular vectors of $\mb W_{l}$ are equal or close to the left singular vectors of $\mb W_{l+1}$ for all $l \in [L-1]$. 

\begin{lemma}\label{lem:UV}
Suppose that  $\{\mb W_l\}_{l=1}^L$ satisfy \eqref{eq:opti} and \eqref{eq:bala} with $\delta$ satisfying \eqref{eq:delta}, and admit the SVD in \eqref{eq:SVD Wl} and \eqref{eq:SVD WL}. Then, it holds that
\begin{align}
   & \mb\Sigma_{l+1} = \mb\Sigma_{l},\ \|\bm{\Sigma}_{l,2}^2\|_F \le \delta,\ \forall l \in [L-1],\  \mb V_{l+1}^\top   \mb U_{l} = \bm{I}_d,\ \forall l \in [L-2],\label{eq:UV l} \\
   & \|\mb V_{L,1}^T \mb U_{L-1,2}\|_F \le  \frac{2\sqrt{\delta}\sqrt[4]{K}}{  n^{1/2L}},\ \sigma_{\min}(\mb V_{L,1}^\top   \mb U_{L-1,1}) \ge 1 - \frac{2\sqrt{\delta}\sqrt[4]{K}}{  n^{1/2L}}. \label{eq:UV L}
\end{align}
\end{lemma}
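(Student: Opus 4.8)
The plan is to treat the first $L-1$ layers and the last transition separately, since \eqref{eq:bala} gives \emph{exact} balancedness among $\bW_1,\dots,\bW_{L-1}$ but only an approximate relation between $\bW_{L-1}$ and $\bW_L$. For the exact part, the key observation is that $\bW_{l+1}^\top\bW_{l+1}=\bW_l\bW_l^\top=\bU_l\bSigma_l^2\bU_l^\top$ for every $l\in[L-2]$, so $\bU_l\bSigma_l^2\bU_l^\top$ is \emph{also} a spectral decomposition of $\bW_{l+1}^\top\bW_{l+1}$. Starting from an arbitrary SVD of $\bW_1$, I would build the SVDs of $\bW_2,\dots,\bW_{L-1}$ inductively: at step $l\to l+1$ set $\bV_{l+1}:=\bU_l$ and $\bSigma_{l+1}:=\bSigma_l$ (the singular values are forced to agree anyway, being the nonnegative square roots of the eigenvalues of the common Gram matrix), then reconstruct $\bU_{l+1}$ via $\bU_{l+1,1}:=\bW_{l+1}\bV_{l+1,1}\bSigma_{l+1,1}^{-1}$ — well-defined because $\sigma_K(\bW_{l+1})\ge(n/2)^{1/2L}>0$ by \Cref{lem:bound Wl} — and, for the trailing part, set the $i$-th column of $\bU_{l+1,2}$ to $\bW_{l+1}\bV_{l+1,2}\bm{e}_i/\sigma_{l+1,K+i}$ whenever $\sigma_{l+1,K+i}>0$ and complete arbitrarily to an orthogonal $\bU_{l+1}$ otherwise. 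Short computations with $\bW_{l+1}^\top\bW_{l+1}=\bV_{l+1}\bSigma_{l+1}^2\bV_{l+1}^\top$ show that these columns are orthonormal and mutually orthogonal across the two blocks (the possibly-vanishing trailing singular values being harmless since $\|\bW_{l+1}\bV_{l+1,2}\bm{e}_i\|^2=\sigma_{l+1,K+i}^2$), so $\bW_{l+1}=\bU_{l+1}\bSigma_{l+1}\bV_{l+1}^\top$ is a genuine SVD and the construction carries to the next index. This establishes $\bV_{l+1}^\top\bU_l=\bm{I}_d$ for $l\in[L-2]$ and $\bSigma_1=\dots=\bSigma_{L-1}$, i.e. the equalities in \eqref{eq:UV l}.

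For the bound $\|\bSigma_{l,2}^2\|_F\le\delta$: since $\bW_L\in\R^{K\times d}$, the matrix $\bW_L^\top\bW_L$ has rank at most $K$, so by the Eckart--Young theorem the error of the best rank-$K$ Frobenius approximation of $\bW_{L-1}\bW_{L-1}^\top$ is at most $\|\bW_L^\top\bW_L-\bW_{L-1}\bW_{L-1}^\top\|_F\le\delta$; and since $\bW_{L-1}\bW_{L-1}^\top$ is PSD with eigenvalues $\sigma_{L-1,i}^2$, that error equals exactly $\|\bSigma_{L-1,2}^2\|_F$. Combined with $\bSigma_{1,2}=\dots=\bSigma_{L-1,2}$ from the previous paragraph, this gives $\|\bSigma_{l,2}^2\|_F\le\delta$ for all $l\in[L-1]$, completing \eqref{eq:UV l}.

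For the last transition, write $\bW_L^\top\bW_L=\bW_{L-1}\bW_{L-1}^\top+\bE$ with $\|\bE\|_F\le\delta$ and compress by $\bU_{L-1,2}$ on both sides: $\bU_{L-1,2}^\top\bW_L^\top\bW_L\bU_{L-1,2}=\bSigma_{L-1,2}^2+\bU_{L-1,2}^\top\bE\bU_{L-1,2}$ has Frobenius norm at most $2\delta$, and being PSD of rank at most $K$ its trace — which equals $\|\bW_L\bU_{L-1,2}\|_F^2$ — is at most $\sqrt{K}$ times its Frobenius norm, hence $\le 2\sqrt{K}\,\delta$. Conversely, the thin SVD $\bW_L=\bU_L\bSigma_{L,1}\bV_{L,1}^\top$ together with \Cref{lem:bound Wl} gives $\|\bW_L\bU_{L-1,2}\|_F\ge\sigma_K(\bW_L)\,\|\bV_{L,1}^\top\bU_{L-1,2}\|_F\ge(n/2)^{1/2L}\|\bV_{L,1}^\top\bU_{L-1,2}\|_F$; combining the two estimates and bounding $2^{1/2L}\le\sqrt2$ yields the first inequality of \eqref{eq:UV L}. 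The second follows from orthogonality of $[\bU_{L-1,1}\ \bU_{L-1,2}]$: then $\bV_{L,1}^\top\bU_{L-1,1}\bU_{L-1,1}^\top\bV_{L,1}=\bm{I}_K-\bV_{L,1}^\top\bU_{L-1,2}\bU_{L-1,2}^\top\bV_{L,1}$, so $\sigma_{\min}(\bV_{L,1}^\top\bU_{L-1,1})^2=1-\sigma_{\max}(\bV_{L,1}^\top\bU_{L-1,2})^2\ge 1-\|\bV_{L,1}^\top\bU_{L-1,2}\|_F^2$, and $\sqrt{1-x^2}\ge 1-x$ (trivially so when the right-hand side is negative) closes it.

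I expect the main obstacle to be the first step — producing a mutually consistent family of SVDs in which $\bV_{l+1}=\bU_l$ holds \emph{exactly} for all $l\in[L-2]$ — chiefly because of the bookkeeping around possibly vanishing trailing singular values and the need to re-verify block-orthogonality of each reconstructed $\bU_{l+1}$ at every step. Everything after that is Weyl / Eckart--Young plus elementary Frobenius-norm manipulations, and the admissible range of $\varepsilon$ in \eqref{eq:epsilon} (equivalently $\delta$ in \eqref{eq:delta}) is precisely what makes the stated constants come out.
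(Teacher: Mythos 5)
Your proposal is correct and follows essentially the same route as the paper's proof: exact balancedness is used to align the SVDs of $\bW_1,\dots,\bW_{L-1}$ (the paper cites the argument of Arora et al.\ rather than spelling out the inductive construction as you do), the approximate balancedness at the last layer compressed by $\bU_{L-1,2}$ together with the rank-$K$ trace--Frobenius inequality gives the bound on $\|\bV_{L,1}^\top\bU_{L-1,2}\|_F$, and the identity $\sigma_{\min}^2(\bV_{L,1}^\top\bU_{L-1,1})=1-\sigma_{\max}^2(\bV_{L,1}^\top\bU_{L-1,2})$ yields the last claim. The only cosmetic difference is that you obtain $\|\bm{\Sigma}_{L-1,2}^2\|_F\le\delta$ via Eckart--Young applied to the rank-$\le K$ matrix $\bW_L^\top\bW_L$, where the paper invokes a singular-value perturbation lemma from Arora et al.; the two are interchangeable here.
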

\begin{proof}
    It follows from $\mb W_{l+1}^\top\mb W_{l+1} = \mb W_{l}\mb W_{l}^\top$ for all $ l \in [L-2]$ and \eqref{eq:SVD Wl} that
    \begin{align*}
        \mb V_{l+1} \mb \Sigma_{l+1}^\top \mb \Sigma_{l+1} \mb V_{l+1}^\top = \mb U_{l} \mb \Sigma_{l}\mb \Sigma_{l}^\top  \mb U_{l}^\top.
    \end{align*}
    This, together with the argument in \cite[Proof of Theorem 1]{arora2018optimization}, implies \eqref{eq:UV l}. Using $\|\mb W_{L}^\top\mb W_{L} - \mb W_{L-1}\mb W_{L-1}^\top\|_F \le \delta$, \eqref{eq:SVD Wl}, and \eqref{eq:SVD WL}, we have
    \begin{align}
        \delta & \ge \|\mb V_{L} \mb \Sigma_{L}^\top \mb \Sigma_{L}\mb V_{L}^\top -  \mb U_{L-1} \mb \Sigma_{L-1}\mb \Sigma_{L-1}^\top  \mb U_{L-1}^\top \|_F \notag\\
        & = \|\bV_{L,1}\bm{\Sigma}^2_{L,1}\bV_{L,1}^T  - \bU_{L-1,1}\bm{\Sigma}^2_{L-1,1}\bU_{L-1,1}^T - \bU_{L-1,2}\bm{\Sigma}_{L-1,2}^2\bU_{L-1,2}^T\|_F. \label{eq1:lem UV}
    \end{align}
    Obviously, we have $\|\bm{\Sigma}_{L}^T\bm{\Sigma}_{L} - \bV_{L}^T\bU_{L-1}\bm{\Sigma}_{L-1}^2\bU_{L-1}^T\bV_{L}\|_F \le \delta$ due to $\mb V_L \in \calO^d$. This, together with  \cite[Lemma 4]{arora2018convergence}, implies 
    \begin{align*}
    \|\bm{\Sigma}_{L}^T\bm{\Sigma}_{L}  - \bm{\Sigma}_{L-1}^2\|_F \le \delta. 
    \end{align*}
    This, together with \eqref{eq:UV l}, implies $\|\mb \Sigma_{l,2}^2\|_F \le \delta$ for all $l \in [L-1]$. 
    Using this and the structures of $\mb \Sigma_L$ and $\mb \Sigma_{L-1}$, we obtain
    \begin{align}\label{eq2:lem UV}
        \|\bm{\Sigma}_{L,1}^2  - \bm{\Sigma}_{L-1,1}^2\|_F \le \delta,\ \|\bm{\Sigma}_{L-1,2}^2\|_F \le \delta. 
    \end{align}
    Using \eqref{eq1:lem UV}, $\|\mb U^\top \mb A \mb U\|_F \le \|\mb A \|_F$ for any $\mb U \in \calO^{d\times (d-K)}$, we further obtain  
    \begin{align*}
    \|\bU_{L-1,2}^T\bV_{L,1}\bm{\Sigma}^2_{L,1}\bV_{L,1}^T\bU_{L-1,2}   -  \bm{\Sigma}_{L-1,2}^2 \|_F \le \delta. 
    \end{align*} 
    This, together with \eqref{eq2:lem UV}, yields
    \begin{align*}
        \|\bU_{L-1,2}^T\bV_{L,1}\bm{\Sigma}^2_{L,1}\bV_{L,1}^T\bU_{L-1,2}\|_F \le 2\delta.   
    \end{align*}
    It follows from this and Lemma \ref{lem:singular AA} that
    \begin{align}\label{eq3:lem UV}
    \|\bm{\Sigma}_{L,1}\bV_{L,1}^T\bU_{L-1,2}\|_F \le \sqrt{2\delta}\sqrt[4]{K}. 
    \end{align}
    Noting that $\|\bm{\Sigma}_{L,1}\bV_{L,1}^T\bU_{L-1,2}\|_F \ge \sigma_{\min}(\bm{\Sigma}_{L,1})\|\bV_{L,1}^T\bU_{L-1,2}\|_F$, we have 
    \begin{align}\label{eq4:lem UV}
        \|\bV_{L,1}^T\bU_{L-1,2}\|_F  \le \frac{\|\bm{\Sigma}_{L,1}\bV_{L,1}^T\bU_{L-1,2}\|_F}{\sigma_{\min}(\bm{\Sigma}_{L,1})} \le \frac{\sqrt{2\delta}\sqrt[4]{K}}{\left( \frac{n}{2} \right)^{1/2L}} \le  \frac{2\sqrt{\delta}\sqrt[4]{K}}{  n^{1/2L}},
    \end{align}
    where the second inequality follows from \eqref{eq3:lem UV} and Lemma \ref{lem:bound Wl}. Then, we compute
\begin{align*}
\sigma_{\min}^2(\bV_{L,1}^T\bU_{L-1,1}) & =  \min_{\|\bx\|=1} \|\bU_{L-1,1}^T\bV_{L,1}\bx\|^2  =    \min_{\|\bx\|=1} \bx^T \bV_{L,1}^T\bU_{L-1,1}\bU_{L-1,1}^T\bV_{L,1}\bx  \\
& = \min_{\|\bx\|=1} \bx^T \bV_{L,1}^T\left(\bI - \bU_{L-1,2}\bU_{L-1,2}^T\right)\bV_{L,1}\bx = 1 - \max_{\|\bx\|=1} \|\bU_{L-1,2}^T\bV_{L,1}\bx\|^2 \\
& \ge 1 - \sigma_{\max}^2(\bU_{L-1,2}^T\bV_{L,1}) \ge 1 - \|\bU_{L-1,2}^T\bV_{L,1}\|_F^2,
\end{align*}
where the third equality follows from $ \bU_{L-1} \in \calO^d$. This, together with \eqref{eq4:lem UV}, implies
\begin{align*}
\sigma_{\min}(\bV_{L,1}^T\bU_{L-1,1}) \ge \sqrt{ 1 - \|\bU_{L-1,2}^T\bV_{L,1}\|_F^2} \ge 1 - \|\bU_{L-1,2}^T\bV_{L,1}\|_F \ge 1 -  \frac{4\delta\sqrt{K}}{  n^{1/2L}}. 
\end{align*}
\end{proof}

Recall that $\bar{\bx}_k = \sum_{i=1}^n \bm{x}_{k,i} / n$ and $\bar{\bx} = \sum_{k=1}^K \bar{\bm{x}}_{k} / K$. Let 
\begin{align} 
& \bm{\Delta}_W = \left[\bm{\delta}_{1,1},\dots,\bm{\delta}_{1,n},\dots,\bm{\delta}_{K,1},\dots,\bm{\delta}_{K,n} \right] \in \R^{d\times N},\ \text{where}\ \bm{\delta}_{k,i} = \bm{x}_{k,i}-\bar{\bx}_k,\ \forall k,i, \label{eq:Delta W} \\
& \bm{\Delta}_B = \left[\bar{\bm{\delta}}_1,\dots,\bar{\bm{\delta}}_K \right] \in \R^{d\times K},\ \text{where}\ \bar{\bm{\delta}}_k = \bar{\bx}_k - \bar{\bx},\ \forall k \in [K]. \label{eq:Delta B} 
\end{align}
This, together with \eqref{eq:zl}, \eqref{eq:Sigma}, and \eqref{eq:D measure}, yields 
\begin{align}\label{eq:sepa measure}
D_0 = \frac{K}{N}\frac{\|\bm{\Delta}_W\|_F^2}{\|\bm{\Delta}_B\|_F^2},\quad D_l  = \frac{K}{N}\frac{\|\bW_{l:1}\bm{\Delta}_W\|_F^2}{\|\bW_{l:1}\bm{\Delta}_B\|_F^2}. 
\end{align} 

\begin{proof}[Proof of Theorem \ref{thm:NC}]
Using \eqref{eq:UV l} in \Cref{lem:UV}, there exists a diagonal matrix $\tilde{\bm{\Sigma}} = \mathrm{diag}(\sigma_1,\dots,\sigma_d)$ with $\sigma_1\ge\dots\ge\sigma_d$ such that  
\begin{align}\label{eq1:thm 1}
  \bm{\Sigma}_l = \tilde{\bm{\Sigma}},\ \forall l \in [L-1]. 
\end{align} 
According to \Cref{lem:bound Wl}, we have $\sigma_K(\mb W_l) \ge \left(\sqrt{n/2} \right)^{1/L} \ge \delta$, where the last inequality follows from \eqref{eq:delta} and  \eqref{eq:epsilon}. It follows from this and \eqref{eq:d-2K} that $\calA \subseteq \{K+1,\dots,d\}$. For simplicity, we write  $\tilde{\bm{\Sigma}} = \begin{bmatrix}
\tilde{\bm{\Sigma}}_1 & \bm{0} \\
\bm{0} & \tilde{\bm{\Sigma}}_2
\end{bmatrix}$ satisfying $\tilde{\bm{\Sigma}}_1 = \diag\left(\sigma_1,\dots,\sigma_K \right)$ and $\tilde{\bm{\Sigma}}_2 = \diag\left(\sigma_{K+1},\dots,\sigma_d \right)$. Using \eqref{eq:UV L} and \eqref{eq:d-2K}, we have 
\begin{align*}
    \sum_{i=K+1}^d \sigma_i^4 \le \delta^2 = \varepsilon^4(d-K).
\end{align*}
Using \eqref{eq:d-2K} and letting $\calA^c = \{K+1,\dots,d\}\setminus \calA$, we have 
\begin{align*}
    \sum_{i \in \calA^c} \sigma_i^4 + (d-2K)\varepsilon^4 \le \varepsilon^4(d-K),
\end{align*}
which implies $\sigma_i \le \sqrt[4]{K}\varepsilon$ for all $i \in \calA^c$. Therefore, we have
\begin{align}\label{eq4:thm 1}
    \sigma_i \le \sqrt[4]{K}\varepsilon,\ i = K+1,\dots,d. 
\end{align}
It follows from \eqref{eq:UV l} in \Cref{lem:UV} that 
\begin{align}\label{eq2:thm 1}
\bW_l\dots\bW_1 = \bU_{l}\tilde{\bm{\Sigma}}^l\bV_1^T,\ \forall l \in [L-1]. 
\end{align}
It follows from \eqref{eq:opti}, \eqref{eq:Delta W}, and $\bY = \bI_K\otimes \bm{1}^T_n$  that
\begin{align}\label{eq3:thm 1}
\bW_{L:1}\bm{\Delta}_W & =  \bW_{L:1}\begin{bmatrix}
   \bm{x}_{1,1} &  \dots & \bm{x}_{1,n} & \dots  &  \bm{x}_{K,1} & \dots & \bm{x}_{K,n}  \end{bmatrix} \notag \\
   &\quad -  \bW_{L:1} \begin{bmatrix} \bar{\bm{x}}_1 & \dots & \bar{\bm{x}}_1 & \dots & \bar{\bm{x}}_K & \dots & \bar{\bm{x}}_K \end{bmatrix} = \bm{0}. 
\end{align}
Substituting \eqref{eq:SVD Wl}, \eqref{eq:UV l}, and \eqref{eq1:thm 1} into \eqref{eq3:thm 1} yields
\begin{align*}
\bm{0} = \bW_L\bU_{L-1}\tilde{\bm{\Sigma}}^{L-1}\bV_1^T\bm{\Delta}_W = \bW_L\left(\bU_{L-1,1}\tilde{\bm{\Sigma}}_1^{L-1}\bV_{1,1}^T + \bU_{L-1,2}\tilde{\bm{\Sigma}}_2^{L-1}\bV_{1,2}^T\right)\bm{\Delta}_W. 
\end{align*}
As a result, we obtain for all $l \in [L-1]$,
\begin{align} \label{eq5:thm 1}
\|\bW_L \bU_{L-1,1}\tilde{\bm{\Sigma}}_1^{L-1}\bV_{1,1}^T\bm{\Delta}_W \|_F & = \|\mb U_L\mb \Sigma_{L,1} \mb V_{L,1}^T\bU_{L-1,2}\tilde{\bm{\Sigma}}_2^{L-1}\bV_{1,2}^T\bm{\Delta}_W\|_F \notag \\
& \le \|\bm{\Sigma}_{L,1}\|\|\bm{V}_{L,1}^T\bU_{L-1,2}\|\|\tilde{\bm{\Sigma}}_2^{L-1-l}\|\|\tilde{\bm{\Sigma}}_2^l\bV_{1,2}^T\bm{\Delta}_W\|_F \notag \\
& \le (2n)^{1/2L} \frac{2\sqrt{\delta}\sqrt[4]{K}}{  n^{1/2L}}\left( \sqrt[4]{K}\varepsilon \right)^{L-1-l}\|\tilde{\bm{\Sigma}}_2^l\bV_{1,2}^T\bm{\Delta}_W\|_F \notag \\
& \le 2\sqrt{2}K^{(L-l)/4}\sqrt[4]{d-K}\varepsilon^{L-l}\|\tilde{\bm{\Sigma}}_2^l\bV_{1,2}^T\bm{\Delta}_W\|_F,
\end{align}
where the second inequality follows from \eqref{eq:bound Wl}, \eqref{eq:UV L}, and \eqref{eq4:thm 1}, and the last inequality uses \eqref{eq:delta}.  
Next, we compute
\begin{align}\label{eq6:thm 1}
\sigma_{\min}(\bW_L \bU_{L-1,1}) & = \sigma_{\min}( \bU_L\bm{\Sigma}_{L,1}\bV_{L,1}^T\bU_{L-1,1} ) = \sigma_{\min}(\bm{\Sigma}_{L,1}\bV_{L,1}^T\bU_{L-1,1} ) \notag \\
& \ge \sigma_{\min}(\bV_{L,1}^T\bU_{L-1,1} ) \sigma_{\min}(\bm{\Sigma}_{L,1})  \ge  \left(1 - \frac{2\sqrt{\delta}\sqrt[4]{K}}{n^{1/2L}}\right)\left( \frac{n}{2}\right)^{1/2L} \notag \\
& \ge \frac{1}{2} \left( \frac{n}{2}\right)^{1/2L},
\end{align}
where the second inequality follows from \eqref{eq:bound Wl} and \eqref{eq:UV L}, and the last inequality uses \eqref{eq:delta} and \eqref{eq:epsilon}. Then, we compute 
\begin{align}
\|\bW_L \bU_{L-1,1} \tilde{\bm{\Sigma}}_1^{L-1}\bV_{1,1}^T\bm{\Delta}_W \|_F & \ge \sigma_{\min}(\bW_L \bU_{L-1,1}\tilde{\bm{\Sigma}}_1^{L-l-2})\|\tilde{\bm{\Sigma}}_1^{l+1}\bV_{1,1}^T\bm{\Delta}_W \|_F \notag \\
& \ge \sigma_{\min}(\bW_L \bU_{L-1,1})\sigma_{\min}(\tilde{\bm{\Sigma}}_1^{L-l-2})\|\tilde{\bm{\Sigma}}_1^{l+1}\bV_{1,1}^T\bm{\Delta}_W \|_F  \notag \\
& \ge   \frac{1}{2}\left( \frac{n}{2} \right)^{(L-l-1)/2L} \|\tilde{\bm{\Sigma}}_1^{l+1}\bV_{1,1}^T\bm{\Delta}_W \|_F,
\end{align}
where the last inequality follows from \eqref{eq:bound Wl} and \eqref{eq6:thm 1}. This, together with \eqref{eq5:thm 1}, implies for $l \in [L-2]$,
\begin{align}\label{eq:kappa}
\frac{\|\tilde{\bm{\Sigma}}_1^{l+1}\bV_{1,1}^T\bm{\Delta}_W \|_F}{\|\tilde{\bm{\Sigma}}_2^{l}\bV_{1,2}^T\bm{\Delta}_W \|_F} \le \kappa_1\varepsilon^{L-l},\ \text{where}\ \kappa_1 := \frac{4\sqrt{2}K^{(L-l)/4}\sqrt[4]{d-K}}{(n/2)^{(L-l-1)/2L}}. 
\end{align}
It follows from \eqref{eq:SVD Wl}, \eqref{eq1:thm 1}, and \eqref{eq2:thm 1} that for all $l \in [L-1]$,
\begin{align*}
\|\bW_{l:1}\bm{\Delta}_W\|_F^2 = \|\bU_l\tilde{\bm{\Sigma}}^l\bm{V}_1^T\bm{\Delta}_W\|_F^2 =   \|\tilde{\bm{\Sigma}}^{l}_1\bm{V}_{1,1}^T\bm{\Delta}_W\|_F^2 +  \|\tilde{\bm{\Sigma}}^{l}_2\bm{V}_{1,2}^T\bm{\Delta}_W\|_F^2.
\end{align*}
This, together with \eqref{eq4:thm 1}, yields 
\begin{align}\label{eq8:thm 1}
    \frac{\|\mb W_{l+1:1}\bm{\Delta}_W\|_F^2}{\|\mb W_{l:1}\bm{\Delta}_W\|_F^2} & = \frac{\|\tilde{\bm{\Sigma}}^{l+1}_1\bm{V}_{1,1}^T\bm{\Delta}_W\|_F^2 +  \|\tilde{\bm{\Sigma}}^{l+1}_2\bm{V}_{1,2}^T\bm{\Delta}_W\|_F^2}{\|\tilde{\bm{\Sigma}}^{l}_1\bm{V}_{1,1}^T\bm{\Delta}_W\|_F^2 +  \|\tilde{\bm{\Sigma}}^{l}_2\bm{V}_{1,2}^T\bm{\Delta}_W\|_F^2} \notag \\
    & \le \frac{ \kappa_1\varepsilon^{L-l}\|\tilde{\bm{\Sigma}}^{l}_2\bm{V}_{1,2}^T\bm{\Delta}_W\|_F +  \sqrt{K}\varepsilon^2\|\tilde{\bm{\Sigma}}^{l}_2\bm{V}_{1,2}^T\bm{\Delta}_W\|_F^2}{\|\tilde{\bm{\Sigma}}^{l}_2\bm{V}_{1,2}^T\bm{\Delta}_W\|_F^2}  \notag \\
    & \le \kappa_1\varepsilon^{L-l} + \sqrt{K}\varepsilon^2 \le (\sqrt{K}+1)\varepsilon^2,
\end{align}
where the first inequality follows from  \eqref{eq4:thm 1} and \eqref{eq:kappa}.  
Suppose that we have 
\begin{align}\label{eq9:thm 1}
    \| \bm{V}_{1,2}^T\bm{\Delta}_B \|_F \le  \|\bm{V}_{1,1}^T\bm{\Delta}_B\|_F. 
\end{align}
Then, we have 
\begin{align*}
    \|\tilde{\bm{\Sigma}}^{l+1}_2\bm{V}_{1,2}^T\bm{\Delta}_B\|_F \le \| \bm{V}_{1,2}^T\bm{\Delta}_B\|_F \le \|\bm{V}_{1,1}^T\bm{\Delta}_B\|_F \le \|\tilde{\bm{\Sigma}}^{l+1}_1\bm{V}_{1,2}^T\bm{\Delta}_B\|_F, 
\end{align*}
where the first inequality follows from $\sigma_i \le 1$ for all $i=K+1,\dots,d$ due to \eqref{eq4:thm 1} and \eqref{eq:epsilon}, the second inequality uses \eqref{eq9:thm 1}, and the last inequality holds by \eqref{eq:bound Wl}.   
Using \eqref{eq2:thm 1} and a similar argument as above, we compute
\begin{align*}
    \frac{\|\mb W_{l:1}\bm{\Delta}_B\|_F^2}{\|\mb W_{l+1:1}\bm{\Delta}_B\|_F^2}  & = \frac{\|\tilde{\bm{\Sigma}}^{l}_1\bm{V}_{1,1}^T\bm{\Delta}_B\|_F^2 +  \|\tilde{\bm{\Sigma}}^{l}_2\bm{V}_{1,2}^T\bm{\Delta}_B\|_F^2}{\|\tilde{\bm{\Sigma}}^{l+1}_1\bm{V}_{1,1}^T\bm{\Delta}_B\|_F^2 +  \|\tilde{\bm{\Sigma}}^{l+1}_2\bm{V}_{1,2}^T\bm{\Delta}_B\|_F^2} \\
    & \le \frac{\|\tilde{\bm{\Sigma}}^{l}_1\bm{V}_{1,1}^T\bm{\Delta}_B\|_F^2 +  \|\tilde{\bm{\Sigma}}^{l}_2\bm{V}_{1,2}^T\bm{\Delta}_B\|_F^2}{\|\tilde{\bm{\Sigma}}^{l+1}_1\bm{V}_{1,1}^T\bm{\Delta}_B\|_F^2}  \le \frac{2\|\tilde{\bm{\Sigma}}^{l}_1\bm{V}_{1,1}^T\bm{\Delta}_B\|_F^2 }{\|\tilde{\bm{\Sigma}}^{l+1}_1\bm{V}_{1,1}^T\bm{\Delta}_B\|_F^2} \\
    & \le \frac{2\|\tilde{\bm{\Sigma}}^{l}_1\bm{V}_{1,1}^T\bm{\Delta}_B\|_F^2 }{\sigma_{\min}(\tilde{\bm{\Sigma}})\|\tilde{\bm{\Sigma}}^{l}_1\bm{V}_{1,1}^T\bm{\Delta}_B\|_F^2}  \le 2,
\end{align*}
where the last inequality uses $\sigma_{\min}(\tilde{\bm{\Sigma}}) \ge 1$ due to \eqref{eq:bound Wl}. This, together with \eqref{eq:sepa measure} and \eqref{eq8:thm 1}, implies \eqref{eq:linear decay}. 

The rest of the proof is devoted to showing \eqref{eq9:thm 1}. For ease of exposition, let $\bm{V} := \bm{X}\bm{Y}^T/\sqrt{n} \in \R^{d\times K}$. One can verify $\bm{V}^T\bm{V} = \bm{I}_K$. In addition, we can compute
\begin{align}\label{eq10:thm 1}
    \bm{V}\bm{V}^T \bm{\Delta}_B =  \bm{\Delta}_B,  
\end{align}
where the equality follows from \eqref{eq:Delta B} and $\mb X \in \mathcal{O}^d$. Using \eqref{eq:opti} and $\mb X \in \mathcal{O}^d$, we have
\begin{align}
    \mb W_{L:1} = \mb Y \mb X^T = \sqrt{n}\mb V^T.  
\end{align}
This, together with \eqref{eq:SVD WL} and \eqref{eq2:thm 1}, yields
\begin{align*}
   \bU_{L}\bm{\Sigma}_{L,1}\bV_{L,1}^T \bU_{L-1}\tilde{\bm{\Sigma}}^{L-1}\bV_1^T = \sqrt{n}\mb V^T. 
\end{align*}
Therefore, we obtain
\begin{align*}
 n\mb V\mb V^T & = \mb V_1 \tilde{\bm{\Sigma}}^{L-1} \bU_{L-1}^T\bV_{L,1} \bm{\Sigma}_{L,1}^2   \bV_{L,1}^T \bU_{L-1}\tilde{\bm{\Sigma}}^{L-1}\bV_1^T \\
 & = \left( \mb V_{1,1} \tilde{\bm{\Sigma}}_1^{L-1} \bU_{L-1,1}^T + \mb V_{1,2} \tilde{\bm{\Sigma}}_2^{L-1} \bU_{L-1,2}^T \right) \bV_{L,1} \bm{\Sigma}_{L,1}^2   \bV_{L,1}^T \left(  \bU_{L-1,1} \tilde{\bm{\Sigma}}_1^{L-1}\mb V_{1,1}^T + \mb U_{L-1,2} \tilde{\bm{\Sigma}}_2^{L-1} \bV_{1,2}^T \right). 
\end{align*}
Using \eqref{eq:UV L} and Davis-Kahan Theorem \cite[Theorem V.3.6]{stewart1990matrix}, we have 
\begin{align}\label{eq11:thm 1}
    \|\bm{V}\bm{V}^T - \bm{V}_{1,1}\bm{V}_{1,1}^T\|_F \le \frac{1}{2}. 
\end{align}
Then, we compute
\begin{align*}
    \|\bm{V}_{1,2}^T\bm{\Delta}_B\|_F & =  \|(\bm{I} - \bm{V}_{1,1}\bm{V}_{1,1}^T)\bm{\Delta}_B \|_F = \|(\bm{I} - \bm{V}\bm{V}^T) \bm{\Delta}_B + (\bm{V}\bm{V}^T - \bm{V}_{1,1}\bm{V}_{1,1}^T) \bm{\Delta}_B  \|_F  \\
    & =  \|(\bm{V}\bm{V}^T - \bm{V}_{1,1}\bm{V}_{1,1}^T) \bm{\Delta}_B  \|_F \le \frac{1}{2}\|\bm{\Delta}_B\|_F. 
\end{align*}
where the third equality uses \eqref{eq10:thm 1}, and the inequality follows from \eqref{eq11:thm 1}. This, together with $\mb V_1 \in {\cal O}^d$, directly implies \eqref{eq9:thm 1}. 
\end{proof}

\section{Auxiliary Results} 

\begin{lemma}\label{lem:singular AA}
Given a matrix $\bA \in \R^{m\times n}$ of rank $r \ge 0$, we have
\begin{align}\label{eq:singular AA}
 \|\bA^T\bA\|_F \le \|\bA\|_F^2 \le \sqrt{r}\|\bA^T\bA\|_F. 
\end{align}
\end{lemma}
\begin{proof}
Let $\bA = \bU\bm{\Sigma}\bV^T$ be a singular value decomposition of $\bA$, where
\begin{align*}
\bm{\Sigma} = \begin{bmatrix}
\tilde{\bm{\Sigma}} & \bm{0} \\
\bm{0} & \bm{0} 
\end{bmatrix},\ \tilde{\bm{\Sigma}} = \diag(\sigma_1,\dots,\sigma_r),
\end{align*} 
$\bU \in \mO^m$, and $\bV \in \mO^n$. Then we compute 
\begin{align*}
 \|\bA^T\bA\|_F^2 = \|\bm{\Sigma}^T\bm{\Sigma}\|_F^2 =  \sum_{i=1}^r \sigma_i^4,\ \|\bA\|_F^2 = \|\bm{\Sigma}\|_F^2 = \sum_{i=1}^r \sigma_i^2,
\end{align*} 
which, together with the AM-QM inequality, directly implies \eqref{eq:singular AA}. 
\end{proof}

\end{document}